\documentclass{article}





\usepackage[nonatbib,preprint]{neurips_2021}

\usepackage[utf8]{inputenc} 
\usepackage[T1]{fontenc}    
\usepackage{hyperref}       
\usepackage{url}            
\usepackage{booktabs}       
\usepackage{amsfonts}       
\usepackage{nicefrac}       
\usepackage{microtype}      
\usepackage{xcolor}         

\usepackage{amsthm}
\usepackage{amsmath}
\usepackage{amssymb}
\usepackage{graphicx}
\usepackage{algorithmicx}
\usepackage{algorithm,algpseudocode}
\usepackage{paralist}
\usepackage{multirow}
\usepackage{tikz}
\usepackage{pgfplots}


\theoremstyle{plain}
\newtheorem{theorem}{Theorem}[section]
\newtheorem{corollary}{Corollary}[section]
\newtheorem{lemma}{Lemma}[section]
\theoremstyle{definition}
\newtheorem{definition}{Definition}[section]
\newtheorem{assumption}{Assumption}[section]
\theoremstyle{remark}
\newtheorem{remark}{Remark}[section]

%
%



\newcommand{\R}{\mathbb{R}}

\newcommand{\set}[1]{\left\{#1\right\}}
\newcommand{\sets}[1]{\{#1\}}
\newcommand{\norm}[1]{\left\Vert#1\right\Vert}
\newcommand{\norms}[1]{\Vert#1\Vert}

\newcommand{\Eproof}{\hfill $\square$}
\newcommand{\prox}{\mathrm{prox}}

\newcommand{\argmin}{\mathrm{arg}\!\displaystyle\min}

\newcommand{\dom}[1]{\mathrm{dom}(#1)}

\newcommand{\zero}[1]{{\boldsymbol{0}}}

\newcommand{\Sc}{\mathcal{S}}

\newcommand{\Dc}{\mathcal{D}}
\newcommand{\Lc}{\mathcal{L}}

\newcommand{\Tc}{\mathcal{T}}

\newcommand{\Fc}{\mathcal{F}}
\newcommand{\Gc}{\mathcal{G}}

\newcommand{\Cc}{\mathcal{C}}

\newcommand{\mbf}[1]{\mathbf{#1}}

\newcommand{\iprods}[1]{\langle #1\rangle}

\newcommand{\Exp}[1]{\mathbb{E}\left[#1\right]}

\newcommand{\BigO}[1]{\mathcal{O}\left(#1\right)}






\usepackage[normalem]{ulem}

\title{
FedDR -- Randomized Douglas-Rachford Splitting Algorithms for Nonconvex Federated Composite Optimization
}

%

\author{%
Quoc Tran-Dinh {~\normalfont and~} Nhan H. Pham\\ 
  Department of Statistics and Operations Research, The University of North Carolina at Chapel Hill\\
  318 Hanes Hall, UNC-Chapel Hill, NC 27599-3260 \\
  \texttt{quoctd@email.unc.edu, nhanph@live.unc.edu}
  \And
   Dzung T. Phan {~\normalfont and~}  Lam M. Nguyen\\
   IBM Research, Thomas J. Watson Research Center, Yorktown Heights, NY, USA.\\
   \texttt{phandu@us.ibm.com, lamnguyen.mltd@ibm.com}
}

\begin{document}
\maketitle

\begin{abstract}
We develop two new algorithms, called, \textbf{FedDR} and \textbf{asyncFedDR}, for solving a fundamental nonconvex composite optimization problem in federated learning.
Our algorithms rely on a novel combination between a nonconvex Douglas-Rachford splitting method, randomized block-coordinate strategies, and asynchronous implementation.
They can also handle convex regularizers.
Unlike recent methods in the literature, e.g., FedSplit and FedPD, our algorithms update only a subset of users at each communication round, and possibly in an asynchronous manner, making them more practical.
These new algorithms can handle statistical and system heterogeneity, which are the two main challenges in federated learning, while achieving the best known communication complexity.
In fact, our new algorithms match the communication complexity lower bound up to a constant factor under standard assumptions.
Our numerical experiments illustrate the advantages of our methods over existing algorithms on synthetic and real datasets. 
\end{abstract}

\section{Introduction}\label{sec:intro}
Training machine learning models in a centralized fashion becomes more challenging and marginally inaccessible for a large number of users, especially when the size of datasets and models is growing substantially larger.
Consequently, training algorithms using decentralized and distributed approaches comes in as a natural replacement. 
Among several approaches, federated learning (FL) has received tremendous attention in the past few years since it was first introduced in \cite{konevcny2016federated,mcmahan_ramage_2017}.
In this setting, a central server coordinates between many local users (also called agents or devices) to perform their local updates, then the global model will get updated, e.g., by averaging or aggregating local models. 

\noindent\textbf{Challenges.}
FL provides a promising solution for many machine learning applications such as learning over smartphones or across organizations, and internet of things, where privacy protection is one of the most critical requirements.
However, this training mechanism faces a number of fundamental challenges, see, e.g., \cite{mcmahan2021advances}.
First, when the number of users gets substantially large, it creates \textit{communication bottleneck} during model exchange process between server and users. 
Second, the local data stored in each local user may be different in terms of sizes and distribution which poses a challenge: \textit{data or statistical heterogeneity}. 
Third, the variety of users with different local storage, computational power, and network connectivity participating into the system also creates a major challenge, known as \textit{system heterogeneity}. 
This challenge also causes unstable connection between server and users, where some users may be disconnected from the server or simply dropped out during training. 
In practice, we can expect only a subset of users to participate in each round of communication. 
Another challenge in FL is \textit{privacy concern}.
Accessing and sharing local raw data is not permitted in FL.
In addition, distributed methods exchange the objective gradient of local users, and private data can be exposed from the shared model such as the objective gradients \cite{zhu2019deep}. 
Therefore, FL methods normally send the global model to each user at the start of each communication round, each user will perform its local update and send back only the necessary update for aggregation.

\noindent\textbf{Our goal and approach.}
Our goal in this paper is to further and simultaneously address these fundamental challenges by proposing two new algorithms to train the underlying common optimization model in FL.
Our approach relies on a novel combination between randomized block-coordinate strategy, nonconvex Douglas-Rachford (DR) splitting, and asynchronous implementation.
While each individual technique or partial combinations is not new, our combination of three as in this paper appears to be the first in the literature.
To the best of our knowledge, this is the first work developing randomized block-coordinate DR splitting methods for nonconvex composite FL optimization models, and they are fundamentally different from some works in the convex setting, e.g.,  \cite{combettes2018asynchronous, combettes2015stochastic}.

\noindent\textbf{Contribution.}
Our contribution can be summarized as follows.
\begin{compactitem}
\item[(a)] We develop a new FL algorithm, called \textbf{FedDR} (\textbf{Fed}erated \textbf{D}ouglas-\textbf{R}achford), by combining  the well-known DR splitting technique and randomized block-coordinate strategy for the common nonconvex  composite optimization problem in FL. 
Our algorithm can handle nonsmooth convex regularizers and allows inexact evaluation of the underlying proximal operators as in FedProx or FedPD.
It also achieves the best known $\BigO{\varepsilon^{-2}}$ communication complexity for finding a stationary point under standard assumptions (Assumptions~\ref{ass:A1}-\ref{ass:A2}), where $\varepsilon$ is a given accuracy.
More importantly, unlike FedSplit \cite{pathak2020fedsplit} and FedPD \cite{zhang2020fedpd}, which require full user participation to achieve convergence, our analysis does allow partial participation by selecting a subset of users to perform update at each communication round.

\item[(b)] Next, we propose an asynchronous algorithm, \textbf{asyncFedDR}, where each user can asynchronously perform local update and periodically send the update to the server for proximal aggregation. 
We show that \textbf{asyncFedDR} achieves the same communication complexity  $\BigO{\varepsilon^{-2}}$ as \textbf{FedDR} (up to a constant factor) under the same standard assumptions.
This algorithm is expected to simultaneously address all challenges discussed above. 
\end{compactitem}
Let us emphasize some key points of our contribution.
First, the best known $\BigO{\varepsilon^{-2}}$ communication complexity of our methods matches the lower bound complexity up to a constant factor as shown in \cite{zhang2020fedpd}, even with inexact evaluation of the objective proximal operators.
Second, our methods rely on a DR splitting technique for nonconvex optimization and can handle possibly nonsmooth convex regularizers, which allows us to deal with a larger class of applications and with constraints \cite{yuan2020federated}.
Furthermore, it can also handle both statistical and system heterogeneity as discussed in FedSplit \cite{pathak2020fedsplit} and FedPD \cite{zhang2020fedpd}.
However, FedSplit only considers the convex case, and both FedSplit and FedPD require all users to update at each communication round, making them less practical and applicable in FL. 
Our methods only require a subset of users or even one user to participate in each communication round as in FedAvg or FedProx.
In addition, our aggregation step on the server is  different from most existing works due to a proximal step on the regularizer. It is also different from \cite{yuan2020federated}.
Third, as FedProx \cite{Li_MLSYS2020}, we allow inexact evaluation of users' proximal operators with any local solver (e.g., local SGD or variance reduced methods) and with adaptive accuracies. 
Finally, requiring synchronous aggregation at the end of each communication round may lead to slow-down in training due to the heterogeneity in computing power and communication capability of local users. 
It is natural to have asynchronous update from local users as in, e.g., \cite{peng2016arock,Recht2011,stich2018local}. 
Our asynchronous variant, \textbf{asyncFedDR}, can fairly address this challenge. 
Moreover, it uses a general probabilistic model recently introduced in \cite{cannelli2019asynchronous}, which allows us to capture the variety of asynchronous environments and architectures compared to existing methods, e.g., \cite{stich2018local,xie2019asynchronous}. 

\noindent\textbf{Related work and comparison.}
Federated Averaging (FedAvg) is perhaps the earliest method used in FL.
In FedAvg, users perform stochastic gradient descent (SGD) updates for a number of epochs then send updated models to server for aggregation. 
FedAvg's practical performance has been shown in many early works, e.g., \cite{konevcny2016federated, mcmahan2017communication, zhang2016parallel} and tends to become the most popular method for solving FL applications.
\cite{lin2018don} show that local SGD where users perform a number of local updates before global communication takes place as in FedAvg may offer benefit over minibatch SGD. 
Similar comparison between minibatch SGD and local SGD has been done in \cite{woodworth2020minibatch,woodworth2020local}. 
Analyzing convergence of FedAvg was very challenging at its early time due to the complexity in its update as well as data heterogeneity. 
One of the early attempt to show the convergence of FedAvg is in \cite{stich2018local} for convex problems under the iid data setting and a set of assumptions. 
\cite{yu2019parallel} also considers local SGD in the nonconvex setting. 
Without using an additional bounded gradient assumption as in \cite{stich2018local,yu2019parallel}, \cite{wang2018cooperative} improves the complexity for the general nonconvex setting while \cite{haddadpour2019local} uses a Polyak-{\L}ojasiewicz (PL) condition to improve FedAvg's convergence results. 
In heterogeneous data settings, \cite{khaled2019first} analyzes local GD, where users performs gradient descent (GD) updates instead of SGD. 
The analysis of FedAvg for non-iid data is given in \cite{li2019convergence}. 
The analysis of local GD/SGD for nonconvex problems has been studied in \cite{haddadpour2019convergence}. 
However, FedAvg might not converge with non-iid data as shown in \cite{pathak2020fedsplit, zhang2020fedpd, zhao2018federated}. 

FedProx \cite{Li_MLSYS2020} is an extension of FedAvg, which deals with heterogeneity in federated networks by introducing a proximal term to the objective in local updates to improve stability. 
FedProx has been shown to achieve better performance than FedAvg in heterogeneous setting. 
Another method to deal with data heterogeneity is SCAFFOLD \cite{karimireddy2020scaffold} which uses a control variate to correct the ``client-drift" in local update of FedAvg. 
MIME \cite{karimireddy2020mime} is another framework that uses control variate to improve FedAvg for heterogeneous settings. 
However, SCAFFOLD and MIME require to communicate extra information apart from local models. 
Compared to aforementioned works, our methods deal with nonconvex problems under standard assumptions and with composite settings.

FedSplit \cite{pathak2020fedsplit} instead employs a Peaceman-Rachford splitting scheme to solve a constrained reformulation of the original problem. 
In fact, FedSplit can be viewed as a variant of Tseng's splitting scheme \cite{Bauschke2011} applied to FL.
\cite{pathak2020fedsplit} show that FedSplit can find a solution of the FL problem under only convexity without imposing any additional assumptions on system or data homogeneity. 
\cite{zhang2020fedpd} proposes FedPD, which is essentially a variant of the standard augmented Lagrangian method in nonlinear optimization.
Other algorithms for FL can be found, e.g., in \cite{charles2021convergence, gorbunov2021local, haddadpour2021federated, hanzely2020lower, li2021fedbn, yu2020fed+}.

Our approach in this paper relies on nonconvex DR splitting method, which can handle the heterogeneity as discussed in \cite{pathak2020fedsplit}.
While the DR method is classical, its nonconvex variants have been recently studied e.g., in \cite{dao2019lyapunov,li2016douglas,themelis2020douglas}.
However, the combination of DR and randomized block-coordinate strategy remains limited \cite{combettes2018asynchronous,combettes2015stochastic} even in the convex settings.
Alternatively, asynchronous algorithms have been extensively studied in the literature, also for FL, see, e.g., \cite{Bertsekas1989b,peng2016arock,Recht2011}. 
For instance, a recent work \cite{xie2019asynchronous} analyzes an asynchronous variant of FedAvg under bounded delay assumption and constraint on the number of local updates. 
\cite{stich2018local} proposes an asynchronous local SGD to solve convex problems under iid data. 
However, to our best knowledge, there exists no asynchronous method using DR splitting techniques with convergence guarantee for FL.
In addition, most existing algorithms only focus on non-composite settings.
Hence, our work here appears to be the first.

\noindent\textbf{Content.}
The rest of this paper is organized as follows.
Section~\ref{sec:problem} states our FL optimization model and our assumptions.
Section~\ref{sec:FedDR} develops \textbf{FedDR} and analyzes its convergence.
Section~\ref{sec:asynFedDR} considers an asynchronous variant, \textbf{asyncFedDR}.
Section~\ref{sec:num_exp} is devoted for numerical experiments.
Due to space limit, all technical details and proofs can be found in Supplementary Document (Supp. Doc.).

\section{Nonconvex Optimization Models in Federated Learning}\label{sec:problem}
The underlying optimization model of many FL applications can be written into the following form:
\begin{equation}\label{eq:fed_prob}
\displaystyle\min_{x\in\R^p} \Big\{ F(x) := f(x) + g(x) = \frac{1}{n}\sum_{i=1}^n f_i(x) + g(x) \Big\},
\end{equation}
where $n$ is the number of users, and each $f_i$ is a local loss of the $i$-th user, which is assumed to be nonconvex and $L$-smooth (see Assumptions~\ref{ass:A1} and \ref{ass:A2} below), and $g$ is a proper, closed, and convex regularizer.
Apart from these assumptions, we will not make any additional assumption on \eqref{eq:fed_prob}.
We emphasize that the use of regularizers $g$ has been motivated in several works, including \cite{yuan2020federated}.

Let $\dom{F} := \set{x\in\R^p : F(x) < +\infty}$ be the domain of $F$ and $\partial{g}$ be the subdifferential of $g$ \cite{Bauschke2011}.
Since \eqref{eq:fed_prob} is nonconvex, we only expect to find a stationary point, which is characterized by the following optimality condition.
\begin{definition}
If $0 \in \nabla f(x^{*}) + \partial{g}(x^{*}) $, then $x^{*}$ is called a [first-order] stationary point of \eqref{eq:fed_prob}.
\end{definition}

The algorithms for solving \eqref{eq:fed_prob} developed in this paper will rely on the following assumptions.
\begin{assumption}[Boundedness from below]\label{ass:A1}
$\dom{F} \neq\emptyset$ and $F^{\star} := \inf_{x\in\R^p}F(x) > -\infty$.
\end{assumption}
\begin{assumption}[$L$-smoothness]\label{ass:A2}
All functions $f_i(\cdot)$ for $i \in [n] := \sets{1,\cdots,n}$ are $L$-smooth, i.e., $f_i$ is continuously differentiable and there exists $L \in (0, +\infty)$ such that
\begin{equation}\label{eq:L_smooth}
\norms{\nabla{f}_i(x) - \nabla{f}_i(y)} \leq L\norms{x - y}, \quad \forall x, y\in\dom{f_i}.
\end{equation}
\end{assumption}
Assumptions~\ref{ass:A1} and \ref{ass:A2} are very standard in nonconvex optimization.  
Assumption~\ref{ass:A1} guarantees the well-definedness of \eqref{eq:fed_prob} and is independent of algorithms.
Assuming the same Lipschitz constant $L$ for all $f_i$ is not restrictive since if $f_i$ is $L_i$-smooth, then by scaling variables of its constrained formulation (see \eqref{eq:constr_reform} in Supp. Doc.), we can get the same Lipschitz constant $L$ of all $f_i$.

\noindent\textbf{Proximal operators and evaluation.}
Our methods make use of the proximal operators of both $f_i$ and $g$.
Although $f_i$ is $L$-smooth and nonconvex, we still define its proximal operator as
\begin{equation}\label{eq:prox_oper}
\prox_{\eta f_i}(x) := \argmin_{y}\big\{f_i(y) + \tfrac{1}{2\eta}\norms{y - x}^2 \big\},
\end{equation}
where $\eta > 0$.
Even $f_i$ is nonconvex, under Assumption~\ref{ass:A2}, if we choose $0 < \eta < \frac{1}{L}$, then $\prox_{\eta f_i}$ is well-defined and single-valued.
Evaluating $\prox_{\eta f_i}$ requires to solve a strongly convex program.
If $\prox_{\eta f_i}$ can only be computed approximately up to an accuracy $\epsilon \geq 0$ to obtain $z$, denoted by $x_{+} :\approx \prox_{\eta f_i}(x)$, if $\norms{x_{+} - \prox_{\eta f_i}(x)} \leq \epsilon_i$.
Note that instead of absolute error, one can also use a relative error as $\norms{x_{+} - \prox_{\eta f_i}(x)} \leq \epsilon_i\norms{x_{+} - x}$ as in \cite{Rockafellar1976b}.
For the convex function $g$, its proximal operator $\prox_{\eta g}$ is defined in the same way as \eqref{eq:prox_oper}.
Evaluating $\prox_{\eta f_i}$ can be done by various existing methods, including local SGD and accelerated GD-type algorithms.
However, this is not our focus in this paper, and therefore we do not specify the subsolver for evaluating $\prox_{\eta f_i}$.

\noindent\textbf{Gradient mapping.}
As usual, let us define the following gradient mapping of $F$ in \eqref{eq:fed_prob}.
\begin{equation}\label{eq:grad_mapping}
\Gc_{\eta}(x) := \tfrac{1}{\eta}\big(x - \prox_{\eta g}(x - \eta \nabla{f}(x))\big), \quad \eta > 0.
\end{equation}
Then, the optimality condition $0 \in \nabla{f}(x^{*}) + \partial{g}(x^{*})$ of \eqref{eq:fed_prob} is equivalent to $\Gc_{\eta}(x^{*}) = 0$.
However, in practice, we often wish to find an $\varepsilon$-approximate stationary point to \eqref{eq:fed_prob} defined as follows.

\begin{definition}\label{de:eps_sol}
If $\tilde{x} \in \dom{F}$ satisfies $\mathbb{E}\big[ {\norm{\Gc_{\eta}(\tilde{x})}^2}\big] \le \varepsilon^2$, then $\tilde{x}$ is called an $\varepsilon$-stationary point of \eqref{eq:fed_prob}, where the expectation is taken overall the randomness generated by the underlying algorithm.
\end{definition}
Note that, for $\Gc_{\eta}(\tilde{x})$ to be well-defined, we require $\tilde{x}\in\dom{F}$.
In our algorithms below, this requirement is fulfilled if $\tilde{x}\in\dom{f}$, which is often satisfied in practice as $\dom{f} = \R^p$.

\section{FedDR Algorithm and Its Convergence Guarantee}\label{sec:FedDR}
Prior to our work, FedSplit \cite{pathak2020fedsplit} exploits similar update steps as ours by adopting the Peaceman-Rachford splitting method to solve the convex and non-composite instances of \eqref{eq:fed_prob}.
FedSplit can overcome some of the key challenges as discussed earlier.
Following this idea, we take the advantages of the DR splitting method to first derive a new variant to handle the nonconvex composite problem \eqref{eq:fed_prob}.
This new algorithm is synchronous and we call it \textbf{FedDR}.
The central idea is as follows: First, we reformulate \eqref{eq:fed_prob} into \eqref{eq:fed_comp_prob} by duplicating variables.
Next, we apply a DR splitting scheme to the resulting problem.
Finally, we combine such a scheme with a randomized block-coordinate strategy.

The complete algorithm is presented in Algorithm~\ref{alg:A1}, where its full derivation is in Supp. Doc. \ref{subsec:FedDR_derivation}.

\begin{algorithm}[hpt!]\caption{(FL with Randomized DR (\textbf{FedDR}))}\label{alg:A1}
\normalsize
\begin{algorithmic}[1]
   \State\label{step:i0}{\bfseries Initialization:} Take $x^0 \in\dom{F}$. Choose $\eta > 0$ and $\alpha > 0$, and accuracies $\epsilon_{i,0}\geq 0$ ($i \in [n]$).
   \Statex\hspace{1.6ex} Initialize the server with $\bar{x}^0 := x^0$ and  $\tilde{x}^0 := x^0$.
   \Statex\hspace{1.6ex} Initialize each user $i\in [n]$ with $y_i^0 := x^0$, \ $x_i^0 :\approx \prox_{\eta f_i}(y_i^0)$, and $\hat{x}^0_i := 2x_i^0 - y_i^0$.
   \State\hspace{0ex}\label{step:o1}{\bfseries For $k := 0,\cdots, K$ do}
   \vspace{0.25ex}   
   \State\hspace{2ex}\label{step:o2}  [\textit{Active users}] Generate a proper realization $\Sc_k\subseteq [n]$ of $\hat{\Sc}$ (see Assumption~\ref{ass:A3}).
      \vspace{0.25ex}   
   \State\hspace{2ex}\label{step:o2}  [\textit{Communication}] Each user $i\in\Sc_k$ receives $\bar{x}^k$ from the server.  
   \State\hspace{2ex}\label{step:o3}  [\textit{Local update}] \textbf{For each user $i\in\Sc_k$ do}: Choose $\epsilon_{i,k+1} \geq 0$ and update
   \vspace{-0.5ex}
   \begin{equation*}
   y^{k+1}_i  :=  y_i^k + \alpha(\bar{x}^{k} - x^{k}_i), \quad x_i^{k+1} :\approx \prox_{\eta f_i}(y_i^{k+1}), \quad\text{and} \quad \hat{x}^{k+1}_i  := 2x^{k+1}_i - y^{k+1}_i.
   \vspace{-1ex}
    \end{equation*}
   \State\hspace{2ex}\label{step:04}[\textit{Communication}] Each user $i \in \Sc_k$ sends $\Delta{\hat{x}}^k_i := \hat{x}^{k+1}_i - \hat{x}^k_i$ back to the server. 
   \State\hspace{2ex}\label{step:o5}[\textit{Sever aggregation}] The server aggregates $\tilde{x}^{k+1} \! := \tilde{x}^k + \frac{1}{n}\sum_{i\in\Sc_k}\Delta{\hat{x}}^{k}_i$.
   \State\hspace{2ex}\label{step:o6}[\textit{Sever update}] Then, the sever updates $\bar{x}^{k+1} := \prox_{\eta g}\big( \tilde{x}^{k+1}\big)$.
   \State\hspace{0ex}{\bfseries End For}
\end{algorithmic}
\end{algorithm}

Let us make the following remarks.
Firstly, \textbf{FedDR} mainly updates of three sequences $\sets{\bar{x}^k}$, $\sets{x^k_i}$ and $\sets{y^k_i}$. 
While $\bar{x}^k$ is an averaged model to approximately minimize the global objective function $F$, $x^k_i$ act as local models trying to optimize a regularized local loss function w.r.t. its local data distribution, and $y^k_i$ keeps track of the residuals from the local models to the global one.
Secondly, we allow $x_i^{k}$ to be an approximation of $\prox_{\eta f_i}(y_i^{k})$ up to an accuracy $\epsilon_{i,k} \geq 0$ as defined in \eqref{eq:prox_oper}, i.e., $\Vert x_i^{k} - \prox_{\eta f_i}(y^{k}_i)\Vert \leq \epsilon_{i,k}$ for all $i\in [n]$ if $k= 0$ and for all $i \in \Sc_{k-1}$ if $k > 0$.
If $\epsilon_{i,k} = 0$, then we get the exact evaluation $x_i^k := \prox_{\eta f_i}(y_i^{k})$.
Approximately evaluating $\prox_{\eta f_i}$ can be done, e.g., by local SGD as in FedAvg.
Thirdly, Algorithm~\ref{alg:A1} is different from existing randomized proximal gradient-based methods since we rely on a DR splitting scheme and can handle composite settings.
Here, three iterates $y^k_i$, $x_i^k$, and $\hat{x}^k_i$ at  Step~\ref{step:o3} are updated sequentially, making it challenging to analyze convergence.
Lastly, the subset of active users $\Sc_k$ is sampled from a random set-valued mapping $\hat{\Sc}$.
As specified in Assumption~\ref{ass:A3}, this sampling mechanism covers a wide range of sampling strategies.  
Clearly, if $\Sc_k = [n]$ and $g = 0$, then Algorithm~\ref{alg:A1} reduces to FedSplit, but for the nonconvex case.
Hence, our convergence guarantee below remains applicable, and the guarantee is sure.
Note that both our model \eqref{eq:fed_prob} and Algorithm \ref{alg:A1} are completely different from \cite{yuan2020federated}.

\subsection{Convergence of Algorithm~\ref{alg:A1}}
Let us consider a proper sampling scheme $\hat{\Sc}$ of $[n]$, which is a random set-valued mapping with values in $2^{[n]}$, the collection of all subsets of $[n]$.
Let $\Sc_k$ be an iid realization of $\hat{\Sc}$ and $\Fc_k := \sigma(\Sc_0, \cdots, \Sc_{k})$ be the $\sigma$-algebra generated by $\Sc_0, \cdots, \Sc_{k}$.
We first impose the following assumption about the distribution of our sampling scheme $\hat{\Sc}$.

\begin{assumption}\label{ass:A3}
There exist $\mbf{p}_1,\cdots, \mbf{p}_n > 0$ such that $\mathbb{P}\big( i \in \hat{\Sc}\big) = \mbf{p}_i > 0$ for all $i\in [n]$.
\end{assumption}
This assumption covers a large class of sampling schemes as discussed in \cite{richtarik2016parallel}, including non-overlapping uniform and doubly uniform.
This assumption guarantees that every user has a non-negligible probability to be updated.
Note that $\mbf{p}_i = \sum_{\Sc : i\in \Sc}\mathbb{P}(\Sc)$ due to Assumption~\ref{ass:A3}. 
For the sake of notation, we also denote $\hat{\mbf{p}} : = \min\{  \mbf{p}_i  :  i \in [n] \} > 0$.

The following theorem characterizes convergence of Algorithm~\ref{alg:A1} with inexact evaluation of $\prox_{\eta f_i}$. 
Due to space limit, we refer the reader to Lemma~\ref{lem:dr_key_est_inexact} in Sup. Doc. for more details about the choice of stepsizes and related constants. 
The proof of this theorem is defered to Sup. Doc.~\ref{subsec:inexact_alg1}.

\begin{theorem}\label{thm:feddr_convergence_inexact}
Suppose that Assumptions~\ref{ass:A1}, \ref{ass:A2}, and \ref{ass:A3} hold.
Let $\{(x^{k}_i,y^{k}_i, \hat{x}^{k}_i,\bar{x}^{k})\}$ be generated by Algorithm~\ref{alg:A1} using stepsizes $\alpha$ and $\eta$ defined in \eqref{eq:stepsizes_choice}. 
Then, the following holds
\begin{equation}\label{eq:dr_thm1_convergence_inexact}
\frac{1}{K+1}\sum_{k=0}^K \Exp{\norms{\Gc_{\eta}(\bar{x}^k)}^2} \leq  \frac{C_1[ F(x^0) - F^{\star}] }{K+1} +  \frac{1}{n(K+1)}\sum_{k=0}^K\sum_{i=1}^n\big(C_2\epsilon_{i,k}^2 + C_3\epsilon_{i,k+1}^2 \big), 
\end{equation}
where $\beta$, $\rho_1$, and $\rho_2$ are explicitly defined by \eqref{eq:beta_consts}, and
\begin{equation*}
\arraycolsep=0.2em
\begin{array}{lcl}
C_1 := \frac{2(1+\eta L)^2(1+\gamma_2) }{\eta^2\beta}, \quad C_2 := \rho_1C_1, \ \ \text{and} \ \  C_3 :=  \rho_2C_1 +  \frac{(1 + \eta L)^2(1+\gamma_2)}{\eta^2\gamma_2}.
\end{array}
\end{equation*}
Let $\tilde{x}^K$ be  selected uniformly at random from $\sets{\bar{x}^0, \cdots, \bar{x}^K}$ as the output of Algorithm~\ref{alg:A1}.
Let the accuracies $\epsilon_{i,k}$ for all $i\in [n]$ and $k\geq 0$ at Step~\ref{step:o3} be chosen such that $\frac{1}{n}\sum_{i=1}^n\sum_{k=0}^{K+1}\epsilon_{i,k}^2 \leq M$ for a given constant $M > 0$ and all $K\geq 0$.
Then, if we run Algorithm~\ref{alg:A1} for at most 
\begin{equation*}
K := \left\lfloor \frac{C_1[ F(x^0) - F^{\star}] + (C_2 + C_3)M}{\varepsilon^2} \right\rfloor \equiv \BigO{\varepsilon^{-2}} 
\end{equation*}
iterations, then $\tilde{x}^K$ is an $\varepsilon$-stationary point of \eqref{eq:fed_prob} in the sense of Definition~\ref{de:eps_sol}.
\end{theorem}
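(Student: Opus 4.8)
The plan is to run the standard Lyapunov (merit-function) argument for nonconvex Douglas--Rachford splitting, adapted to the randomized block-coordinate update and to inexact proximal evaluations, and then to convert the resulting averaged estimate into a single-point guarantee using the uniformly random output.

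First I would invoke the one-step descent estimate of Lemma~\ref{lem:dr_key_est_inexact}. Concretely, I expect a nonnegative Lyapunov function $\mathcal{V}_k$ consisting of $F(\bar{x}^k)$ plus quadratic penalties on the consensus residuals $\tfrac{1}{n}\sum_i\norms{x_i^k-\bar{x}^k}^2$ and on recent increments of the local sequences --- essentially a randomized block version of the Douglas--Rachford envelope used in \cite{themelis2020douglas,dao2019lyapunov} --- together with the constants $\beta,\rho_1,\rho_2>0$ from \eqref{eq:beta_consts} and the stepsizes $\alpha,\eta$ from \eqref{eq:stepsizes_choice}, such that, conditioning on the history up to step $k-1$,
\begin{equation*}
\cExp{\mathcal{V}_{k+1}}{\Fc_{k-1}} \;\le\; \mathcal{V}_k \;-\; \tfrac{1}{C_1}\norms{\Gc_{\eta}(\bar{x}^k)}^2 \;+\; \tfrac{1}{n}\sum_{i=1}^n\big(\rho_1 C_1\,\epsilon_{i,k}^2 + \rho_2 C_1\,\epsilon_{i,k+1}^2\big).
\end{equation*}
Here $C_1 = \tfrac{2(1+\eta L)^2(1+\gamma_2)}{\eta^2\beta}$ is the reciprocal of the gradient-mapping coefficient: the factor $(1+\eta L)^2/\eta^2$ arises because the genuine DR fixed-point residual, which the envelope controls, is tied to $\Gc_\eta(\bar{x}^k)$ through $L$-smoothness of $f$ and nonexpansiveness of $\prox_{\eta g}$, while $(1+\gamma_2)$ comes from a Young-type split that absorbs the inexactness cross terms. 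The probabilities $\mbf{p}_i$ from Assumption~\ref{ass:A3} enter when passing from the change over the sampled block $i\in\Sc_k$ to the full sum over $i\in[n]$, and the worst case is controlled uniformly via $\hat{\mbf{p}}=\min_i\mbf{p}_i$.

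Second, I take total expectations and telescope over $k=0,\dots,K$. The Lyapunov terms cancel, leaving $\mathcal{V}_0-\Exp{\mathcal{V}_{K+1}}$. Since Step~\ref{step:i0} initializes every sequence at $x^0$, the consensus residuals vanish at $k=0$ up to the initial inexactness, so $\mathcal{V}_0\le F(x^0)$ plus a term proportional to $\tfrac{1}{n}\sum_i\epsilon_{i,0}^2$; and $\mathcal{V}_k\ge F^\star$ for all $k$ by nonnegativity of the penalties and Assumption~\ref{ass:A1}, so $-\Exp{\mathcal{V}_{K+1}}\le -F^\star$. Rearranging, dividing by $K+1$, and multiplying by $C_1$ yields \eqref{eq:dr_thm1_convergence_inexact} with $C_2=\rho_1 C_1$ and $C_3 = \rho_2 C_1 + \tfrac{(1+\eta L)^2(1+\gamma_2)}{\eta^2\gamma_2}$, the extra summand in $C_3$ being the residual contribution of the $\epsilon_{i,K+1}$ term that cannot be telescoped away at the final iterate. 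Then, since $\tilde{x}^K$ is drawn uniformly from $\{\bar{x}^0,\dots,\bar{x}^K\}$ independently of the algorithm's randomness, $\Exp{\norms{\Gc_\eta(\tilde{x}^K)}^2}=\tfrac{1}{K+1}\sum_{k=0}^K\Exp{\norms{\Gc_\eta(\bar{x}^k)}^2}$ equals the left-hand side of \eqref{eq:dr_thm1_convergence_inexact}; under the summability hypothesis $\tfrac{1}{n}\sum_i\sum_{k=0}^{K+1}\epsilon_{i,k}^2\le M$ the accuracy double sum on the right is at most $(C_2+C_3)M$, so $\Exp{\norms{\Gc_\eta(\tilde{x}^K)}^2}\le\big(C_1[F(x^0)-F^\star]+(C_2+C_3)M\big)/(K+1)$. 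Forcing this to be at most $\varepsilon^2$ gives exactly the stated $K=\BigO{\varepsilon^{-2}}$, and $\tilde{x}^K\in\dom F$ because $\bar{x}^k=\prox_{\eta g}(\tilde{x}^k)\in\dom g$ while $\dom f=\R^p$, so $\Gc_\eta(\tilde{x}^K)$ is well-defined and $\tilde{x}^K$ is $\varepsilon$-stationary in the sense of Definition~\ref{de:eps_sol}.

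The main obstacle is the one-step estimate itself. The three updates $y_i^k\to x_i^k\to\hat{x}_i^k$ in Step~\ref{step:o3} are performed sequentially rather than simultaneously, so expanding $\mathcal{V}_{k+1}-\mathcal{V}_k$ requires careful use of the identities $\hat{x}_i^{k+1}=2x_i^{k+1}-y_i^{k+1}$ and $y_i^{k+1}=y_i^k+\alpha(\bar{x}^k-x_i^k)$, together with the near-optimality condition for the inexact prox, followed by a descent-lemma bound from $L$-smoothness in which the constraint $0<\eta<1/L$ fixes the sign of the quadratic term. Two features make this delicate: (i) only users in $\Sc_k$ move, so the conditional-expectation step must decompose each per-user quantity as $\mbf{p}_i$ times its updated value plus $(1-\mbf{p}_i)$ times its old value, after which the negative term is bounded uniformly below using $\hat{\mbf{p}}$; and (ii) the inexactness contaminates both the current step (through $x_i^k$, hence $\epsilon_{i,k}$) and the next (through $x_i^{k+1}$, hence $\epsilon_{i,k+1}$), so Young's inequality must be applied with the free parameter $\gamma_2$ tuned to absorb all cross terms, which is precisely what makes $(1+\gamma_2)$ and $1/\gamma_2$ appear in $C_1$ and $C_3$. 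Verifying that the stepsize window in \eqref{eq:stepsizes_choice} for $\alpha$ and $\eta$ keeps $\beta,\rho_1,\rho_2$ strictly positive is the remaining bookkeeping, but it is routine once the correct Lyapunov function has been identified.
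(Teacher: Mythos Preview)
Your overall strategy matches the paper's: invoke the conditional one-step descent of Lemma~\ref{lem:dr_key_est_inexact} for the Lyapunov function $V_\eta^k$ of \eqref{eq:lyapunov_func}, take full expectation, telescope, and use $V_\eta^0(\bar{x}^0)=F(x^0)$ together with $V_\eta^k\ge F^\star$ from \eqref{eq:V_lowerbound} to obtain \eqref{eq:dr_thm1_convergence_inexact}; the uniform random output and the summability hypothesis on $\epsilon_{i,k}$ then give the complexity bound exactly as you describe.

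One detail is misattributed. The extra summand $\tfrac{(1+\eta L)^2(1+\gamma_2)}{\eta^2\gamma_2}$ appearing in $C_3$ is \emph{not} a telescoping residual at the final iterate. In the paper the argument is two-step: Lemma~\ref{lem:dr_key_est_inexact} controls the consensus residual $\tfrac{1}{n}\sum_i\norms{x_i^k-\bar{x}^k}^2$ (not $\norms{\Gc_\eta(\bar{x}^k)}^2$ directly), with error coefficients $\rho_1,\rho_2$ rather than $\rho_1C_1,\rho_2C_1$; then Lemma~\ref{le:grad_norm_bound} bounds $\norms{\Gc_\eta(\bar{x}^k)}^2$ by that consensus residual \emph{plus} its own inexactness term $\tfrac{(1+\eta L)^2(1+\gamma_2)}{n\eta^2\gamma_2}\sum_i\norms{e_i^k}^2$. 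Multiplying the first lemma by $C_1=\tfrac{2(1+\eta L)^2(1+\gamma_2)}{\eta^2\beta}$ and adding the second is what produces $C_2=\rho_1C_1$ and the additional $\tfrac{1}{\gamma_2}$-term. So the single combined descent inequality you wrote down is correct in spirit but collapses two distinct lemmas; once you separate them the accounting of constants is immediate, and no boundary term survives the telescoping.
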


\begin{remark}\label{re:accuracies}[\textbf{Choice of accuracies $\epsilon_i^k$}]
To guarantee $\frac{1}{n}\sum_{i=1}^n\sum_{k=0}^{K+1}\epsilon_{i,k}^2 \leq M$ in Theorem~\ref{thm:feddr_convergence_inexact} for a given constant $M > 0$ and for all $K\geq 0$, one can choose, e.g., $\epsilon_{i,k}^2 := \frac{M}{2(k+1)^2}$ for all $i\in [n]$ and $k\geq 0$.
In this case, we can easily show that $\frac{1}{n}\sum_{i=1}^n\sum_{k=0}^{K+1}\epsilon_{i,k}^2 = \frac{M}{2}\sum_{k=0}^{K+1}\frac{1}{(k+1)^2} \leq M$.
Note that, instead of using absolute accuracies, one can also use relative accuracies as $\norms{\epsilon_{i,k}}^2 \le \theta \norms{x^{k+1}_i - x^k_i}^2$ for a given constant $\theta > 0$, which is more practical, while still achieving a similar convergence guarantee.
Such an idea has been widely used in the literature, including \cite{liu2021acceleration} (see Supp. Doc.~\ref{subsec:warmstart}).
\end{remark}

\begin{remark}[\textbf{Comparison}]
Since \eqref{eq:fed_prob} is nonconvex, our $\BigO{\varepsilon^{-2}}$ communication complexity is the state-of-the-art, matching the lower bound complexity (up to a constant factor) \cite{zhang2020fedpd}.
However, different from the convergence analysis of FedSplit and FedPD \cite{zhang2020fedpd}, our flexible sampling scheme allows us to update a subset of users at each round and still obtains convergence.
This can potentially further resolve the communication bottleneck \cite{li2020federated}.
We note that FedSplit is a variant of the Peaceman-Rachford splitting method, i.e. $\alpha = 2$ and only considers convex non-composite case while we use a relaxation parameter $\alpha < 2$ and for a more general nonconvex composite problem \eqref{eq:fed_prob}.
\end{remark}

The following corollary specifies the convergence of Algorithm~\ref{alg:A1} with a specific choice of stepsizes and exact evaluation of $\prox_{\eta f_i}$, whose proof is in Sup. Doc.~\ref{subsec:proof_cor1}.

\begin{corollary}\label{cor:special_case}
Suppose that Assumptions~\ref{ass:A1}, \ref{ass:A2}, and \ref{ass:A3} hold.
Let $\{(x^{k}_i,y^{k}_i, \hat{x}^{k}_i,\bar{x}^{k})\}$ be generated by Algorithm~\ref{alg:A1} using stepsizes $\alpha = 1$, $\eta = \frac{1}{3L}$, and $p_i = \frac{1}{n}$. Under exact evaluation of $\prox_{\eta f_i}$, i.e. $\epsilon_{i,k} = 0$ for all $i \in [n]$ and $k \ge 0$, the following bound holds
\begin{equation}\label{eq:dr_cor1_convergence_exact}
\frac{1}{K+1}\sum_{k=0}^K \Exp{\norms{\Gc_{\eta}(\bar{x}^k)}^2} \leq  \frac{160 Ln}{3(K+1)}[ F(x^0) - F^{\star}].
\end{equation}
Let $\tilde{x}^K$ be  selected uniformly at random from $\sets{\bar{x}^0, \cdots, \bar{x}^K}$ as the output of Algorithm~\ref{alg:A1}. 
Then after at most
\begin{equation*}
K := \left\lfloor \frac{160Ln[ F(x^0) - F^{\star}]}{3\varepsilon^2} \right\rfloor \equiv \BigO{\varepsilon^{-2}},
\end{equation*}
communication rounds, $\tilde{x}^K$ becomes an $\varepsilon$-stationary point of $\eqref{eq:fed_prob}$ (defined by Definition~\ref{de:eps_sol}).
\end{corollary}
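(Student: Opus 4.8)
The plan is to obtain Corollary~\ref{cor:special_case} as a direct specialization of Theorem~\ref{thm:feddr_convergence_inexact}. First I would check that the choices $\alpha = 1$, $\eta = \frac{1}{3L}$, and $\mathbf{p}_i = \frac{1}{n}$ (so that $\hat{\mathbf{p}} = \frac{1}{n}$) are admissible, i.e. they satisfy the stepsize conditions \eqref{eq:stepsizes_choice} and whatever constraints Lemma~\ref{lem:dr_key_est_inexact} imposes on the auxiliary parameter $\gamma_2$ (and any other free constant). In doing so I would fix a convenient concrete value of $\gamma_2$ permitted by the analysis, so that the intermediate quantities $\beta$, $\rho_1$, $\rho_2$ from \eqref{eq:beta_consts} all become explicit numerical multiples of powers of $L$ and $n$.

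Next, since the corollary assumes exact proximal evaluations, $\epsilon_{i,k} = 0$ for every $i \in [n]$ and $k \ge 0$, the last two (error) terms in \eqref{eq:dr_thm1_convergence_inexact} vanish and the bound collapses to $\frac{1}{K+1}\sum_{k=0}^K \Exp{\norms{\Gc_{\eta}(\bar{x}^k)}^2} \le \frac{C_1[F(x^0) - F^{\star}]}{K+1}$. It then remains to evaluate $C_1 = \frac{2(1+\eta L)^2(1+\gamma_2)}{\eta^2\beta}$ at $\eta = \frac{1}{3L}$: here $1 + \eta L = \frac{4}{3}$ gives $(1+\eta L)^2 = \frac{16}{9}$ and $\eta^2 = \frac{1}{9L^2}$, so $C_1 = 32 L^2 \cdot \frac{1+\gamma_2}{\beta}$. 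Substituting the explicit values of $\beta$ and $\gamma_2$ corresponding to $\alpha = 1$ and $\mathbf{p}_i = \frac{1}{n}$ should produce exactly $C_1 = \frac{160 L n}{3}$, which yields \eqref{eq:dr_cor1_convergence_exact}.

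Finally, for the complexity statement I would invoke that $\tilde{x}^K$ is drawn uniformly at random from $\{\bar{x}^0,\dots,\bar{x}^K\}$, independently of the algorithm's randomness, so $\Exp{\norms{\Gc_{\eta}(\tilde{x}^K)}^2} = \frac{1}{K+1}\sum_{k=0}^K \Exp{\norms{\Gc_{\eta}(\bar{x}^k)}^2}$, and bound this by the right-hand side of \eqref{eq:dr_cor1_convergence_exact}. Imposing $\frac{160 L n [F(x^0)-F^{\star}]}{3(K+1)} \le \varepsilon^2$ and solving for $K$ gives the stated $K = \big\lfloor \frac{160 L n [F(x^0)-F^{\star}]}{3\varepsilon^2} \big\rfloor$; since $\tilde{x}^K \in \dom{f} = \R^p$, the gradient mapping $\Gc_{\eta}(\tilde{x}^K)$ is well-defined, hence $\tilde{x}^K$ is an $\varepsilon$-stationary point of \eqref{eq:fed_prob} in the sense of Definition~\ref{de:eps_sol}.

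The main obstacle I anticipate is bookkeeping rather than conceptual: one must track the precise dependence of $\beta$, $\gamma_2$, $\rho_1$, $\rho_2$ on $\alpha$, $\eta$, $L$, and $\hat{\mathbf{p}}$ through \eqref{eq:beta_consts} and \eqref{eq:stepsizes_choice}, verify that the clean choice $\alpha = 1$, $\eta = \frac{1}{3L}$ lies inside the admissible region (no stepsize restriction violated), and confirm that the arithmetic of collapsing all constants really does give the tidy factor $\frac{160}{3}$ and not a messier expression.
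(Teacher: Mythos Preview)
Your approach is correct and matches the paper's proof: specialize the inexact analysis with $\epsilon_{i,k}\equiv 0$, compute the constants, and telescope. The only bookkeeping point to flag is that the clean factor $\tfrac{160}{3}$ requires taking $\gamma_1 = \gamma_2 = \gamma_4 = 0$ (which is permitted precisely in the exact case, by the last clause of Lemma~\ref{lem:dr_key_est_inexact}), yielding $\beta = \tfrac{3L}{5n}$ and hence $C_1 = \tfrac{2(1+\eta L)^2}{\eta^2\beta} = \tfrac{160Ln}{3}$; the paper accordingly works directly from \eqref{eq:dr_descent_lem_inexact_eq1} and Lemma~\ref{le:grad_norm_bound} with these zero values rather than citing Theorem~\ref{thm:feddr_convergence_inexact} as a black box.
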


\section{AsyncFedDR and Its Convergence Guarantee}\label{sec:asynFedDR}
\noindent\textbf{Motivation.}
Although \textbf{FedDR} has been shown to converge, it is more practical to account for the system heterogeneity of local users. 
Requiring synchronous aggregation at the end of each communication round may lead to slow down in training. 
It is natural to have asynchronous update from local users as seen, e.g., in \cite{Recht2011,stich2018local}. 
However, asynchronous implementation remains limited in FL.
Here, we propose \textbf{asyncFedDR}, an asynchronous variant of \textbf{FedDR}, and analyze its convergence guarantee.
For the sake of our analysis, we only consider $\Sc_k := \sets{i_k}$, the exact evaluation of $\prox_{\eta f_i}$, and bounded delay, but extensions to general $\Sc_k$ and inexact $\prox_{\eta f_i}$ are similar to  Algorithm~\ref{alg:A1}.

\subsection{Derivation of asyncFedDR}\label{subsec:asynFedDR_derivation}
Let us first explain the main idea of \textbf{asyncFedDR}.
At each iteration $k$, each user receives a delay copy $\bar{x}^{k - d^k_{i_k}}$ of $\bar{x}^k$ from the server with a delay $d^k_{i_k}$.
The active user $i_k$ will update its own local model $(y_i^k, x^k_i, \hat{x}^k_i)$ in an asynchronous mode without waiting for others to complete.
Once completing its update, user $i_k$ just sends an increment $\Delta{\hat{x}}^k_{i_k}$ to the server to update the global model, while others may be reading.
Overall, the complete \textbf{asyncFedDR} is presented in Algorithm~\ref{alg:A2}. 

\begin{algorithm}[pht!]\caption{(Asynchronous FedDR (\textbf{asyncFedDR}))}\label{alg:A2}
\normalsize
\begin{algorithmic}[1]
\State{\bfseries Initialization:} Take $x^0 \!\in\!\dom{F}$ and choose $\eta > 0$ and $\alpha > 0$.
\Statex\hspace{0.7ex} Initialize the server with $\bar{x}^0 := x^0$ and $\tilde{x}^0 := 0$.
\Statex\hspace{0.7ex} Initialize each user $i\in [n]$ with $y_i^0 := x^0$, \ $x_i^0 := \prox_{\eta f_i}(y_i^0)$, and $\hat{x}^0_i := 2x_i^0 - y_i^0$.
\State\hspace{0ex}\label{step:A2o1a}{\bfseries For $k := 0,\cdots, K$ do}
\State\hspace{1ex}\label{step:A2o2}Select $i_k$ such that $(i_k,d^k)$ is a realization of $(\hat{i}_k,\hat{d}^k)$.
\State\hspace{1ex}\label{step:A2o1}[\textit{Communication}] User $i_k$ receives $\bar{x}^{k-d^k_{i_k}}$, a delayed version of $\bar{x}^k$ with the delay $d^k_{i_k}$.
\State\hspace{1ex}\label{step:A2o3}[\textit{Local update}] User $i_k$ updates
\vspace{-1ex}
\hspace{2ex}
\begin{equation*}
y^{k+1}_{i_k} :=  y_{i_k}^k + \alpha(\bar{x}^{k-d^k_{i_k}} - x^{k}_{i_k}), \quad
x_{i_k}^{k+1} :=  \mathrm{prox}_{\eta f_{i_k}}(y_{i_k}^{k+1}), \  \text{and} \ 
\hat{x}^{k+1}_{i_k} := 2x^{k+1}_{i_k} - y^{k+1}_{i_k}.
\vspace{-1ex}
\end{equation*}
\Statex\hspace{1ex}Other users maintain $y^{k+1}_i := y^k_i$, \ $x_i^{k+1} := x^k_i$, and $\hat{x}^{k+1}_i  := \hat{x}_i^k$ for $i\neq i_k$.
\State\hspace{1ex}\label{step:A204}[\textit{Communication}] User $i_k$ sends $\Delta^k_{i_k} := \hat{x}^{k+1}_{i_k} - \hat{x}^k_{i_k}$ back to the server. 
\State\hspace{1ex}\label{step:A2o5}[\textit{Sever aggregation}] The server aggregates $\tilde{x}^{k+1} := \tilde{x}^k + \frac{1}{n}\Delta_{i_k}^k$.
\State\hspace{1ex}\label{step:A2o6}[\textit{Sever update}] Then, the sever updates $\bar{x}^{k+1} := \prox_{\eta g}\big( \tilde{x}^{k+1} \big)$.
\State\hspace{0ex}{\bfseries End For}
\end{algorithmic}
\end{algorithm}

In our analysis below, a transition of iteration from $k$ to $k+1$ is triggered whenever a user completes its update.
Moreover, at Step \ref{step:A2o2}, active user $i_k$ is chosen from a realization $(i_k,d^k)$ of a joint random vector $(\hat{i}_k, \hat{d}^k)$ at the $k$-th iteration.
Here, we do not assume $i_k$ to be uniformly random or independent of the delay $d^k$.
This allows Algorithm~\ref{alg:A2} to capture the variety of asynchronous implementations and architectures.
Note that $\bar{x}^{k-d^k_{i_k}}$ at Step~\ref{step:A2o1} is a delayed version of $\bar{x}^k$, which only exists on the server when user $i_k$ is reading.
However, right after, $\bar{x}^k$ may be updated by another user. 

\noindent\textbf{Illustrative example.}
To better understand the update of \textbf{asyncFedDR}, Figure~\ref{fig:async_update} depicts a simple scenario where there are 4 users ($C1$ - $C4$) asynchronously perform updates and with $g(\cdot) = 0$. 
At iteration $k=4$, user $C4$ finishes its update so that the server performs updates. 
During this process, user $C1$ starts its update by receiving a global model $\bar{x}^{4-d_{i_4}^4}$ from server which is the average of $(\hat{x}^4_1,\hat{x}^4_2,\hat{x}^4_3,\hat{x}^4_4)$. 
At iteration $t=7$, $C1$ finishes its update. Although $\hat{x}_1$ and $\hat{x}_4$ do not change during this time, i.e. $\hat{x}_1^6 = \hat{x}_1^4$ and $\hat{x}_4^6 = \hat{x}_4^4$, $\hat{x}_2$ and $\hat{x}_3$ have been updated at $k=5,6$ from user $C2$ and $C3$, respectively. 
Therefore, the global model $\bar{x}^k$ used to perform the update at $k=7$ is actually aggregated from $(\hat{x}_1^6,\hat{x}_2^4,\hat{x}_3^5,\hat{x}_4^6)$ not $(\hat{x}_1^6,\hat{x}_2^6,\hat{x}_3^6,\hat{x}_4^6)$. 
In other words, each user receives a delay estimate $\bar{x}^{k - d^k}$ where $d^k=(d^k_1,\cdots,d^k_n)$ is a delay vector and $d^k_i = \max \sets{t\in [k] : i_t = i}$, i.e. the last time $\hat{x}_i$ gets updated up to iteration $k$. 
Note that when $d^k_i = 0$ for all $i$, Algorithm~\ref{alg:A2} reduces to its synchronous variant, i.e. a special variant of Algorithm~\ref{alg:A1} with $\Sc_k = \sets{i_k}$.

\begin{figure*}[hpt!]
\vspace{-0.5ex}
\begin{center}
\includegraphics[width = 0.9\textwidth]{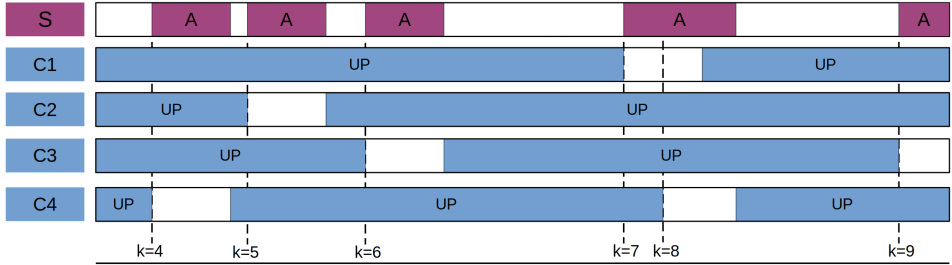}\vspace{-1ex}
\caption{Asynchronous update with 4 users and without regularizer $g$. 
Here, ``A'' blocks represent server process and ``UP'' blocks represent user process; $C_1$-$C_4$ are communication rounds.}\label{fig:async_update}
\end{center}
\vspace{-2ex}
\end{figure*}

\subsection{Convergence analysis}\label{subsec:asynFedDR_analysis}
Since we treat the active user $i_k$ and the delay vector $d^k$  jointly at each iteration $k$ as a realization of a joint random vector $(\hat{i}_k, \hat{d}^k)$, we adopt the probabilistic model from  \cite{cannelli2019asynchronous} to analyze Algorithm~\ref{alg:A2}. 
This new model allows us to cope with a more general class of asynchronous variants of our  method.

\noindent\textbf{Probabilistic model.}
Let $\xi^k := (i_k, d^k)$ be a realization of a random vector $\hat{\xi}^k := (\hat{i}_k, \hat{d}^k)$ containing the user index $\hat{i}_k \in [n]$ and the delay vector $\hat{d}^k = (\hat{d}^k_1,\cdots, \hat{d}^k_n) \in \Dc := \set{0,1,\cdots, \tau}^n$ presented at the $k$-the iteration, respectively. 
We consider $k+1$ random variables that form a random vector $\hat{\xi}^{0:k} := (\hat{\xi}^0, \cdots, \hat{\xi}^k)$.
We also use $\xi^{0:k} = (\xi^0, \xi^1, \cdots, \xi^k)$ for $k+1$ possible values of the random vector $\hat{\xi}^{0:k}$.
Let $\Omega$ be the sample space of all sequences $\omega := \{(i_k, d^k)\}_{k\geq 0}$.
We define a cylinder $\Cc_k(\xi^{0:k}) := \sets{\omega \in\Omega : (\omega_0, \cdots, \omega_k) = \xi^{0:k}}$ and $\Cc_k$ is the set of all possible $\Cc_k(\xi^{0:k})$ when $\xi^t$, $t=0,\cdots, k$ take all possible values, where $\omega_l$ is the $l$-th element of $\omega$.
Let $\Fc_k := \sigma(\Cc_k)$ be the $\sigma$-algebra generated by $\Cc_k$ and $\Fc := \sigma(\cup_{k=0}^{\infty}\Cc_k)$.
For each $\Cc_k(\xi^{0:k})$ we also equip with a probability $\mathbf{p}(\xi^{0:k}) := \mathbb{P}(\Cc_k(\xi^{0:k}))$.
Then, $(\Omega, \Fc, \mathbb{P})$ forms a probability space.
Assume that $\mbf{p}(\xi^{0:k}) := \mathbb{P}(\hat{\xi}^{0:k} = \xi^{0:k}) > 0$.
Our conditional probability is defined as $\mathbf{p}( (i, d) \mid \xi^{0:k}) := \mathbb{P}(\Cc_{k+1}(\xi^{0:k+1}))/\mathbb{P}(\Cc_k(\xi^{0:k}))$, where $\mathbf{p}( (i, d) \mid \xi^{0:k}) := 0$ if $\mathbf{p}(\xi^{0:k}) = 0$.
We refer to Supp. Doc. \ref{apdx:subsec:prob_model} for more details of our probabilistic model.

To analyze Algorithm~\ref{alg:A2}, we impose Assumption~\ref{ass:A4} on the implementation below.
\begin{assumption}\label{ass:A4}
For all $i\in [n]$ and $\omega\in\Omega$, there exists at least one $t \in \sets{0, 1, \cdots, T}$ with $T > 0$, such that
\begin{equation}\label{eq:ass_lower_bound_prob}
\sum_{d\in\Dc} \mathbf{p}((i, d) \mid \xi^{0:k+t-1}) \ge \hat{\mbf{p}} \quad \text{if}~ \mbf{p}(\xi^{0:k}) > 0,
\end{equation}
for a given $\hat{\mbf{p}} > 0$ and any $k\geq 0$.
Assume also that $d^k_i \le \tau$ and  $d^k_{i_k} = 0$ for all $k \geq 0$ and $i, i_k \in [n]$.
\end{assumption}
Assumption~\ref{ass:A4} implies that during an interval of $T$ iterations, every user has a non-negligible positive probability to be updated. 
Note that if the user $i_k$ is active, then it uses recent value with no delay, i.e., $d^k_{i_k} = 0$ as in Assumption~\ref{ass:A4}.
Moreover, the bounded delay assumption $d_i^k \leq \tau$ is standard to analyze convergence of asynchronous algorithms, see e.g.,  \cite{cannelli2019asynchronous,nguyen2018sgd,peng2016arock,Recht2011,xie2019asynchronous}.

Suppose that we choose $0 < \alpha < \bar{\alpha}$ and $0 < \eta < \bar{\eta}$ in Algorithm~\ref{alg:A2}, where  $c := \frac{2\tau^2 - n }{n^2}$ is given, and $\bar{\alpha} > 0$ and $\bar{\eta} > 0$ are respectively computed by
\vspace{-0.5ex}
\begin{equation}\label{eq:choice_of_params0}
\hspace{-0.1ex}
\begin{array}{l}
\bar{\alpha} := \begin{cases}
1 &\text{if}~2\tau^2 \leq n, \\ 
\frac{2}{2 + c} &\text{otherwise},
\end{cases}
 \quad \text{and} \quad
 \bar{\eta} := \begin{cases}
\frac{\sqrt{16 - 8\alpha - 7\alpha^2} - \alpha}{2L(2+\alpha)} &\text{if}~2\tau^2 \leq n, \vspace{0.5ex}\\ 
\tfrac{\sqrt{16 - 8\alpha - (7 + 4c + 4c^2)\alpha^2} - \alpha}{2L[2 + (1+c)\alpha]} &\text{otherwise}.
\end{cases}
\end{array}
\hspace{-3.5ex}
\vspace{-0.5ex}
\end{equation}
Next, we introduce the following  two constants:
\begin{equation}\label{eq:rho_theta}
\arraycolsep=0.2em
\begin{array}{lcl}
\rho & := & \begin{cases}
\frac{2(1-\alpha) - (2+\alpha)L^2\eta^2 - L\alpha\eta}{\alpha\eta n} &\text{if} \quad 2\tau^2 \leq n, \vspace{1ex} \\
\frac{n^2[2(1-\alpha) - (2+\alpha)L^2\eta^2 - L\alpha\eta] - \alpha(1+\eta^2L^2)(2\tau^2 - n)}{\alpha\eta n^3} &\text{otherwise}.
\end{cases}
\vspace{1ex}\\
D & := & \frac{8\alpha^2(1 + L^2\eta^2)(\tau^2 + 2Tn\hat{\mbf{p}}) \ + \ 8n^2(1 + L^2\eta^2 + T\alpha^2\hat{\mbf{p}})}{\hat{\mbf{p}}\alpha^2n^2}. 
\end{array}
\end{equation}
Then, both $\rho$ and $D$ are positive.
We emphasize that though these formulas look complicated, they are computed explicitly without any tuning.
Theorem~\ref{thm:asdr_convergence} proves the convergence of Algorithm~\ref{alg:A2}, whose analysis is in  Supp. Doc.~\ref{apdx:sec:asynFedDR}.

\begin{theorem}\label{thm:asdr_convergence}
Suppose that Assumption~\ref{ass:A1}, \ref{ass:A2}, and \ref{ass:A4} hold for \eqref{eq:fed_prob}.
Let $\bar{\alpha}$, $\bar{\eta}$, $\rho$, and $D$ be given by \eqref{eq:choice_of_params0} and \eqref{eq:rho_theta}, respectively.
Let $\sets{ (x^{k}_i, y^{k}_i, \bar{x}^k)}$ be generated by Algorithm~\ref{alg:A2} with stepsizes $\alpha \in (0, \bar{\alpha})$ and $\eta \in (0, \bar{\eta})$.
Then, the following bound holds:
\begin{equation}\label{eq:asdr_thm_key_est}
\frac{1}{K+1}\sum_{k=0}^K\mathbb{E}\big[ \norms{\Gc_{\eta}(\bar{x}^k)}^2 \big] \leq \frac{\hat{C}\big[F(x^0) - F^{\star}\big]}{K+1},
\end{equation}
where $\hat{C} :=  \ \frac{2(1+\eta L)^2D}{n\eta^2\rho} > 0$ depending on $n, L, \eta, \alpha, \tau, T,$ and $\hat{\mbf{p}}$.

Let $\tilde{x}_K$ be selected uniformly at random from $\sets{\bar{x}^0, \cdots, \bar{x}^K}$ as the output of Algorithm~\ref{alg:A2}.
Then, after at most $K := \BigO{\varepsilon^{-2}}$ iterations, $\tilde{x}^K$ is an $\varepsilon$-stationary point of \eqref{eq:fed_prob} as in Definition~\ref{de:eps_sol}.
\end{theorem}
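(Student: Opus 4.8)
The plan is to run a Lyapunov (potential-function) descent argument, extending the analysis behind Theorem~\ref{thm:feddr_convergence_inexact} and Lemma~\ref{lem:dr_key_est_inexact} to the asynchronous, delayed setting of Algorithm~\ref{alg:A2}. First I would work with the duplicated-variable reformulation \eqref{eq:fed_comp_prob} of \eqref{eq:fed_prob}, rewrite the updates of Algorithm~\ref{alg:A2} in that form, and record the first-order optimality conditions of the two proximal steps: $x_{i_k}^{k+1} = \prox_{\eta f_{i_k}}(y_{i_k}^{k+1})$ gives $\eta\nabla f_{i_k}(x_{i_k}^{k+1}) = y_{i_k}^{k+1} - x_{i_k}^{k+1}$, while $\bar{x}^{k+1} = \prox_{\eta g}(\tilde{x}^{k+1})$ gives $\tilde{x}^{k+1} - \bar{x}^{k+1} \in \eta\,\partial g(\bar{x}^{k+1})$. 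Combining these with $\hat{x}_{i_k}^{k+1} = 2x_{i_k}^{k+1} - y_{i_k}^{k+1}$, the $L$-smoothness \eqref{eq:L_smooth}, and the definition \eqref{eq:grad_mapping} of $\Gc_\eta$, I would derive a one-iteration estimate for a candidate Lyapunov function of the form
\[
V_k := F(\bar{x}^k) + \frac{a}{n}\sum_{i=1}^n\norms{\bar{x}^k - x_i^k}^2 + \frac{b}{n}\sum_{j=(k-\tau)_{+}}^{k-1}w_{k,j}\,\norms{\bar{x}^{j+1} - \bar{x}^j}^2,
\]
for suitable positive constants $a,b$ and a triangular weight sequence $w_{k,j}$; the last ``delay-window'' term is the ingredient absent from the synchronous analysis, and the initialization $y_i^0 = x^0$, $x_i^0 = \prox_{\eta f_i}(y_i^0)$ is chosen so that $V_0$ coincides with (or is dominated by) $F(x^0)$.

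Second, I would control the error introduced by the stale read $\bar{x}^{k-d^k_{i_k}}$. Since only user $i_k$ modifies $\hat{x}_{i_k}$ at step $k$ and $\prox_{\eta g}$ is nonexpansive (convexity of $g$), one has $\norms{\bar{x}^{k+1} - \bar{x}^k} \le \tfrac1n\norms{\Delta^k_{i_k}}$, and by Cauchy--Schwarz together with $d^k_{i_k}\le\tau$ from Assumption~\ref{ass:A4},
\[
\norms{\bar{x}^k - \bar{x}^{k-d^k_{i_k}}}^2 \le \tau\sum_{j=(k-\tau)_{+}}^{k-1}\norms{\bar{x}^{j+1} - \bar{x}^j}^2 .
\]
Substituting this into the one-iteration estimate and telescoping the window term, the key point is that after rearrangement the net coefficient multiplying $\tfrac1n\sum_i\norms{\bar{x}^k - x_i^k}^2$ --- hence, via the estimate $\norms{\Gc_\eta(\bar x^k)}^2 \lesssim \tfrac{(1+\eta L)^2}{\eta^2}\cdot\tfrac1n\sum_i\norms{\bar x^k - x_i^k}^2$, the coefficient of $\norms{\Gc_\eta(\bar x^k)}^2$ --- is strictly negative. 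This is precisely where the restrictions $\alpha\in(0,\bar\alpha)$ and $\eta\in(0,\bar\eta)$ of \eqref{eq:choice_of_params0} and the positivity of $\rho$ and $D$ in \eqref{eq:rho_theta} enter, with $c = (2\tau^2-n)/n^2$ quantifying the worst-case delay penalty.

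Third, I would pass to expectations using the probabilistic model of Section~\ref{subsec:asynFedDR_analysis}. Because $\hat{i}_k$ is neither uniform nor independent of $\hat{d}^k$, a single iteration only makes progress at the active coordinate, so to recover progress of the full mapping $\Gc_\eta(\bar{x}^k)$ I would aggregate the one-iteration estimate over a block of $T$ consecutive iterations and invoke \eqref{eq:ass_lower_bound_prob}, which guarantees each user is active with probability at least $\hat{\mbf{p}}$ somewhere in that block; this produces the factor $T/\hat{\mbf{p}}$ inside $D$. Telescoping the resulting inequality $\Exp{V_{k+1}} \le \Exp{V_k} - c_4\,\Exp{\norms{\Gc_\eta(\bar{x}^k)}^2}$ from $k=0$ to $K$ and using $V_{K+1}\ge F^{\star}$ (Assumption~\ref{ass:A1}) yields $\sum_{k=0}^K\Exp{\norms{\Gc_\eta(\bar x^k)}^2} \le \hat C\,[F(x^0)-F^{\star}]$, which is \eqref{eq:asdr_thm_key_est} after dividing by $K+1$. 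Finally, since $\tilde{x}^K$ is drawn uniformly from $\sets{\bar x^0,\dots,\bar x^K}$, $\Exp{\norms{\Gc_\eta(\tilde x^K)}^2}$ equals the left-hand side of \eqref{eq:asdr_thm_key_est}, so taking $K+1 \ge \hat C\,[F(x^0)-F^{\star}]/\varepsilon^2$, i.e. $K = \BigO{\varepsilon^{-2}}$, makes $\tilde{x}^K$ an $\varepsilon$-stationary point in the sense of Definition~\ref{de:eps_sol}. I expect the main obstacle to be the second step: choosing the window weights $w_{k,j}$ and the stepsize thresholds simultaneously so that the delay term telescopes cleanly while still leaving a strictly negative coefficient on $\norms{\Gc_\eta(\bar x^k)}^2$ after the $T$-block conditional-expectation averaging --- this coupling between the delay bookkeeping and the Cannelli-type probabilistic model is the delicate part, the remaining estimates being lengthy but routine.
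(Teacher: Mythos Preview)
Your overall skeleton is right---a Lyapunov function with a triangular delay-window term, a sure-descent step handling the staleness via $\norms{\bar x^k-\bar x^{k-d^k_{i_k}}}^2\le\tau\sum_{l}\norms{\bar x^{l+1}-\bar x^l}^2$, then a $T$-block aggregation under Assumption~\ref{ass:A4}---and this matches the paper's architecture. Two points need sharpening.

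First, the potential. The paper does not work with $F(\bar x^k)+\tfrac{a}{n}\sum_i\norms{\bar x^k-x_i^k}^2$ but with the Douglas--Rachford envelope \eqref{eq:lyapunov_func},
\[
V_\eta^k(\bar x^k)=g(\bar x^k)+\tfrac1n\sum_{i=1}^n\Big[f_i(x_i^k)+\iprods{\nabla f_i(x_i^k),\bar x^k-x_i^k}+\tfrac1{2\eta}\norms{\bar x^k-x_i^k}^2\Big],
\]
augmented by the delay term to form $\widetilde V_\eta^k$. This form meshes directly with the prox identity $y_i^k=x_i^k+\eta\nabla f_i(x_i^k)$ and yields a \emph{pathwise} descent $\widetilde V_\eta^{k+1}\le\widetilde V_\eta^k-\tfrac\rho2\norms{x_{i_k}^{k+1}-x_{i_k}^k}^2$ (Lemma~\ref{lem:asdr_key_est}). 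Your $F$-based potential is sandwiched between two such envelopes by $L$-smoothness, so it can be made to work, but the algebra is considerably cleaner with $V_\eta^k$.

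Second, and this is the actual gap: your step~2 claims a negative coefficient on the \emph{full} sum $\tfrac1n\sum_i\norms{\bar x^k-x_i^k}^2$, but one iteration of Algorithm~\ref{alg:A2} touches only block $i_k$, so the sure descent produces only $\norms{x_{i_k}^{k+1}-x_{i_k}^k}^2$. You acknowledge this in step~3, yet then write the clean one-step inequality $\Exp{V_{k+1}}\le\Exp{V_k}-c_4\Exp{\norms{\Gc_\eta(\bar x^k)}^2}$, which is \emph{not} what $T$-block aggregation gives under Assumption~\ref{ass:A4} (there is no per-iteration lower bound on $\mathbf p(i\mid\xi^{0:k-1})$). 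The paper decouples the two pieces: it proves separately (Lemma~\ref{le:bound_of_grad}) that
\[
\sum_{i=1}^n\Exp{\norms{\bar x^k-x_i^k}^2}\;\le\;D\sum_{t=k-\tau}^{k+T}\Exp{\norms{x_{i_t}^{t+1}-x_{i_t}^t}^2},
\]
where the past window $[k-\tau,k-1]$ absorbs the delay and the future window $[k,k+T]$ is where \eqref{eq:ass_lower_bound_prob} is invoked. Combining this with the sure descent summed over $[k-\tau,k+T]$ gives $\Exp{\norms{\Gc_\eta(\bar x^k)}^2}\le\hat C\big(\widetilde V_\eta^{k-\tau}-\Exp{\widetilde V_\eta^{k+T+1}}\big)$; the final average over $k$ then uses the \emph{monotonicity} of $\{\widetilde V_\eta^k\}$, not a clean telescope, since the windows overlap. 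So the missing idea in your plan is that the passage from the active-coordinate increment to $\norms{\Gc_\eta(\bar x^k)}^2$ is a \emph{separate} windowed estimate rather than something that can be folded into a single Lyapunov recursion.
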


\begin{remark}
From Theorem~\ref{thm:asdr_convergence}, we can see that \textbf{asyncFedDR} achieves the same worst-case communication complexity $\BigO{\varepsilon^{-2}}$ (up to a constant factor) as \textbf{FedDR}, but with smaller $\alpha$ and $\eta$.
\end{remark}

\section{Numerical Experiments}\label{sec:num_exp}
To evaluate the performance of \textbf{FedDR} and \textbf{asyncFedDR}, we conduct multiple experiments using both synthetic and real datasets. 
Since most existing methods are developed for non-composite problems, we also implement three other methods: \textbf{FedAvg}, \textbf{FedProx}, and \textbf{FedPD} to compare for this setting. 
We use training loss, training accuracy, and test accuracy as our performance metrics.  

\noindent\textbf{Implementation.} 
To compare synchronous algorithms, we reuse the implementation of FedAvg and FedProx in \cite{Li_MLSYS2020} and implement FedDR and FedPD on top of it. To conduct the asynchronous examples, we implement our algorithms based on the asynchronous framework in \cite{distBelief}. 
All experiments are run on a Linux-based server with multiple nodes and configuration: 24-core 2.50GHz Intel processors, 30M cache, and 256GB RAM.

\noindent\textbf{Models and hyper-parameters selection.} 
Our models are neural networks, and their detail is given in Supp. Doc.~\ref{app:add_num_exp}.
As in \cite{Li_MLSYS2020}, we use the same local solver (SGD) for all algorithms and run the local updates for $20$ epochs.
Parameters for each algorithm such as $\mu$ for FedProx, $\eta$ for FedPD, and $\alpha$ and $\eta$ for FedDR are tuned from a wide range of values. 
For each dataset, we pick the parameters that work best for each algorithm and plot their performance on the chosen parameters. 

\noindent\textbf{Results on synthetic datasets.}
We compare these algorithms using synthetic dataset in both iid and non-iid settings. 
We follow the data generation procedures described in \cite{Li_MLSYS2020,shamir2014communication} to generate one iid dataset \texttt{synthetic-iid} and three non-iid datasets: \texttt{synthetic-($r$,$s$)} for $(r, s) = \{(0,0),(0.5,0.5),(1,1) \}$. We first compare these algorithms without using the user sampling scheme, i.e. all users perform update at each communication round, and for non-composite model of \eqref{eq:fed_prob}.

\begin{figure*}[ht!]
\begin{center}
\includegraphics[width = 1\textwidth]{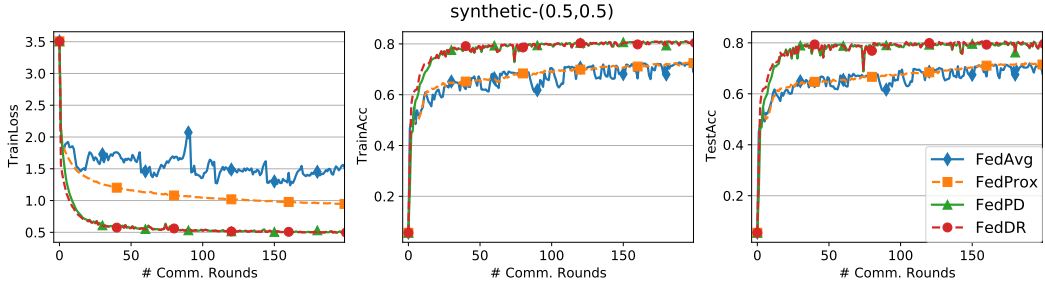}\vspace{-1ex}
    \vspace{-1ex}
    \caption{The performance of 4 algorithms on non-iid synthetic datasets without user sampling}\label{fig:exp_synth_non_iid}
\end{center}
\vspace{-2ex}
\end{figure*}

We report the performance of these algorithms on one non-iid dataset in Figure~\ref{fig:exp_synth_non_iid}, but more results can be found in Sup. Doc.~\ref{app:add_num_exp}. FedDR and FedPD are comparable in these datasets and they both outperform FedProx and FedAvg. FedProx works better than FedAvg which aligns with the results in \cite{Li_MLSYS2020}. 
However, when comparing on more datasets, our algorithm overall performs better than others.

Now we compare these algorithms where we sample 10 users out of 30 to perform update at each communication round for FedAvg, FedProx, and FedDR while we use all users for FedPD since FedPD only has convergence guarantee for this setting. In this test, the evaluation metric is plotted in terms of the number of bytes communicated between users and server at each communication round. Note that using user sampling scheme in this case can save one-third of communication cost each round. Figure~\ref{fig:exp_synth_stoc} depicts the performance of 4 algorithms on one dataset, see also Sup. Doc.~\ref{app:add_num_exp}.

\begin{figure*}[ht!]
\vspace{-1ex}
\begin{center}
\includegraphics[width = 1\textwidth]{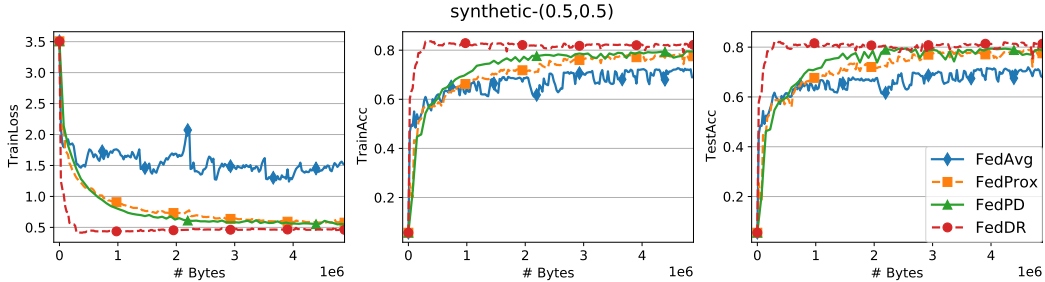}\vspace{-1ex}
\vspace{-0.5ex}
    \caption{The performance of 4 algorithms with user sampling scheme on non-iid synthetic datasets.}\label{fig:exp_synth_stoc}
\end{center}
\vspace{-1.5ex}
\end{figure*}

From Figure~\ref{fig:exp_synth_stoc}, FedDR performs well compared to others. FedProx using user sampling scheme performs better and is slightly behind FedPD while FedDR, FedPD, and FedProx outperform FedAvg. 

\noindent\textbf{Results on FEMNIST datasets.} 
FEMNIST \cite{caldas2018leaf} is an extended version of the MNIST dataset \cite{lecun1998gradient} where the data is partitioned by the writer of the digit/character. It has a total of 62 classes (10 digits, 26 upper-case and 26 lower-case letters) with over 800,000 samples.  In this example, there are total of 200 users and we sample 50 users to perform update at each round of communication for FedAvg, FedProx, and FedDR while we use all users to perform update for FedPD. 
Fig.~\ref{fig:exp_femnist} depicts the performance of $4$ algorithms in terms of communication cost. 
From Fig.~\ref{fig:exp_femnist}, FedDR can achieve lower loss value and higher training accuracy than other algorithms while FedPD can reach the same test accuracy as ours at the end. 
Overall, FedDR seems working better than other algorithms in this test.

\begin{figure*}[ht!]
\vspace{-1ex}
\begin{center}
    \includegraphics[width = .94\textwidth]{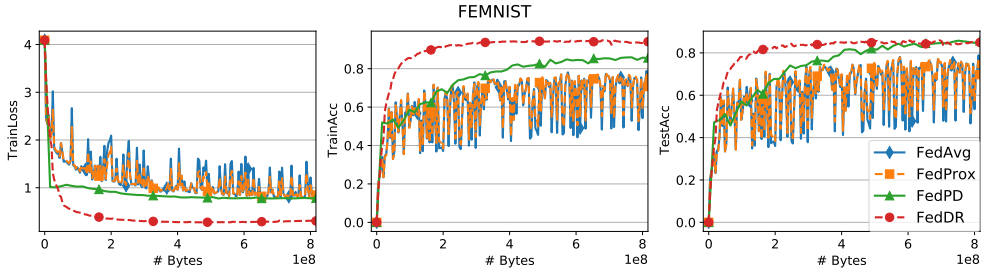}
    \vspace{-1.2ex}
    \caption{The performance of 4 algorithms on the FEMNIST dataset.}\label{fig:exp_femnist}
\end{center}
\vspace{-2.5ex}
\end{figure*}

\noindent\textbf{Results with the $\ell_1$-norm regularizer.}
We now consider the composite setting with $g(x) := 0.01 \norm{x}_1$ to verify Algorithm~\ref{alg:A1} on different inexactness levels $\epsilon_{i,k}$ by varying the learning rate (\textit{lr}) and the number of local SGD epochs to approximately evaluate $\prox_{\eta f_i}(y^k_i)$.
We run Algorithm~\ref{alg:A1} on the FEMNIST dataset, and the results are shown in Figure~ \ref{fig:exp_comp_femnist}.

\begin{figure*}[ht!]
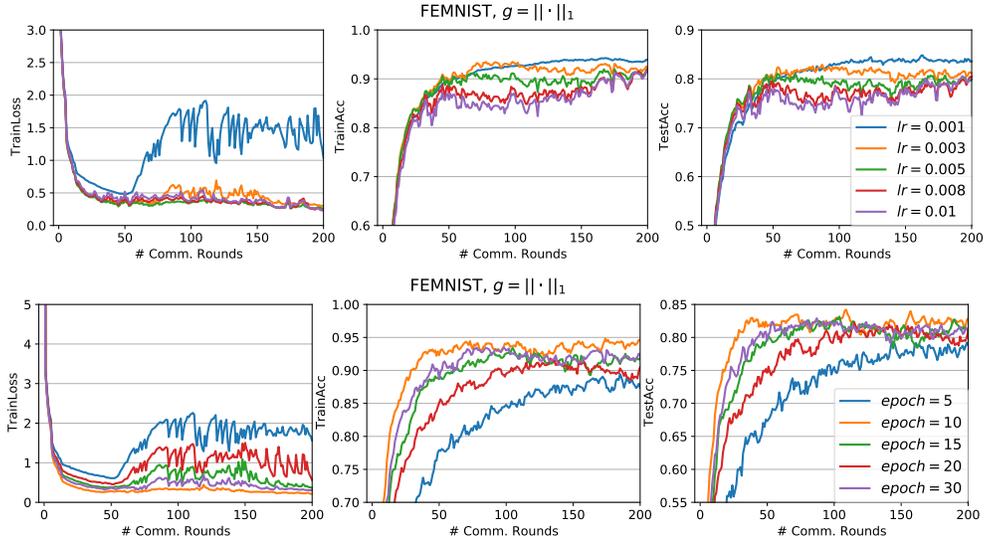

\begin{center}
    \includegraphics[width = .94\textwidth]{figs/feddr_reg_lr_nist}
    \includegraphics[width = .94\textwidth]{figs/feddr_reg_epoch_nist}
    \vspace{-1ex}
    \caption{The performance of \textbf{FedDR} on \texttt{FEMNIST} dataset in composite setting.}\label{fig:exp_comp_femnist}
\end{center}
\vspace{-2.5ex}
\end{figure*}
We observe that Algorithm~\ref{alg:A1} works best when local learning rate is $0.003$ which aligns with \cite{Li_MLSYS2020} for the non-composite case.
It also performs better when we decrease $\epsilon_{i,k}$ by increasing the number of epochs in evaluating $\prox_{\eta f_i}$.
This performance confirms our theoretical results in Supp. Doc. \ref{subsec:inexact_alg1}.

\noindent\textbf{Results using asynchronous update.} 
To illustrate the advantage of asyncFedDR over FedDR, we conduct another example to train MNIST dataset using 20 users. Since we run these experiments on computing nodes with identical configurations, we simulate the case with computing power discrepancy between users by adding variable delay to each user's update process such that the difference between the fastest user may be up to twice as fast as the slowest one.

\begin{figure*}[hpt!]
\vspace{-1ex}
\begin{center}
    \includegraphics[width = .94\textwidth]{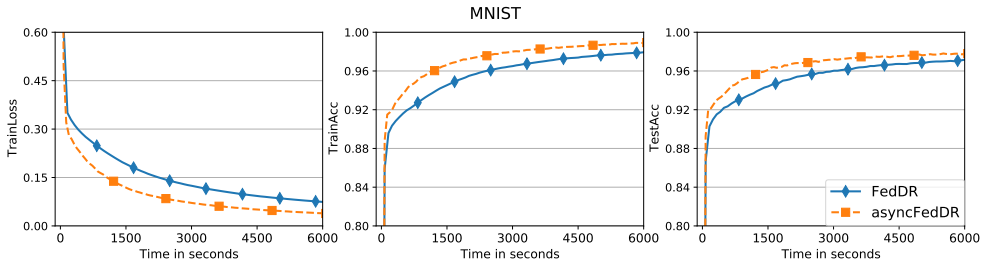}
    \vspace{-1ex}
    \caption{The performance of \textbf{FedDR} and \textbf{asyncFedDR} on the MNIST dataset.}\label{fig:exp_mnist}
\end{center}
\vspace{-2ex}
\end{figure*}

The results of two variants are presented in Figure~\ref{fig:exp_mnist}, see Supp. Doc.~\ref{app:add_num_exp} for more examples. 
We can see that asyncFedDR can achieve better performance than FedDR in terms of training time which illustrate the advantage of asynchronous update in heterogeneous computing power.

\begin{ack}
The work of Quoc Tran-Dinh is partially supported by the Office of Naval Research (ONR), grant No. N00014-20-1-2088.
The authors would also like to thank all the anonymous reviewers and the ACs for their constructive comments to improve the paper.
%
%
\end{ack}

\bibliographystyle{plain}


\clearpage
\newpage
\appendix

\begin{center}
\textbf{\large Supplementary Documment} \vspace{0.5ex}\\
\textbf{\Large 
FedDR -- Randomized Douglas-Rachford Splitting Algorithms \vspace{0.5ex}\\ for Nonconvex Federated Composite Optimization
}
\end{center}
\vspace{1ex}

\section{The Analysis of Algorithm~\ref{alg:A1}: Randomized Coordinate Variant --- FedDR}\label{sec:apdx_FedDR}
In this Supplementary Document (Supp. Doc.), we first provide additional details in the derivation of Algorithm~\ref{alg:A1}, \textbf{FedDR}.
Then, we present the full proofs of the convergence results of Algorithm~\ref{alg:A1}.

\subsection{Derivation of Algorithm~\ref{alg:A1}}\label{subsec:FedDR_derivation}
Our first step is to recast \eqref{eq:fed_prob} into a constrained reformulation.
Next, we apply the classical Douglas-Rachford (DR) splitting scheme to this reformulation. 
Finally, we randomize its updates to obtain a randomized block-coordinate DR variant.

\textbf{(a)~Constrained reformulation.}
With a little abuse of notation, we can equivalently write \eqref{eq:fed_prob} into the following constrained minimization problem:
\begin{equation}\label{eq:constr_reform}
\left\{\begin{array}{ll}
\displaystyle\min_{x_1,\cdots,x_n}~&\Big\{ F(\mbf{x}) := f(\mbf{x}) + g(\mbf{x}) \equiv \displaystyle \frac{1}{n}\sum_{i=1}^n f_i(x_i) + g(x_1) \Big\} \\
\text{s.t.}~&x_2 = x_1, \ x_3 = x_1, \ \cdots, x_n = x_1.
\end{array}\right.
\end{equation}
where $\mbf{x} := [x_1, x_2, \cdots, x_n]$ concatenates $n$ duplicated variables $x_1, x_2, \cdots, x_n$ of $x$ in \eqref{eq:fed_prob} such that  it forms a column vector in $\R^{np}$.
Such duplications are characterized by $x_2 = x_1, x_3 = x_1, \cdots, x_n = x_1$, which define a linear subspace $\Lc := \sets{ \mbf{x} \in\R^{np} : x_2 = x_1, \ x_3 = x_1, \cdots,  x_n = x_1}$ in $\R^{np}$.

\textbf{(b)~Unconstrained reformulation.}
Let $\delta_{\Lc}$ be the indicator function of $\Lc$, i.e. $\delta_{\Lc}(\mbf{x}) = 0$ if $\mbf{x}\in\Lc$, and $\delta_{\Lc}(\mbf{x}) = +\infty$, otherwise.
Then, we can rewrite \eqref{eq:constr_reform} into the following unconstrained setting:
\begin{equation}\label{eq:fed_comp_prob}
\min_{\mbf{x}\in\R^{np}}\Big\{ F(\mbf{x}) := f(\mbf{x}) + g(\mbf{x}) + \delta_{\Lc}(\mbf{x}) \equiv \frac{1}{n}\sum_{i=1}^n f_i(x_i) + g(x_1) + \delta_{\Lc}(\mbf{x}) \Big\}.
\end{equation}
Clearly, \eqref{eq:fed_comp_prob} can be viewed as a composite nonconvex minimization problem of $f(\mbf{x})$ and $g(\mbf{x}) + \delta_{\Lc}(\mbf{x})$.
The first-order optimality condition of \eqref{eq:fed_comp_prob} can be written as
\begin{equation}\label{eq:opt_cond2}
0 \in \nabla{f}(\mbf{x}^{\star}) + \partial{g}(\mbf{x}^{\star}) +  \partial{\delta_{\Lc}}(\mbf{x}^{\star}),
\end{equation}
where $\partial{\delta_{\Lc}}$ is the subdifferential of $\delta_{\Lc}$, which is the normal cone of $\Lc$ (or, equivalently, $\partial{\delta_{\Lc}}(\mbf{x}) = \Lc^{\perp}$ if $\mbf{x}\in\Lc$, the orthogonal subspace of $\Lc$,  and $\partial{\delta_{\Lc}}(\mbf{x}) = \emptyset$, otherwise), and $\partial{g}$ is the subdifferential of $g$.
Note that since $f$ is nonconvex, \eqref{eq:opt_cond2} only provides a necessary condition for $\mbf{x}^{\star} := [x^{\star}_1, \cdots, x^{\star}_{n}]$ to be a local minimizer.
Any $\mbf{x}^{\star}$ satisfying \eqref{eq:opt_cond2} is called a (first-order) stationary point of \eqref{eq:fed_comp_prob}.
In this case, we have $x^{\star}_i = x_{1}^{\star}$ for all $i \in [n]$.
Hence, using \eqref{eq:opt_cond2}, we have $0 \in \nabla{f}(\mbf{x}^{\star}) + \partial{g}(\mbf{x}^{\star}) + \Lc^{\perp}$.
This condition is equivalent to $0 \in \frac{1}{n}\sum_{i=1}^n\nabla{f_i}(x_i^{\star}) + \partial{g}(x_1^{\star})$.
However, since $x^{\star}_i = x_1^{\star}$ for all $i \in [n]$, the last inclusion becomes $0 \in \frac{1}{n}\sum_{i=1}^n\nabla{f_i}(x_1^{\star}) + \partial{g}(x_1^{\star})$.
Equivalently, we have $x^{\star} := x^{\star}_1$ to be a stationary point of \eqref{eq:fed_prob}.

\textbf{(c)~Full parallel DR variant.}
Let us apply the DR splitting method to \eqref{eq:opt_cond2}, which can be written explicitly as follows:
\begin{equation}\label{eq:DR_full}
\arraycolsep=0.3em
\left\{\begin{array}{lcl}
\mbf{y}^{k+1} &:= & \mbf{x}^k + \alpha(\bar{\mbf{x}}^k - \mbf{x}^k), \vspace{1ex}\\
\mbf{x}^{k+1} &:= & \prox_{n \eta f}(\mbf{y}^{k+1}), \vspace{1ex}\\
\bar{\mbf{x}}^{k+1} &:= & \prox_{n\eta(g + \delta_{\Lc})}(2\mbf{x}^{k+1} - \mbf{y}^{k+1}),
\end{array}\right.
\end{equation}
where $\eta > 0$ is a given such that $n\eta $ is a step-size and $\alpha \in (0, 2]$ is a relaxation parameter \cite{themelis2020douglas}.
If $\alpha = 1$, then we recover the classical Douglas-Rachford scheme \cite{Lions1979} and if $\alpha = 2$, then we recover the Peaceman-Rachford splitting scheme \cite{Bauschke2011}.
Note that the classical DR scheme studied in \cite{Lions1979} was developed to solve monotone inclusions, and in our context, convex problems.
Recently, it has been extended to solve nonconvex optimization problems, see, e.g., \cite{li2015global,themelis2020douglas}.

Let us further exploit the structure of $f$, $g$, and $\delta_{\Lc}$ in \eqref{eq:fed_comp_prob} to obtain a special parallel DR variant.
\begin{compactitem}
\item First, since $f(\mbf{x}) = \frac{1}{n}\sum_{i=1}^nf_i(x_i)$, we have 
\begin{equation*}
\arraycolsep=0.3em
\begin{array}{lcl}
\displaystyle\min_{\mbf{x}}\Big\{ f(\mbf{x}) + \tfrac{1}{2n\eta}\norms{\mbf{x} - \mbf{y}^{k+1}}^2 \Big\} &= & \displaystyle\min_{\mbf{x}}\Big\{ \tfrac{1}{n}\sum_{i=1}^n\Big[ f_i(x_i) + \tfrac{1}{2\eta}\norms{x_i - y_i^{k+1}}^2 \Big] \Big\} \vspace{1ex}\\
&= & \frac{1}{n} \displaystyle\sum_{i=1}^n \displaystyle\min_{x_i} \Big\{ f_i(x_i) + \tfrac{1}{2\eta}\norms{x_i - y_i^{k+1}}^2 \Big\}.
\end{array}
\end{equation*}
Hence, we can decompose the computation of  $\mbf{x}^{k+1} :=  \prox_{n\eta f}(\mbf{y}^{k+1})$ from \eqref{eq:DR_full} as $x_i^{k+1} := \prox_{\eta f_i}(y^{k+1}_i)$ for all $i \in [n]$.
\item Next, we denote $\hat{\mbf{x}}^{k+1} := 2\mbf{x}^{k+1} - \mbf{y}^{k+1}$, or equivalently, in component-wise $\hat{x}^{k+1}_i := 2x^{k+1}_i - y_i^{k+1}$ for all $i \in [n]$.
\item Finally, the third line of \eqref{eq:DR_full} $\bar{\mbf{x}}^{k+1} := \prox_{n\eta(g + \delta_{\Lc})}(\hat{\mbf{x}}^{k+1})$ can be rewritten as
\begin{equation}\label{eq:prox_gL}
\hspace{-0.0ex}
\arraycolsep=0.2em
\bar{\mbf{x}}^{k+1} := \prox_{n\eta(g + \delta_{\Lc})}(\hat{\mbf{x}}^{k+1}) = \left\{\begin{array}{ll}
&{\mathrm{arg}\!\!\!\!\!\!\displaystyle\min_{[x_1,\cdots, x_n]}}\big\{ g(x_1) + \tfrac{1}{2n\eta}\sum_{i=1}^{n}\norms{x_i - \hat{x}_i^{k+1}}^2 \big\} \vspace{1ex}\\
&\text{s.t.} \quad x_i = x_1, \ \ \text{for all} \ i = 2, \cdots, n.
\end{array}\right.
\hspace{-4ex}
\end{equation}
\end{compactitem}
Let us solve\eqref{eq:prox_gL} explicitly.
First, we define a Lagrange function associated with \eqref{eq:prox_gL} as 
\begin{equation*}
\Lc(\mbf{x}, \mbf{z}) = g(x_1) + \frac{1}{2n\eta}\sum_{i=1}^{n}\norms{x_i - \hat{x}_i^{k+1}}^2 + \sum_{i=1}^{n-1}z_i^{\top}(x_{i+1} - x_1), 
\end{equation*}
where $z_i$ ($i = 1, \cdots, n-1)$ are the corresponding Lagrange multipliers.
Hence, the KKT condition of \eqref{eq:prox_gL} can be written as
\begin{equation*}
\arraycolsep=0.2em
\left\{\begin{array}{ll}
&\partial{g}(\bar{x}^{k+1}_1) + \tfrac{1}{n\eta}(\bar{x}^{k+1}_1 - \hat{x}_1^{k+1}) - \sum_{i=1}^{n-1}z_i  = 0, \vspace{1ex}\\
& \tfrac{1}{n\eta}(\bar{x}^{k+1}_{i+1} - \hat{x}_{i+1}^{k+1}) + z_i = 0, \quad \text{for all} \ i  = 1, \cdots, n-1, \vspace{1ex}\\
&\bar{x}^{k+1}_{i+1} = \bar{x}^{k+1}_1, \quad  \text{for all} \ i = 1, \cdots, n-1.
\end{array}\right.
\end{equation*}
Summing up the second line from $i=1$ to $i=n-1$ and combining the result with the last line of this KKT condition, we have 
\begin{equation*}
n\eta \sum_{i=1}^{n-1}z_i = \sum_{i=1}^{n-1}(\hat{x}^{k+1}_{i+1} - \bar{x}_{i+1}^{k+1}) = \sum_{i=2}^{n}\hat{x}^{k+1}_{i} - (n-1)\bar{x}^{k+1}_1.
\end{equation*}
Substituting this expression into the first line of the KKT condition, we get 
\begin{equation}\label{eq:kkt_cond2}
\sum_{i=1}^n\hat{x}_i^{k+1} - (n-1)\bar{x}_1^{k+1}  = \hat{x}_1^{k+1} + n\eta\sum_{i=1}^{n-1}z_i  \in \bar{x}^{k+1}_1 + n\eta\partial{g}(\bar{x}^{k+1}_1).
\end{equation}
This condition is equivalent to $\sum_{i=1}^{n}\hat{x}^{k+1}_i \in  n\bar{x}^{k+1}_1 + n\eta \partial{g}(\bar{x}^{k+1}_1)$.
By introducing a new notation $\bar{x}^{k+1} := \bar{x}^{k+1}_1$, we eventually obtain from the last inclusion that
\begin{equation*}
\begin{array}{l}
\bar{\mbf{x}}^{k+1} := [\bar{x}^{k+1}, \cdots, \bar{x}^{k+1}] \in \R^{np}, \quad \text{where} \quad \bar{x}^{k+1} := \prox_{\eta g}\left( \frac{1}{n}\sum_{i=1}^{n}\hat{x}_i^{k+1} \right).
\end{array}
\end{equation*}
If we introduce a new variable $\tilde{x}^{k+1} := \frac{1}{n}\sum_{i=1}^n \hat{x}_i^{k+1}$, then $\bar{x}^{k+1} := \prox_{\eta g}\big( \tilde{x}^{k+1}  \big)$.

Putting the above steps together, we obtain the following parallel DR variant for solving \eqref{eq:fed_prob}:
\begin{equation}\label{eq:ncvx_dr_scheme}
\arraycolsep=0.3em
\left\{\begin{array}{lcl}
y^{k+1}_i &:= & y_i^k + \alpha(\bar{x}^{k} - x^{k}_i), \quad \forall i\in [n] \vspace{1ex}\\
x_i^{k+1} &:= & \mathrm{prox}_{\eta f_i}(y_i^{k+1}), \quad \forall i\in [n] \vspace{1ex}\\
\hat{x}_i^{k+1} &:= & 2x^{k+1}_i - y^{k+1}_i, \quad \forall i\in [n] \vspace{1ex}\\
\tilde{x}^{k+1} &:= & \frac{1}{n}\sum_{i=1}^n\hat{x}_i^{k+1}, \vspace{1ex}\\
\bar{x}^{k+1} &:= & \prox_{\eta g}\big( \tilde{x}^{k+1} \big).
\end{array}\right.
\end{equation}
This variant can be implemented in parallel. 
It is also known as a special variant of Tseng's splitting method \cite{Bauschke2011} in the convex case.
This variant also covers \texttt{FedSplit} in \cite{pathak2020fedsplit} for FL as a special case when $g = 0$, $f_i$ is convex for all $i\in [n]$, and $\alpha = 2$.
In fact, \texttt{FedSplit} is a variant of the Peaceman-Rachford method, and is different from our algorithms due to $\alpha < 2$.
If $g = 0$ (i.e., without regularizer), then the last line of \eqref{eq:ncvx_dr_scheme} reduces to $\bar{x}^{k+1} =  \tilde{x}^{k+1}$.

\textbf{(d)~Inexact block-coordinate DR variant.} 
Instead of performing update for all users $i \in[n]$ as in \eqref{eq:ncvx_dr_scheme}, we propose a new block-coordinate DR variant, called  \texttt{FedDR}, where only a subset of users $\Sc_k \subseteq [n]$ performs local update then send its local model to server for aggregation. 
For user $i \notin \Sc_k$, the local model is unchanged, i.e., for all $i \notin\Sc_k$: $y^{k+1}_i = y^k_i$, $x^{k+1}_i = x^{k}_i$, and $\hat{x}^{k+1}_i = \hat{x}^{k}_i$. 
Hence, no communication with the server is needed for these users.
Furthermore, we assume that we can only approximate the proximal operator $\prox_{\eta f_i}$ up to a given accuracy for all $i \in [n]$.
In this case, we replace the exact proximal step $x_i^{k} := \prox_{\eta f_i}(y_i^{k})$ by its approximation $x_i^{k} :\approx \prox_{\eta f_i}(y_i^{k})$ up to a given accuracy $\epsilon_{i,k}\geq 0$ such that
\begin{equation}\label{eq:aprox_proxf}
\Vert x_i^{k} - \prox_{\eta f_i}(y_i^{k}) \Vert \leq \epsilon_{i,k}.
\end{equation}
Since $x_i^{k}$ is approximately computed from $\prox_{\eta f_i}(y^{k}_i)$ as in \eqref{eq:aprox_proxf}, we have
\begin{equation}\label{eq:exact_prox}
x_i^{k} = z^{k}_i + e_i^{k}, \quad \text{where}  \quad z_i^{k} := \prox_{\eta f_i}(y^{k}_i) \quad  \text{and} \quad \norms{e_i^{k}} \leq \epsilon_{i,k}.
\end{equation}
We will use this representation of $x_i^{k}$ and $x_i^{k+1}$ in our analysis in the sequel.

More specifically, the update of our inexact block-coordinate DR variant can be described as follows.
\begin{compactitem}
\item \textbf{Initialization:} 
Given an initial vector $x^0\in\dom{F}$ and accuracies $\epsilon_{i,0} \geq 0$.
\item[] Initialize the server with $\bar{x}^0 := x^0$.
\item[] Initialize all users $i\in [n]$ with $y^0_i := x^0$, \ $x_i^0 :\approx \prox_{\eta f_i}(y_i^0)$, and $\hat{x}_i^0 := 2x_i^0 - y_i^0$.
\item \textbf{The $k$-th iteration ($k\geq 0$):} Sample a proper subset $\Sc_k\subseteq [n]$ so that $\Sc_k$ presents as the subset of \textit{active users}.
\item (\textit{\textbf{Communication}}) Each user $i\in\Sc_k$ receives $\bar{x}^k$ from the server.
\item (\textit{\textbf{Local/user update}}) For each user $i\in\Sc_k$, given $\epsilon_{i,k+1} \geq 0$, it updates
   \begin{equation*}
	\left\{\begin{array}{lcl}
	y^{k+1}_i &:= & y_i^k + \alpha(\bar{x}^{k} - x^{k}_i) \vspace{1ex}\\
	x_i^{k+1} &:\approx & \prox_{\eta f_i}(y_i^{k+1})\vspace{1ex}\\
	\hat{x}^{k+1}_i &:= &2x^{k+1}_i - y^{k+1}_i.
	\end{array}\right.
	\vspace{-1ex}
    \end{equation*}
	Each user $i\notin\Sc_k$ does nothing, i.e.:
   \begin{equation*}
	\left\{\begin{array}{lcl}
	y^{k+1}_i &:= & y_i^k  \vspace{1ex}\\
	x_i^{k+1} &:= & x^{k}_i\vspace{1ex}\\
	\hat{x}^{k+1}_i &:= &\hat{x}^k_i.
	\end{array}\right.
    \end{equation*}
\item (\textit{\textbf{Communication}}) Each user $i\in\Sc_k$ sends only $\hat{x}^{k+1}_i$ to the server.
\item (\textit{\textbf{Global/Server update}}) The server aggregates $\tilde{x}^{k+1} := \frac{1}{n}\sum_{i=1}^n \hat{x}^{k+1}_i$, and then compute $\bar{x}^{k+1} := \prox_{\eta g}( \tilde{x}^{k+1} )$.
\end{compactitem}

This scheme is exactly Algorithm~\ref{alg:A1}.
However, the global update on $\tilde{x}^{k+1}$ can be simplified as 
\begin{equation*}
\arraycolsep=0.2em
\begin{array}{lcl}
\tilde{x}^{k+1} & := & \frac{1}{n}\sum_{i=1}^n\hat{x}^{k+1}_i =  \frac{1}{n}\sum_{i\in\Sc_k}^n\hat{x}^{k+1}_i +  \frac{1}{n}\sum_{i\not\in\Sc_k}^n\hat{x}^{k}_i \vspace{1ex}\\
& = &  \frac{1}{n}\sum_{i=1}^n\hat{x}^{k}_i +  \frac{1}{n}\sum_{i\in\Sc_k}^n(\hat{x}^{k+1}_i - \hat{x}_i^k) \vspace{1ex}\\
& = & \tilde{x}^k +  \frac{1}{n}\sum_{i\in\Sc_k}\Delta{\hat{x}}^{k}_i.
\end{array}
\end{equation*}
This step is implemented in Algorithm~\ref{alg:A1}.

To analyze convergence of Algorithm~\ref{alg:A1}, we conceptually introduce $z_i^0$ and $z_i^{k+1}$ for $i\in [n]$ as follows:
\begin{equation}\label{eq:z_i^k_var}
z_i^0 := \prox_{\eta f_i}(y_i^0), \quad
z_i^{k+1} := \begin{cases}
\prox_{\eta f_i}(x_i^{k+1}) & \text{if} \quad i \in \Sc_k \\
z_i^k &\text{if} \quad i \notin \Sc_k,
\end{cases}
\quad \text{and} \quad x_i^k := z_i^k + e_i^k.
\end{equation}
Here, $e_i^k$ is the vector of errors.
Note that $z_i^0$ and $z_i^{k+1}$ do not exist in actual implementation of Algorithm~\ref{alg:A1}, and we only have their approximations $x_i^0$ and $x_i^{k+1}$, respectively.
For any $k\geq 0$, since $x_i^{k+1} = x_i^k$ and $z_i^{k+1} = z_i^k$ for $i\notin\Sc_k$, we have $\Vert x_i^{k+1} - z_i^{k+1}\Vert = \Vert e_i^{k+1}\Vert = \Vert x_i^k - z_i^k\Vert = \Vert e_i^k\Vert$ for $i\notin\Sc_k$.
To guarantee $\Vert e_i^{k+1}\Vert = \Vert e_i^k\Vert$ for $i\notin\Sc_k$, we must choose $\epsilon_{i,k+1} := \epsilon_{i,k}$ for $i\notin\Sc_k$.

Note that in Algorithm~\ref{alg:A1}, we have not specified the choice of $\Sc_k$.
The subset $\Sc_k$ is an iid realization of a random set-valued mapping $\hat{\Sc}$ from $[n]$ to $2^{[n]}$, the collection of all subsets of $[n]$.
Moreover, $\hat{\Sc}$ is a proper sampling scheme in the sense that $\mathbf{p}_i := \mathbb{P}(i \in \hat{S}) > 0$ for all $i\in [n]$ as stated in Assumption~\ref{ass:A3}.
By specifying this probability distribution $\mathbf{p} := (\mathbf{p}_1, \cdots, \mathbf{p}_n)$, we obtain different sampling strategies ranging from uniform to non-uniform as discussed in \cite{richtarik2016parallel}.
Our analysis below holds for arbitrary sampling scheme that satisfies Assumption~\ref{ass:A3}.

\subsection{Further details of comparison}\label{susec:A3_comparison}
We have compared our methods, Algorithm~\ref{alg:A1} and Algorithm~\ref{alg:A2}, with various existing FL methods in the introduction (Section~\ref{sec:intro}).
Here, let us further elaborate this comparison in more detail.
Due to the rapid development of FL in the last few years, it is impossible to review a majority of works in this field. 
Hence, we only select a few algorithms that we find most related to our work in this paper.
\begin{compactitem}
\item \textbf{FedAvg:} FedAvg \cite{mcmahan2017communication} has become a de facto standard federated learning algorithm in practice.
However, it has several limitations as discussed in many papers, including  \cite{Li_MLSYS2020}. 
It is also difficult to analyze convergence of FedAvg, especially in the nonconvex case and heterogeneity settings (both statistical and system heterogeneity).  
Moreover, FedAvg originally specifies SGD with a fixed number of epochs and a fixed learning rate as its local solver, making it less flexible in practice.
Convergence analysis of FedAvg requires additional assumptions apart from the standard smoothness of $f_i$. 
Moreover, its extension to the composite setting, e.g., in \cite{yuan2020federated} only focuses on the convex case, and requires a set of strong assumptions, including bounded heterogeneity.
Since it was proposed, several attempts have been made to analyze convergence of FedAvg in both convex and nonconvex settings, see, e.g., \cite{gorbunov2021local,haddadpour2019local,li2019convergence,lin2018don,woodworth2020local}.

\item \textbf{FedProx:} FedProx proposed in \cite{Li_MLSYS2020}, on the one hand, can be viewed as an extension of FedAvg, but on the other hand, can be cast into a quadratic penalty-type method for the constrained reformulation \eqref{eq:constr_reform} of \eqref{eq:fed_prob}. Indeed, when $g = 0$, from \eqref{eq:constr_reform}, we can define a quadratic penalty function with a penalty parameter $\mu > 0$ as follows:
\begin{equation*}
P_{\mu}(\mbf{x}) := \frac{1}{n}\sum_{i=1}^n f_i(x_i) + \frac{\mu}{2n}\sum_{i=2}^n\norms{x_i - x_{n+1}}^2.
\end{equation*}
First, we apply an alternating minimization strategy to minimize $P_{\mu}$ over $[x_1, \cdots, x_n]$ and then over $x_{n+1}$.
Next, instead using the full minimization over all blocks $x_1, \cdots, x_n$, a block coordinate descent strategy is applied by selecting a subset of blocks $\Sc_k \subseteq [n]$ at random.
Finally, we replace the exact minimization problem of each block $x_i$ by its inexact computation.
This method exactly leads to FedProx in  \cite{Li_MLSYS2020}.
While FedProx can potentially handle a major heterogeneity challenge, it relies on a [local] dissimilarity assumption, which could be difficult to check.
In addition, this assumption limits the application of FedProx.

\item \textbf{Other methods:} \textbf{FedPD} proposed in \cite{zhang2020fedpd} is exactly an augmented Lagrangian method applying to the constrained reformulation  \eqref{eq:constr_reform} of \eqref{eq:fed_prob} when $g = 0$, combining with an alternating minimization strategy as in FedProx.
However, FedPD requires all users to update their computation and flips a biased coin to decide if a global communication is carried out.
This method essentially violates one crucial requirement of FL, which is known as system heterogeneity. 
Another FL method is \textbf{FedSplit} in \cite{pathak2020fedsplit}, which also requires all users to participate into each communication round.
This method also relies on Peaceman-Rachford splitting scheme \cite{Bauschke2011} and is different from our algorithms.
Its convergence analysis is only shown for convex problems in  \cite{pathak2020fedsplit}.
However, as shown in  \cite{pathak2020fedsplit}, this scheme can overcome the fundamental statistical heterogeneity challenge in FL.

\end{compactitem}
In contrast to the above methods, our methods developed in this paper always converges under standard assumptions (i.e., only the $L$-smoothness and boundedness from below).
The proposed methods can handle the majority of challenges in FL, including system and statistical heterogeneity. 
We also allow one to use any local solver to evaluate $\prox_{\eta f_i}$ up to a given adaptive accuracy.
Moreover, our methods can handle convex regularizers (in particular, convex constraints), and can be implemented in an asynchronous manner.

\subsection{Preparatory lemmas}\label{subsec:A2_preparation}
We first present a useful lemma to characterize the relationship between $x^k_i$ and  $y^k_i$ for all iteration $k$.
Then, we prove a sure descent lemma to establish the main results in the main text.

\begin{lemma}\label{lem:yk_xk}
Let $\{(y^k_i,x^k_i, z_i^k) \}$ be generated by Algorithm~\ref{alg:A1} and \eqref{eq:z_i^k_var} starting from $z^0_i := \prox_{\eta f_i}(y^0_i)$ for all $i\in [n]$ as in \eqref{eq:z_i^k_var}.
Then, for all $i \in [n]$ and $k \ge 0$, we have
\begin{equation}\label{eq:yk_xk_relation}
y^{k}_i = z^k_i + \eta \nabla f_i(z^k_i), \quad \text{and} \quad \hat{x}_i^k = 2x_i^k - y_i^k.
\end{equation}
\end{lemma}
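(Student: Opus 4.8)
The plan is to prove both identities in \eqref{eq:yk_xk_relation} by induction on $k$, using the definition of the proximal operator together with the (conceptual) recursions \eqref{eq:z_i^k_var} and the update rules of Algorithm~\ref{alg:A1}. The second identity $\hat{x}_i^k = 2x_i^k - y_i^k$ is immediate: at initialization we set $\hat{x}_i^0 := 2x_i^0 - y_i^0$, and at every iteration either $i\in\Sc_k$, in which case Step~\ref{step:o3} sets $\hat{x}_i^{k+1} := 2x_i^{k+1} - y_i^{k+1}$ directly, or $i\notin\Sc_k$, in which case $\hat{x}_i^{k+1}=\hat{x}_i^k$, $x_i^{k+1}=x_i^k$, and $y_i^{k+1}=y_i^k$, so the relation is preserved. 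So the real content is the first identity $y_i^k = z_i^k + \eta\nabla f_i(z_i^k)$.

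For the first identity, the base case $k=0$ follows from $z_i^0 := \prox_{\eta f_i}(y_i^0)$: since $0<\eta<1/L$, the proximal subproblem \eqref{eq:prox_oper} is strongly convex, so its unique minimizer $z_i^0$ satisfies the stationarity condition $\nabla f_i(z_i^0) + \tfrac{1}{\eta}(z_i^0 - y_i^0) = 0$, which rearranges to $y_i^0 = z_i^0 + \eta\nabla f_i(z_i^0)$. For the inductive step, suppose the identity holds at iteration $k$. If $i\notin\Sc_k$, then $y_i^{k+1}=y_i^k$ and $z_i^{k+1}=z_i^k$, so the identity carries over trivially. If $i\in\Sc_k$, then by \eqref{eq:z_i^k_var} we have $z_i^{k+1} := \prox_{\eta f_i}(y_i^{k+1})$ — note this is the \emph{new} iterate $y_i^{k+1}$ at which the (conceptual, exact) prox is evaluated — and again the first-order optimality condition of the strongly convex prox subproblem gives $\nabla f_i(z_i^{k+1}) + \tfrac{1}{\eta}(z_i^{k+1} - y_i^{k+1}) = 0$, i.e. $y_i^{k+1} = z_i^{k+1} + \eta\nabla f_i(z_i^{k+1})$, as desired.

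The only subtlety — and the one place to be careful — is aligning the indexing of the conceptual exact iterates $z_i^k$ in \eqref{eq:z_i^k_var} with the approximate iterates $x_i^k$ actually computed by the algorithm. In \eqref{eq:z_i^k_var} the exact prox defining $z_i^{k+1}$ is taken at $y_i^{k+1}$ (matching the algorithm's $x_i^{k+1}:\approx\prox_{\eta f_i}(y_i^{k+1})$), and $x_i^k = z_i^k + e_i^k$ with $\|e_i^k\|\le\epsilon_{i,k}$; the lemma statement \eqref{eq:yk_xk_relation} is a statement purely about the exact sequence $z_i^k$, so the errors $e_i^k$ play no role here. One should also double-check that the notational clash in \eqref{eq:z_i^k_var} (where ``$\prox_{\eta f_i}(x_i^{k+1})$'' appears but clearly ought to read $\prox_{\eta f_i}(y_i^{k+1})$ to be consistent with the base case and with Step~\ref{step:o3}) is resolved in favor of $y_i^{k+1}$; with that reading the induction goes through cleanly and there is no genuine obstacle — the proof is essentially just ``unfold the prox optimality condition and track which users were updated.''
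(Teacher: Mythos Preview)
Your proposal is correct and follows essentially the same route as the paper's proof: induction on $k$, with the base case and the active-user case both coming from the first-order optimality condition of the prox subproblem, and the inactive-user case following because all three sequences $y_i^k$, $z_i^k$, $\hat{x}_i^k$ stay frozen. You also correctly flag the typo in \eqref{eq:z_i^k_var} (it should read $\prox_{\eta f_i}(y_i^{k+1})$, not $\prox_{\eta f_i}(x_i^{k+1})$), which the paper's own proof implicitly assumes as well when it invokes ``the optimality condition of \eqref{eq:z_i^k_var}'' to get $\nabla f_i(z_i^{k+1}) + \tfrac{1}{\eta}(z_i^{k+1} - y_i^{k+1}) = 0$.
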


\begin{proof}
We prove \eqref{eq:yk_xk_relation} by induction. 
For $k = 0$, due to the initialization step, Step~\ref{step:i0} of Algorithm~\ref{alg:A1} and \eqref{eq:z_i^k_var} with $z_i^0 := \prox_{\eta f_i}(y_i^0)$, we have $y^0_i = z^0_i + \eta \nabla f_i(z^0_i)$ and $\hat{x}_i^0 = 2x_i^0 - y_i^0$ as in \eqref{eq:yk_xk_relation}.

Suppose that \eqref{eq:yk_xk_relation} holds for all $k\geq 0$, i.e., $y^{k}_i = z^k_i + \eta \nabla f_i(z^k_i)$ and $\hat{x}_i^k = 2x_i^k - y_i^k$.
We will show that \eqref{eq:yk_xk_relation} holds for $k+1$, i.e. $y^{k+1}_i = z^{k+1}_i + \eta \nabla f_i(z^{k+1}_i)$ and $\hat{x}^{k+1}_i = 2x_i^{k+1} - y_i^{k+1}$ for all $i\in[n]$, respectively. 
We have two cases:
\begin{compactitem}
\item For any user $i \in\Sc_k$, from the optimality condition of \eqref{eq:z_i^k_var}, we have
\begin{equation*}
\nabla f_i(z^{k+1}_i) + \tfrac{1}{\eta}(z^{k+1}_i - y^{k+1}_i) = 0 \quad \Rightarrow \quad y^{k+1}_i = z^{k+1}_i + \eta \nabla f_i(z^{k+1}_i).
\end{equation*}
Moreover, $\hat{x}^{k+1}_i = 2x_i^{k+1} - y_i^{k+1}$ due to Step~\ref{step:o3} of Algoritihm~\ref{alg:A1}.
\item For any user $i \notin \Sc_k$, since $z_i^{k+1} := z_i^k$ due to \eqref{eq:z_i^k_var}, $x^{k+1}_i = x^k_i$, and $y^{k+1}_i = y^k_i$, we can also write $y^{k+1}_i$ as
\begin{equation*}
y^{k+1}_i = y^{k}_i \overset{(*)}{=} z^k_i + \eta \nabla f_i(z^k_i) = z^{k+1}_i + \eta \nabla f_i(z^{k+1}_i).
\end{equation*}
Here, $(*)$ follows from our induction assumption.
Moreover, for $i\notin\Sc_k$, we maintain $\hat{x}^{k+1}_i = \hat{x}_i^k$ in Algoritihm~\ref{alg:A1}.
By our induction assumption, and $x^{k+1}_i = x^k_i$ and $y^{k+1}_i = y^k_i$, we have $\hat{x}^{k+1}_i = \hat{x}_i^k = 2x_i^k - y_i^k = 2x_i^{k+1} - y_i^{k+1}$.
\end{compactitem}
In summary, both cases above imply that $y^{k+1}_i = z^{k+1}_i + \eta \nabla f_i(z^{k+1}_i)$ and $\hat{x}_i^k = 2x_i^k - y_i^k$ hold for all $i \in [n]$, which proves \eqref{eq:yk_xk_relation}.
\end{proof}

Our next lemma is to bound $\norms{\bar{x}^k - x^{k}_i}^2$ in terms of $\norms{x^{k+1}_i - x^k_i }^2$.
\begin{lemma}\label{lem:xbar_xk}
Let $\{(\bar{x}^k_i, z_i^k, x^k_i) \}$ be generated by Algorithm~\ref{alg:A1} and \eqref{eq:z_i^k_var}, and $\alpha > 0$.
Then, for all $i\in\Sc_k$ and any $\gamma_1 > 0$, we have
\begin{equation}\label{eq:xbar_xk_relation}
\arraycolsep=0.2em
\begin{array}{lcl}
\norms{\bar{x}^k - x^{k}_i}^2 & \leq & \frac{2(1 + \eta^2L^2)}{\alpha^2} \Big[ (1+\gamma_1)\norms{x_i^{k+1} - x_i^k}^2 +  \frac{2(1+\gamma_1)}{\gamma_1}\big( \norms{e^{k+1}_i}^2 + \norms{e_i^k}^2\big) \Big].
\end{array}
\end{equation}
In particular, if $e_i^k = e_i^{k+1} = 0$, then $\norms{\bar{x}^k - x^{k}_i}^2  \leq \frac{2(1 + \eta^2L^2)}{\alpha^2} \norms{x_i^{k+1} - x_i^k}^2$.
\end{lemma}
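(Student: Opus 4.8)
The plan is to turn the local $\yb$-update for an active user into an identity for $\bar{x}^k - x^k_i$, then pass to the "exact" proxy variables $z^k_i$ via Lemma~\ref{lem:yk_xk}, use $L$-smoothness, and finally trade the $z$-differences for $x$-differences by absorbing the proximal errors $e^k_i$.

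First I would fix $i\in\Sc_k$ and use Step~\ref{step:o3} of Algorithm~\ref{alg:A1}, $y^{k+1}_i = y^k_i + \alpha(\bar{x}^k - x^k_i)$, to write $\bar{x}^k - x^k_i = \tfrac1\alpha(y^{k+1}_i - y^k_i)$, hence $\norms{\bar{x}^k - x^k_i}^2 = \tfrac{1}{\alpha^2}\norms{y^{k+1}_i - y^k_i}^2$. Next, since $i\in\Sc_k$, the proxy variables $z^k_i$ and $z^{k+1}_i$ are well-defined by \eqref{eq:z_i^k_var}, and Lemma~\ref{lem:yk_xk} gives $y^k_i = z^k_i + \eta\nabla f_i(z^k_i)$ and $y^{k+1}_i = z^{k+1}_i + \eta\nabla f_i(z^{k+1}_i)$. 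Subtracting, $y^{k+1}_i - y^k_i = (z^{k+1}_i - z^k_i) + \eta\big(\nabla f_i(z^{k+1}_i) - \nabla f_i(z^k_i)\big)$, so by $\norms{a+b}^2 \le 2\norms{a}^2 + 2\norms{b}^2$ and Assumption~\ref{ass:A2},
\[
\norms{y^{k+1}_i - y^k_i}^2 \le 2\norms{z^{k+1}_i - z^k_i}^2 + 2\eta^2 L^2\norms{z^{k+1}_i - z^k_i}^2 = 2(1+\eta^2 L^2)\norms{z^{k+1}_i - z^k_i}^2 .
\]
Combining the last two displays yields $\norms{\bar{x}^k - x^k_i}^2 \le \tfrac{2(1+\eta^2 L^2)}{\alpha^2}\norms{z^{k+1}_i - z^k_i}^2$.

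It then remains to replace $z^{k+1}_i - z^k_i$ by $x^{k+1}_i - x^k_i$. Using $x^k_i = z^k_i + e^k_i$ from \eqref{eq:z_i^k_var}, we have $z^{k+1}_i - z^k_i = (x^{k+1}_i - x^k_i) - (e^{k+1}_i - e^k_i)$, so Young's inequality with parameter $\gamma_1>0$ gives $\norms{z^{k+1}_i - z^k_i}^2 \le (1+\gamma_1)\norms{x^{k+1}_i - x^k_i}^2 + (1+\tfrac1{\gamma_1})\norms{e^{k+1}_i - e^k_i}^2$, and then $\norms{e^{k+1}_i - e^k_i}^2 \le 2\norms{e^{k+1}_i}^2 + 2\norms{e^k_i}^2$ together with $2(1+\tfrac1{\gamma_1}) = \tfrac{2(1+\gamma_1)}{\gamma_1}$ produces \eqref{eq:xbar_xk_relation}. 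For the special case $e^k_i = e^{k+1}_i = 0$, the last step is unnecessary since $z^{k+1}_i - z^k_i = x^{k+1}_i - x^k_i$ exactly, giving $\norms{\bar{x}^k - x^k_i}^2 \le \tfrac{2(1+\eta^2 L^2)}{\alpha^2}\norms{x^{k+1}_i - x^k_i}^2$.

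The argument is essentially a chain of elementary inequalities, so there is no real obstacle; the only point requiring care is the bookkeeping of the error vectors --- making sure Lemma~\ref{lem:yk_xk} is invoked at both $k$ and $k+1$ for the \emph{same} active index $i\in\Sc_k$ (so that $z^{k+1}_i$ is the genuine prox of $y^{k+1}_i$, not a carried-over value), and choosing the split in Young's inequality so that the constant $\tfrac{2(1+\gamma_1)}{\gamma_1}$ comes out exactly as stated rather than a looser one.
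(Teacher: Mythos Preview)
Your proposal is correct and follows essentially the same approach as the paper: rewrite $\bar{x}^k - x^k_i$ via the $y$-update, pass to the $z$-variables through Lemma~\ref{lem:yk_xk}, apply $\norms{a+b}^2\le 2\norms{a}^2+2\norms{b}^2$ with $L$-smoothness, and then use Young's inequality (with parameter $\gamma_1$) together with $\norms{e^{k+1}_i - e^k_i}^2 \le 2\norms{e^{k+1}_i}^2 + 2\norms{e^k_i}^2$ to arrive at \eqref{eq:xbar_xk_relation}. The paper's proof is line-for-line the same, including the observation that $\gamma_1$ can be dropped when the errors vanish.
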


\begin{proof}
From the update of $y^{k+1}_i$ and Lemma~\ref{lem:yk_xk}, for $i \in \Sc_k$, we have
\begin{equation*} 
\bar{x}^{k} - x^{k}_i = \frac{1}{\alpha}(y^{k+1}_i - y^{k}_i) \overset{\eqref{eq:yk_xk_relation}}{=} \frac{1}{\alpha}(z^{k+1}_i - z^{k}_i) + \frac{\eta}{\alpha }(\nabla f_i(z^{k+1}_i) - \nabla f_i(z^{k}_i)).
\end{equation*}
Using this expression and $\norms{a+b}^2 \leq 2\norms{a}^2 + 2\norms{b}^2$, we can bound $\norms{\bar{x}^k - x^{k}_i}^2$ for all $i\in\Sc_k$ as
\begin{equation*}
\arraycolsep=0.2em
\begin{array}{lcl}
\norms{\bar{x}^k - x^{k}_i}^2 &= & \norms{\frac{1}{\alpha}(z^k_i - z^{k+1}_i) + \frac{\eta}{\alpha}(\nabla f_i(z^{k}_i) - \nabla f_i(z^{k+1}_i)}^2 \vspace{1ex}\\
& \leq &  \frac{2}{\alpha^2}\norms{z^k_i - z^{k+1}_i }^2+ \frac{2\eta^2}{\alpha^2}\norms{\nabla f_i(z^{k}_i) - \nabla f_i(z^{k+1}_i)}^2\vspace{1ex}\\
& \leq & \frac{2}{\alpha^2}\norms{z^{k+1}_i - z^k_i }^2+ \frac{2\eta^2 L^2}{\alpha^2}\norms{z^{k+1}_i - z^k_i }^2  \quad (\text{by the $L$-smoothness of $f_i$}) \vspace{1ex}\\
& =  & \frac{2(1+ \eta^2L^2)}{\alpha^2}\norms{x_i^{k+1} - x_i^k - e^{k+1}_i + e_i^k}^2 \quad \text{(by \eqref{eq:z_i^k_var})} \vspace{1ex}\\
&\leq & \frac{2(1 + \eta^2L^2)}{\alpha^2}\Big[ (1+\gamma_1)\norms{x_i^{k+1} - x_i^k}^2 +  \frac{2(1+\gamma_1)}{\gamma_1}\big( \norms{e^{k+1}_i}^2 + \norms{e_i^k}^2\big) \Big].
\end{array}
\end{equation*}
Here, we have used Young's inequality twice in the last inequality.
This proves  \eqref{eq:xbar_xk_relation}.
When $e_i^k = e_i^{k+1} = 0$, we can set $\gamma_1 = 0$ in the above estimate to obtain the last statement.
\end{proof}

We still need to link the norm $\sum_{i=1}^n\norms{x^k_i - \bar{x}^k}^2$ to the norm of gradient mapping $\norms{\Gc_{\eta}(\bar{x}^k)}$.
\begin{lemma}\label{le:grad_norm_bound}
Let $\{(\bar{x}^k_i, x^k_i, z_i^k) \}$ be generated by Algorithm~\ref{alg:A1} and \eqref{eq:z_i^k_var}, and $\alpha > 0$ and $\Gc_{\eta}$ be defined by \eqref{eq:grad_mapping}.
Then, for any $\gamma_2 > 0$, we have
\begin{equation}\label{eq:grad_mapping_bound}
\hspace{-0.1ex}
\Vert \Gc_{\eta}(\bar{x}^k) \Vert^2  \leq  \frac{1}{n\eta^2}\bigg\{ (1 + \eta L)^2 \sum_{i=1}^n\Big[ (1+\gamma_2) \norms{x_i^k  - \bar{x}^k}^2 + \frac{(1+\gamma_2)}{\gamma_2}\norms{e_i^k}^2 \Big]  \bigg\}.
\hspace{-2ex}
\end{equation}
In particular, if $e_i^k = 0$ for all $i\in [n]$, then we have $\Vert \Gc_{\eta}(\bar{x}^k) \Vert^2 \leq  \frac{(1 + \eta L)^2}{n\eta^2}  \sum_{i=1}^n \norms{x_i^k  - \bar{x}^k}^2$.
\end{lemma}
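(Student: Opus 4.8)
The plan is to estimate $\norms{\Gc_{\eta}(\bar{x}^k)}$ directly from its definition \eqref{eq:grad_mapping}, exploiting that the server iterate satisfies $\bar{x}^k = \prox_{\eta g}(\tilde{x}^k)$ with $\tilde{x}^k = \frac{1}{n}\sum_{i=1}^n\hat{x}_i^k$, and then to square and redistribute the error terms via Young's inequality. First I would write
\[
\eta\Gc_{\eta}(\bar{x}^k) = \bar{x}^k - \prox_{\eta g}\big(\bar{x}^k - \eta\nabla f(\bar{x}^k)\big) = \prox_{\eta g}(\tilde{x}^k) - \prox_{\eta g}\big(\bar{x}^k - \eta\nabla f(\bar{x}^k)\big),
\]
and, since $g$ is convex, $\prox_{\eta g}$ is nonexpansive, so $\eta\norms{\Gc_{\eta}(\bar{x}^k)} \le \norms{\tilde{x}^k - \bar{x}^k + \eta\nabla f(\bar{x}^k)}$.

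The next step is to rewrite the right-hand side using the structure of $\hat{x}_i^k$. By Lemma~\ref{lem:yk_xk}, $\hat{x}_i^k = 2x_i^k - y_i^k$ and $y_i^k = z_i^k + \eta\nabla f_i(z_i^k)$, while $x_i^k = z_i^k + e_i^k$ by \eqref{eq:z_i^k_var}; combining these gives $\hat{x}_i^k - \bar{x}^k = (x_i^k - \bar{x}^k) + e_i^k - \eta\nabla f_i(z_i^k)$. Averaging over $i$ and adding $\eta\nabla f(\bar{x}^k) = \frac{\eta}{n}\sum_{i=1}^n\nabla f_i(\bar{x}^k)$ yields
\[
\tilde{x}^k - \bar{x}^k + \eta\nabla f(\bar{x}^k) = \frac{1}{n}\sum_{i=1}^n\Big[(x_i^k - \bar{x}^k) + e_i^k + \eta\big(\nabla f_i(\bar{x}^k) - \nabla f_i(z_i^k)\big)\Big].
\]
Then I would apply the triangle inequality together with the $L$-smoothness of $f_i$ (Assumption~\ref{ass:A2}) and $z_i^k = x_i^k - e_i^k$ to bound $\norms{\nabla f_i(\bar{x}^k) - \nabla f_i(z_i^k)} \le L\norms{x_i^k - \bar{x}^k} + L\norms{e_i^k}$, obtaining $\eta\norms{\Gc_{\eta}(\bar{x}^k)} \le \frac{1+\eta L}{n}\sum_{i=1}^n\big(\norms{x_i^k - \bar{x}^k} + \norms{e_i^k}\big)$.

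Finally I would square both sides, use Cauchy--Schwarz $\big(\sum_{i=1}^n a_i\big)^2 \le n\sum_{i=1}^n a_i^2$ and Young's inequality $\big(\norms{x_i^k - \bar{x}^k} + \norms{e_i^k}\big)^2 \le (1+\gamma_2)\norms{x_i^k - \bar{x}^k}^2 + \tfrac{1+\gamma_2}{\gamma_2}\norms{e_i^k}^2$, and divide by $\eta^2$ to arrive at \eqref{eq:grad_mapping_bound}; the special case $e_i^k = 0$ follows by taking $\gamma_2 \to 0$, i.e., skipping the Young step. I do not anticipate a genuine obstacle: the proof is short and the only care required is the bookkeeping of the error vectors $e_i^k$ and making sure the nonexpansiveness step is applied to $\prox_{\eta g}(\tilde{x}^k)$ rather than to $\bar{x}^k$ viewed abstractly.
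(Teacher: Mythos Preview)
Your proposal is correct and follows essentially the same route as the paper: both use the nonexpansiveness of $\prox_{\eta g}$ to reduce to bounding $\norms{\tilde{x}^k - \bar{x}^k + \eta\nabla f(\bar{x}^k)}$, then expand $\tilde{x}^k$ via $\hat{x}_i^k = 2x_i^k - y_i^k$ and Lemma~\ref{lem:yk_xk}, apply $L$-smoothness, and finish with Cauchy--Schwarz plus Young's inequality. The only cosmetic difference is that the paper keeps the expression in terms of $z_i^k$ a bit longer (writing $2x_i^k - z_i^k - \bar{x}^k$) before substituting $x_i^k = z_i^k + e_i^k$, whereas you substitute immediately; the resulting bound is identical.
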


\begin{proof}
From Step~\ref{step:o5} of Algorithm~\ref{alg:A1} and \eqref{eq:yk_xk_relation}, we have
\begin{equation}\label{eq:lma5_proof1}
\arraycolsep=0.2em
\begin{array}{lcl}
\tilde{x}^k \overset{\tiny\text{Step~\tiny\ref{step:o5}}}{ =}  \frac{1}{n}\sum_{i=1}^n\hat{x}^k_i \overset{\tiny\eqref{eq:yk_xk_relation}}{=} \frac{1}{n}\sum_{i=1}^n(2x^k_i - y^k_i) \overset{\tiny\eqref{eq:yk_xk_relation}}{=} \frac{1}{n}\sum_{i=1}^n(2x_i^k - z_i^k - \eta\nabla{f_i}(z_i^k)).
\end{array}
\end{equation}
From the definition \eqref{eq:grad_mapping} of $\Gc_{\eta}$ and the update of $\bar{x}^k$, we have
\begin{equation*}
\arraycolsep=0.2em
\begin{array}{lcl}
\eta\norms{\Gc_{\eta}(\bar{x}^k)} & \overset{\tiny\eqref{eq:grad_mapping}}{=} & \Vert \bar{x}^k - \prox_{\eta g}(\bar{x}^k - \eta\nabla{f}(\bar{x}^k)) \Vert \vspace{1ex}\\
&= & \Vert \prox_{\eta g}\big( \tilde{x}^k \big) -  \prox_{\eta g}(\bar{x}^k - \eta\nabla{f}(\bar{x}^k)) \Vert \vspace{1ex}\\
&\leq & \Vert  \tilde{x}^k - \bar{x}^k + \eta \nabla{f}(\bar{x}^k) \Vert \vspace{1ex}\\
&\overset{\tiny\eqref{eq:lma5_proof1}}{=} & \frac{1}{n}\Vert \sum_{i=1}^n[ (2x_i^k - z_i^k - \bar{x}^k) + \eta(\nabla{f_i}(\bar{x}^k) - \nabla{f_i}(z_i^k)]\Vert,
\end{array}
\end{equation*}
where we have used the non-expansive property of $\prox_{g}$ in the first inequality and $\nabla{f}(\bar{x}^k) = \frac{1}{n}\sum_{i=1}^n\nabla{f_i}(\bar{x}^k)$ in the last line.

Finally, using the $L$-smoothness of $f_i$, we can derive from the last inequality that
\begin{equation*}
\arraycolsep=0.2em
\begin{array}{lcl}
\eta^2\Vert\Gc_{\eta}(\bar{x}^k)\Vert^2 &\leq & \frac{1}{n^2}\Big[ \sum_{i=1}^n \big( \norms{2x_i^k  - z_i^k - \bar{x}^k} + \eta L\norms{z_i^k - \bar{x}^k} \big) \Big]^2 \vspace{1ex}\\
&\leq & \frac{1}{n}  \sum_{i=1}^n \big( \norms{2x_i^k  - z_i^k - \bar{x}^k} + \eta L\norms{z_i^k - \bar{x}^k} \big)^2  \vspace{1ex}\\
&\leq & \frac{1}{n}  \sum_{i=1}^n \big[ (1+\eta L) \norms{x_i^k  - \bar{x}^k} + (1+\eta L)\norms{e_i^k} \big]^2 \vspace{1ex}\\
&\leq & \frac{1}{n} (1+\eta L)^2 \sum_{i=1}^n \big[ (1+\gamma_2) \norms{x_i^k  - \bar{x}^k}^2 + \frac{(1+\gamma_2)}{\gamma_2}\norms{e_i^k}^2 \big],
\end{array}
\end{equation*}
which proves \eqref{eq:grad_mapping_bound}, where $\gamma_2 > 0$.
Here, we have used Young's inequality in the second and the last inequalities, and $x_i^k = z_i^k + e_i^k$ from \eqref{eq:z_i^k_var} in the third line.
\end{proof}

To analyze convergence of Algorithm~\ref{alg:A1}, we introduce the following Lyapunov function:
\begin{equation}\label{eq:lyapunov_func}
V_{\eta}^{k}(\bar{x}^{k})  :=   g(\bar{x}^{k}) + \frac{1}{n}\sum_{i=1}^n \Big[ f_i(x^{k}_i) +  \iprods{\nabla f_i(x^{k}_i), \bar{x}^{k} - x^{k}_i} + \frac{1}{2\eta}\norms{\bar{x}^{k} - x^{k}_i}^2 \Big]. 
\end{equation}
First, we prove the following lemma.

\begin{lemma}\label{le:V_properties_inexact}
Suppose that Assumption~\ref{ass:A1}, \ref{ass:A2}, and \ref{ass:A3} hold.
Let $\{ (z_i^k, x_i^k, y_i^k, \hat{x}^{k}_i, \bar{x}^{k})\}$ be generated by Algorithm~\ref{alg:A1} and \eqref{eq:z_i^k_var}. 
Let $V_{\eta}^{k}$ be defined by \eqref{eq:lyapunov_func}.
Then, for any $\gamma_3 > 0$, we have
\begin{equation}\label{eq:V_pro2_inexact}
\hspace{-0ex}
\arraycolsep=0.2em
\begin{array}{lcl}
V_{\eta}^{k+1}(\bar{x}^{k+1}) & \leq & g(\bar{x}^k) + \frac{1}{n}\sum_{i=1}^n \big[ f_i(x^{k+1}_i) +  \iprods{\nabla f_i(x^{k+1}_i), \bar{x}^{k} - x^{k+1}_i} + \frac{1}{2\eta} \norms{\bar{x}^{k} - x^{k+1}_i}^2 \big] \vspace{1ex}\\
&& - {~} \frac{(1 - \gamma_3)}{2\eta}\norms{\bar{x}^{k+1} - \bar{x}^k}^2 +  \frac{(1+\eta^2L^2)}{\gamma_3\eta}E_{k+1}^2,
\end{array}
\hspace{-3ex}
\end{equation}
where $E_{k+1}^2 := \frac{1}{n}\sum_{i\notin\Sc_k}\norms{e_i^k}^2 + \frac{1}{n}\sum_{i\in\Sc_k}\norms{e_i^{k+1}}^2$.
If $E_{k+1} = 0$, then we allow $\gamma_3 = 0$.
\end{lemma}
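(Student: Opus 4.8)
The plan is to recognize $\bar{x}^{k+1}$ as an \emph{approximate minimizer} of the $\tfrac{1}{\eta}$-strongly convex auxiliary function
\[
\Phi_{k+1}(x) := g(x) + \frac{1}{n}\sum_{i=1}^n\Big[ f_i(x^{k+1}_i) + \iprods{\nabla f_i(x^{k+1}_i),\, x - x^{k+1}_i} + \frac{1}{2\eta}\norms{x - x^{k+1}_i}^2 \Big],
\]
which is $\tfrac1\eta$-strongly convex because it is the sum of the convex $g$ and a $\tfrac1\eta$-strongly convex quadratic. Observe that $V_\eta^{k+1}(\bar{x}^{k+1}) = \Phi_{k+1}(\bar{x}^{k+1})$ and that the bracketed first line on the right-hand side of \eqref{eq:V_pro2_inexact} is precisely $\Phi_{k+1}(\bar{x}^k)$. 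Hence \eqref{eq:V_pro2_inexact} reduces to controlling $\Phi_{k+1}(\bar{x}^{k+1})$ in terms of $\Phi_{k+1}(\bar{x}^k)$.

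\emph{Step 1 (rewrite the server input).} By Lemma~\ref{lem:yk_xk}, $\hat{x}^{k+1}_i = 2x^{k+1}_i - y^{k+1}_i$ and $y^{k+1}_i = z^{k+1}_i + \eta\nabla f_i(z^{k+1}_i)$ for all $i\in[n]$ (these hold for $i\notin\Sc_k$ as well, since nothing changed). Using $x^{k+1}_i = z^{k+1}_i + e^{k+1}_i$ from \eqref{eq:z_i^k_var}, one gets, after summation,
\[
\tilde{x}^{k+1} = \frac{1}{n}\sum_{i=1}^n\hat{x}^{k+1}_i = \frac1n\sum_{i=1}^n x^{k+1}_i - \frac{\eta}{n}\sum_{i=1}^n\nabla f_i(x^{k+1}_i) + r^{k+1},
\]
where $r^{k+1} := \frac1n\sum_{i=1}^n\big(e^{k+1}_i - \eta[\nabla f_i(z^{k+1}_i) - \nabla f_i(x^{k+1}_i)]\big)$. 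The prox optimality of $\bar{x}^{k+1} = \prox_{\eta g}(\tilde{x}^{k+1})$ reads $\tfrac1\eta(\tilde{x}^{k+1} - \bar{x}^{k+1})\in\partial g(\bar{x}^{k+1})$. Adding the gradient of the smooth part of $\Phi_{k+1}$ at $\bar{x}^{k+1}$, namely $\tfrac1n\sum_i\nabla f_i(x^{k+1}_i) + \tfrac1\eta\big(\bar{x}^{k+1} - \tfrac1n\sum_i x^{k+1}_i\big)$, and substituting the displayed expression for $\tilde{x}^{k+1}$, all terms cancel except $\tfrac1\eta r^{k+1}$, so $\tfrac1\eta r^{k+1}\in\partial\Phi_{k+1}(\bar{x}^{k+1})$.

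\emph{Step 2 (strong convexity and Young).} Evaluating the strong convexity inequality for $\Phi_{k+1}$ at the point $\bar{x}^{k+1}$ (with subgradient $\tfrac1\eta r^{k+1}$) and the point $x = \bar{x}^k$ gives
\[
\Phi_{k+1}(\bar{x}^k) \ge \Phi_{k+1}(\bar{x}^{k+1}) + \tfrac1\eta\iprods{r^{k+1},\, \bar{x}^k - \bar{x}^{k+1}} + \tfrac{1}{2\eta}\norms{\bar{x}^k - \bar{x}^{k+1}}^2.
\]
Rearranging and applying Young's inequality $\tfrac1\eta\iprods{r^{k+1}, \bar{x}^{k+1} - \bar{x}^k} \le \tfrac{\gamma_3}{2\eta}\norms{\bar{x}^{k+1} - \bar{x}^k}^2 + \tfrac{1}{2\gamma_3\eta}\norms{r^{k+1}}^2$ yields
\[
\Phi_{k+1}(\bar{x}^{k+1}) \le \Phi_{k+1}(\bar{x}^k) - \tfrac{1-\gamma_3}{2\eta}\norms{\bar{x}^{k+1} - \bar{x}^k}^2 + \tfrac{1}{2\gamma_3\eta}\norms{r^{k+1}}^2.
\]

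\emph{Step 3 (bound the error term).} By Cauchy--Schwarz, $\norms{r^{k+1}}^2 \le \frac1n\sum_{i=1}^n\norms{e^{k+1}_i - \eta[\nabla f_i(z^{k+1}_i) - \nabla f_i(x^{k+1}_i)]}^2$; then $\norms{a+b}^2\le 2\norms{a}^2 + 2\norms{b}^2$ together with the $L$-smoothness of $f_i$ (so $\norms{\nabla f_i(z^{k+1}_i) - \nabla f_i(x^{k+1}_i)}\le L\norms{e^{k+1}_i}$) bounds each summand by $2(1+\eta^2L^2)\norms{e^{k+1}_i}^2$. Since $e^{k+1}_i = e^k_i$ for $i\notin\Sc_k$, we have $\frac1n\sum_{i=1}^n\norms{e^{k+1}_i}^2 = E_{k+1}^2$, hence $\norms{r^{k+1}}^2 \le 2(1+\eta^2L^2)E_{k+1}^2$ and $\tfrac{1}{2\gamma_3\eta}\norms{r^{k+1}}^2 \le \tfrac{(1+\eta^2L^2)}{\gamma_3\eta}E_{k+1}^2$, which is exactly \eqref{eq:V_pro2_inexact}. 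When $E_{k+1} = 0$ we have $r^{k+1} = 0$ and the inner-product term vanishes, so $\gamma_3 = 0$ is admissible. The only delicate point is the exact cancellation in Step~1 identifying $\tfrac1\eta r^{k+1}$ as a subgradient of $\Phi_{k+1}$ at $\bar{x}^{k+1}$; the rest is routine bookkeeping.
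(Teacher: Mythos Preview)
Your proof is correct and follows essentially the same approach as the paper, just repackaged more conceptually. The paper separates the estimate into the $g$-part (bounded via convexity of $g$ and the subgradient $\tfrac{1}{\eta}(\tilde{x}^{k+1}-\bar{x}^{k+1})$) and the quadratic/linear part (expanded algebraically), then combines them to obtain the same inner-product error term $\tfrac{1}{n\eta}\sum_i\iprods{e_i^{k+1}+\eta\xi_i^{k+1},\bar{x}^{k+1}-\bar{x}^k}$ before applying Young's inequality; your $r^{k+1}$ is exactly $\tfrac{1}{n}\sum_i(e_i^{k+1}+\eta\xi_i^{k+1})$, and the final error bound is identical. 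Your observation that this is just the $\tfrac{1}{\eta}$-strong-convexity inequality for $\Phi_{k+1}$ at the approximate minimizer $\bar{x}^{k+1}$ is a cleaner way to organize the same algebra, and the ``delicate cancellation'' in Step~1 checks out exactly.
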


\begin{proof}
First, from $\bar{x}^{k+1} = \prox_{\eta g}\big( \tilde{x}^{k+1} \big)$ at Step~\ref{step:o5} of Algorithm~\ref{alg:A1}, we have $\frac{1}{\eta}(\tilde{x}^{k+1} - \bar{x}^{k+1}) \in  \partial{g}(\bar{x}^{k+1})$. 
Using this expression and the convexity of $g$, we obtain
\begin{equation}\label{eq:proof_est1_inexact}
\arraycolsep=0.2em
\begin{array}{lcl}
g(\bar{x}^{k+1}) &\leq & g(\bar{x}^k) + \frac{1}{\eta}\iprods{\tilde{x}^{k+1} - \bar{x}^k, \bar{x}^{k+1} - \bar{x}^k}  -  \frac{1}{\eta}\norms{\bar{x}^{k+1} - \bar{x}^k}^2.
\end{array}
\end{equation}
Next, since $y_i^{k+1} = z^{k+1}_i + \eta\nabla{f_i}(z^{k+1}_i)$ due to \eqref{eq:yk_xk_relation} and $x^{k+1}_i = z^{k+1}_i + e_i^{k+1}$ due to \eqref{eq:z_i^k_var}, we have 
\begin{equation}\label{eq:proof_est2_inexact}
\arraycolsep=0.2em
\begin{array}{lcl}
x_i^{k+1} + \eta\nabla{f_i}(x_i^{k+1}) & \overset{\tiny\eqref{eq:z_i^k_var}}{=} & z_i^{k+1} + \eta \nabla{f_i}(z_i^{k+1}) + e_i^{k+1} + \eta(\nabla{f_i}(x_i^{k+1}) - \nabla{f_i}(z_i^{k+1})) \vspace{1ex}\\
&\overset{\tiny\eqref{eq:yk_xk_relation}}{ = } & y_i^{k+1} + e_i^{k+1} + \eta\xi_i^{k+1}, 
\end{array}
\end{equation}
where $\xi_i^{k+1} := \nabla{f_i}(x_i^{k+1}) - \nabla{f_i}(z_i^{k+1})$.
Using this relation, we can derive 
\begin{equation*}
\arraycolsep=0.2em
\begin{array}{lcl}
\Delta_{k+1} &:= &  \frac{1}{n}\sum_{i=1}^n \Big[ f_i(x^{k+1}_i) +  \iprods{\nabla f_i(x^{k+1}_i), \bar{x}^{k+1} - x^{k+1}_i} + \frac{1}{2\eta} \norms{\bar{x}^{k+1} - x^{k+1}_i}^2 \Big] \vspace{1ex}\\
&= & \frac{1}{n}\sum_{i=1}^n \Big[ f_i(x^{k+1}_i) +  \iprods{\nabla f_i(x^{k+1}_i), \bar{x}^{k} - x^{k+1}_i} + \frac{1}{2\eta}\norms{\bar{x}^k - x^{k+1}_i}^2 \Big] \vspace{1ex}\\
&& + {~} \frac{1}{n\eta}\sum_{i=1}^n\iprods{\bar{x}^k - 2x^{k+1}_i + (x^{k+1}_i + \eta\nabla{f_i}(x^{k+1}_i)), \bar{x}^{k+1} - \bar{x}^k} +  \frac{1}{2\eta}\norms{\bar{x}^{k+1} - \bar{x}^k}^2 \vspace{1ex}\\
&\overset{\tiny\eqref{eq:proof_est2_inexact}}{=} & \frac{1}{n}\sum_{i=1}^n \Big[ f_i(x^{k+1}_i) +  \iprods{\nabla f_i(x^{k+1}_i), \bar{x}^{k} - x^{k+1}_i} + \frac{1}{2\eta}\norms{\bar{x}^k - x^{k+1}_i}^2 \Big] \vspace{1ex}\\
&& + {~} \frac{1}{n\eta}\sum_{i=1}^n\iprods{\bar{x}^k - 2x^{k+1}_i + y_i^{k+1}, \bar{x}^{k+1} - \bar{x}^k} +  \frac{1}{2\eta}\norms{\bar{x}^{k+1} - \bar{x}^k}^2 \vspace{1ex}\\
&& + {~} \frac{1}{n\eta}\sum_{i=1}^n\iprods{e_i^{k+1} + \eta \xi_i^{k+1}, \bar{x}^{k+1} - \bar{x}^k} \vspace{1ex}\\
& \overset{\tiny\text{Step}~\ref{step:o5}}{=} & 
\frac{1}{n}\sum_{i=1}^n \Big[ f_i(x^{k+1}_i) +  \iprods{\nabla f_i(x^{k+1}_i), \bar{x}^{k} - x^{k+1}_i} + \frac{1}{2\eta}\norms{\bar{x}^k - x^{k+1}_i}^2 \Big] \vspace{1ex}\\
&& + {~} \frac{1}{\eta}\iprods{\bar{x}^k - \tilde{x}^{k+1}, \bar{x}^{k+1} - \bar{x}^k} +  \frac{1}{2\eta}\norms{\bar{x}^{k+1} - \bar{x}^k}^2 \vspace{1ex}\\
&& + {~} \frac{1}{n\eta}\sum_{i=1}^n\iprods{e_i^{k+1} + \eta \xi_i^{k+1}, \bar{x}^{k+1} - \bar{x}^k}.
\end{array}
\end{equation*}
Summing up this expression and \eqref{eq:proof_est1_inexact}, and using the definition of $V_{\eta}^k$ in \eqref{eq:lyapunov_func}, we get
\begin{equation*}
\arraycolsep=0.2em
\begin{array}{lcl}
V_{\eta}^{k+1}(\bar{x}^{k+1}) & = & \frac{1}{n}\sum_{i=1}^n \big[ f_i(x^{k+1}_i) +  \iprods{\nabla f_i(x^{k+1}_i), \bar{x}^{k+1} - x^{k+1}_i} + \frac{1}{2\eta}\norms{\bar{x}^{k+1} - x^{k+1}_i}^2\big] + g(\bar{x}^{k+1}) \vspace{1ex}\\
& \leq & g(\bar{x}^k) + \frac{1}{n}\sum_{i=1}^n \Big[ f_i(x^{k+1}_i) +  \iprods{\nabla f_i(x^{k+1}_i), \bar{x}^{k} - x^{k+1}_i} + \frac{1}{2\eta}\norms{\bar{x}^k - x^{k+1}_i}^2 \Big] \vspace{1ex}\\
&& - {~} \frac{1}{2\eta}\norms{\bar{x}^{k+1} - \bar{x}^k}^2 +  \frac{1}{n\eta}\sum_{i=1}^n\iprods{e_i^{k+1} + \eta \xi_i^{k+1}, \bar{x}^{k+1} - \bar{x}^k}.
\end{array}
\end{equation*}
By Young's inequality and $e_i^{k+1} = e_i^k$ for $i\notin\Sc_k$ due to \eqref{eq:z_i^k_var}, for any $\gamma_3 > 0$, we can estimate
\begin{equation*}
\arraycolsep=0.2em
\begin{array}{lcl}
\Tc_{[1]} & := & \frac{1}{n\eta}\sum_{i=1}^n\iprods{e_i^{k+1} + \eta \xi_i^{k+1}, \bar{x}^{k+1} - \bar{x}^k}  \vspace{1ex}\\
& \leq & \frac{1}{2n\eta}\sum_{i=1}^n\big[\frac{1}{\gamma_3}\norms{e_i^{k+1} + \eta\xi_i^{k+1}}^2 + \gamma_3 \norms{\bar{x}^{k+1} - \bar{x}^k}^2 \big] \vspace{1ex}\\
& \leq & \frac{\gamma_3}{2\eta} \norms{\bar{x}^{k+1} - \bar{x}^k}^2  + \frac{1}{n\eta\gamma_3}\sum_{i=1}^n\norms{e_i^{k+1}}^2 + \frac{\eta}{n\gamma_3}\sum_{i=1}^n\norms{\nabla{f_i}(x_i^{k+1}) - \nabla{f_i}(z_i^{k+1})}^2 \vspace{1ex}\\
& \overset{\tiny\eqref{eq:L_smooth}}{\leq} & \frac{\gamma_3}{2\eta} \norms{\bar{x}^{k+1} - \bar{x}^k}^2  + \frac{(1+\eta^2L^2)}{n\eta\gamma_3}\big[ \sum_{i\in\Sc_k}\norms{e_i^{k+1}}^2 + \sum_{i\notin\Sc_k}\norms{e_i^k}^2\big].
\end{array}
\end{equation*}
Substituting this inequality into the last estimate, we eventually obtain \eqref{eq:V_pro2_inexact}.
However, if $E_{k+1}^2 = 0$, then we can deduce from the above inequality that $\gamma_3$ can be set to zero.
\end{proof}

Now, we prove the following key result, which holds surely for any subset $\Sc_k$ of $[n]$.

\begin{lemma}[Sure descent lemma]\label{lem:dr_key_est0_inexact}
Suppose that Assumption~\ref{ass:A1}, \ref{ass:A2}, and \ref{ass:A3} hold.
Let $\{ (x^{k}_i, y^{k}_i, z_i^k, \hat{x}^{k}_i, \bar{x}^{k})\}$ be generated by Algorithm~\ref{alg:A1} and \eqref{eq:z_i^k_var}, and $V_\eta^{k}(\cdot)$ be defined by \eqref{eq:lyapunov_func}.
Then, the following estimate holds: 
\begin{equation}\label{eq:dr_descent_lem0_inexact}
\arraycolsep=0.2em
\begin{array}{lcl}
V_\eta^{k+1}(\bar{x}^{k+1}) & \leq & V_\eta^{k}(\bar{x}^k) - \frac{[ 2 - \alpha(L\eta + 1) - 2L^2\eta^2 - 4\alpha\gamma_4(1 + L^2\eta^2)]}{2\alpha\eta n}\sum_{i\in\Sc_k} \norms{x^{k+1}_i - x^{k}_i}^2 \vspace{1ex}\\
&& - {~}  \frac{(1 - \gamma_3)}{2\eta}\norms{\bar{x}^{k+1} - \bar{x}^k}^2  + \frac{(1+\eta^2L^2)}{\eta\gamma_3}E_{k+1}^2 \vspace{1ex}\\
&& + {~}   \frac{2(1 + \eta L)^2}{\gamma_4\eta\alpha^2 n }\sum_{i\in\Sc_k}[\norms{e_i^k}^2 + \norms{e_i^{k+1}}^2],
\end{array}
\end{equation}
where $E_{k+1}^2 := \frac{1}{n}\sum_{i\notin\Sc_k}\norms{e_i^k}^2 + \frac{1}{n}\sum_{i\in\Sc_k}\norms{e_i^{k+1}}^2$, and $\gamma_3, \gamma_4 > 0$.
In particular, if $E_{k+1}^2 = 0$, then we allow $\gamma_3 = 0$, and if $e^k_i = e_i^{k+1} = 0$ for all $i\in \Sc_k$, then we allow $\gamma_4 = 0$.
\end{lemma}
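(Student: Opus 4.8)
The plan is to combine Lemma~\ref{le:V_properties_inexact} with a careful estimate of how much the ``local'' part of the Lyapunov function changes when each active user replaces $x_i^k$ by $x_i^{k+1}$. Write $h_i(x) := f_i(x) + \iprods{\nabla f_i(x), \bar{x}^k - x} + \tfrac{1}{2\eta}\norms{\bar{x}^k - x}^2$, so that $V_{\eta}^k(\bar{x}^k) = g(\bar{x}^k) + \tfrac1n\sum_{i=1}^n h_i(x_i^k)$. Lemma~\ref{le:V_properties_inexact} already gives $V_{\eta}^{k+1}(\bar{x}^{k+1}) \leq g(\bar{x}^k) + \tfrac1n\sum_{i=1}^n h_i(x_i^{k+1}) - \tfrac{1-\gamma_3}{2\eta}\norms{\bar{x}^{k+1}-\bar{x}^k}^2 + \tfrac{1+\eta^2L^2}{\gamma_3\eta}E_{k+1}^2$. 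Since $x_i^{k+1}=x_i^k$ for $i\notin\Sc_k$, subtracting $V_{\eta}^k(\bar{x}^k)$ reduces everything to bounding $h_i(x_i^{k+1}) - h_i(x_i^k)$, for each $i\in\Sc_k$, by a negative multiple of $\norms{x_i^{k+1}-x_i^k}^2$ plus error terms.

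For $i\in\Sc_k$ I would estimate $h_i(x_i^{k+1}) - h_i(x_i^k)$ using only the $L$-smoothness of $f_i$: the descent inequality $f_i(x_i^{k+1}) \leq f_i(x_i^k) + \iprods{\nabla f_i(x_i^k), x_i^{k+1}-x_i^k} + \tfrac{L}{2}\norms{x_i^{k+1}-x_i^k}^2$, the Lipschitz bound $\norms{\nabla f_i(x_i^{k+1})-\nabla f_i(x_i^k)} \leq L\norms{x_i^{k+1}-x_i^k}$ (to absorb the change in $\iprods{\nabla f_i(\cdot),\bar{x}^k-\cdot}$), and the algebraic identity $\norms{\bar{x}^k - x_i^{k+1}}^2 - \norms{\bar{x}^k-x_i^k}^2 = \norms{x_i^{k+1}-x_i^k}^2 - 2\iprods{\bar{x}^k-x_i^k,\,x_i^{k+1}-x_i^k}$ for the quadratic part. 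This leaves a $\norms{x_i^{k+1}-x_i^k}^2$ term, a cross term $-\tfrac{1}{\eta}\iprods{\bar{x}^k-x_i^k,\,x_i^{k+1}-x_i^k}$, and terms containing $\norms{\bar{x}^k-x_i^k}$. Into all of these I would substitute the key identity $\bar{x}^k - x_i^k = \tfrac{1}{\alpha}(y_i^{k+1}-y_i^k)$ from Step~\ref{step:o3}, together with Lemma~\ref{lem:yk_xk}, which gives $y_i^{k+1}-y_i^k = (z_i^{k+1}-z_i^k) + \eta(\nabla f_i(z_i^{k+1})-\nabla f_i(z_i^k))$, and the relation $x_i^k = z_i^k+e_i^k$. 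Because $f_i$ is nonconvex I can only use $\abss{\iprods{\nabla f_i(u)-\nabla f_i(v),u-v}} \leq L\norms{u-v}^2$ and Cauchy--Schwarz (no monotonicity), so every inner product gets bounded by products of norms of $z_i^{k+1}-z_i^k = (x_i^{k+1}-x_i^k)-(e_i^{k+1}-e_i^k)$.

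Finally, a single application of Young's inequality with the free parameter $\gamma_4>0$ would split off all contributions of $e_i^k$ and $e_i^{k+1}$ into the form $\tfrac{2(1+\eta L)^2}{\gamma_4\eta\alpha^2}(\norms{e_i^k}^2+\norms{e_i^{k+1}}^2)$, leaving a clean coefficient $-\tfrac{2-\alpha(L\eta+1)-2L^2\eta^2-4\alpha\gamma_4(1+L^2\eta^2)}{2\alpha\eta}$ in front of $\norms{x_i^{k+1}-x_i^k}^2$. Summing over $i\in\Sc_k$, dividing by $n$, and adding back the $\norms{\bar{x}^{k+1}-\bar{x}^k}^2$ and $E_{k+1}^2$ terms from Lemma~\ref{le:V_properties_inexact} then gives \eqref{eq:dr_descent_lem0_inexact}; the degenerate cases ($E_{k+1}^2=0\Rightarrow\gamma_3=0$, and $e_i^k=e_i^{k+1}=0$ on $\Sc_k\Rightarrow\gamma_4=0$) follow since those Young's steps become unnecessary. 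The main obstacle is purely the bookkeeping: one must route the various cross terms through Young's inequality with exactly the right weights so that (i) the residual $\norms{x_i^{k+1}-x_i^k}^2$ coefficient matches the stated one, and (ii) the error term comes out with the sharper factor $(1+\eta L)^2$ rather than the looser $2(1+\eta^2L^2)$ that Lemma~\ref{lem:xbar_xk} would give verbatim --- which forces one to bound $\norms{\bar{x}^k-x_i^k}$ by the triangle inequality on $(z_i^{k+1}-z_i^k)+\eta(\nabla f_i(z_i^{k+1})-\nabla f_i(z_i^k))$ \emph{before} squaring.
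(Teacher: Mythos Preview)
Your plan matches the paper's argument in structure: invoke Lemma~\ref{le:V_properties_inexact}, reduce to bounding $h_i(x_i^{k+1})-h_i(x_i^k)$ for $i\in\Sc_k$, and then substitute $\bar{x}^k-x_i^k=\tfrac{1}{\alpha}(y_i^{k+1}-y_i^k)$ via Lemma~\ref{lem:yk_xk}. But two details in your write-up would prevent you from landing on the exact coefficient in \eqref{eq:dr_descent_lem0_inexact}.

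First, the paper uses the \emph{reverse} descent inequality
\[
f_i(x_i^{k+1})+\iprods{\nabla f_i(x_i^{k+1}),\,x_i^k-x_i^{k+1}} \le f_i(x_i^k)+\tfrac{L}{2}\norms{x_i^{k+1}-x_i^k}^2,
\]
centered at the new point $x_i^{k+1}$, not the forward form you wrote. Second, and more importantly, after substituting
\[
x_i^k-\bar{x}^k=\tfrac{1}{\alpha}(x_i^k-x_i^{k+1})+\tfrac{\eta}{\alpha}\big(\nabla f_i(x_i^k)-\nabla f_i(x_i^{k+1})\big)+s_i^k
\]
into both $\tfrac{1}{\eta}\iprods{x_i^{k+1}-x_i^k,\,x_i^k-\bar{x}^k}$ and $\iprods{\nabla f_i(x_i^{k+1})-\nabla f_i(x_i^k),\,\bar{x}^k-x_i^k}$, the two resulting terms $\pm\tfrac{1}{\alpha}\iprods{\nabla f_i(x_i^{k+1})-\nabla f_i(x_i^k),\,x_i^{k+1}-x_i^k}$ \emph{cancel exactly}, leaving only $-\tfrac{1}{\alpha\eta}\norms{x_i^{k+1}-x_i^k}^2+\tfrac{\eta}{\alpha}\norms{\nabla f_i(x_i^{k+1})-\nabla f_i(x_i^k)}^2$. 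This cancellation --- not Cauchy--Schwarz bounding --- is what produces the clean $2-\alpha(L\eta+1)-2L^2\eta^2$. Your plan to ``bound every inner product by products of norms'' throws the cancellation away, and the forward descent lemma contributes yet another uncancelled $\iprods{\nabla f_i(x_i^{k+1})-\nabla f_i(x_i^k),\,x_i^{k+1}-x_i^k}$; crude bounding then yields something like $2-\alpha(3L\eta+1)-2L^2\eta^2$, not the stated constant.

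On the error handling you are exactly right. The paper collects all inexactness into $s_i^k:=\tfrac{1}{\alpha}\big[(e_i^{k+1}+\eta\xi_i^{k+1})-(e_i^k+\eta\xi_i^k)\big]$ with $\xi_i^k:=\nabla f_i(x_i^k)-\nabla f_i(z_i^k)$, applies a single Young's inequality with weight $\gamma_4$ to $\tfrac{1}{\eta}\iprods{s_i^k,\,(x_i^{k+1}-x_i^k)+\eta(\nabla f_i(x_i^{k+1})-\nabla f_i(x_i^k))}$, and then bounds $\norms{s_i^k}^2\le\tfrac{2(1+\eta L)^2}{\alpha^2}\big(\norms{e_i^k}^2+\norms{e_i^{k+1}}^2\big)$ by precisely the triangle-before-squaring trick you anticipated.
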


\begin{proof}
First, using \eqref{eq:V_pro2_inexact}, we can further derive
\begin{equation}\label{eq:dr_eq5_new_inexact}
\hspace{-0.5ex}
\arraycolsep=0.1em
\begin{array}{lcl}
V_\eta^{k+1}(\bar{x}^{k+1}) &\overset{\tiny\eqref{eq:V_pro2_inexact}}{\leq} & \frac{1}{n}\sum_{i=1}^n \big[ f_i(x^{k+1}_i) +  \iprods{\nabla f_i(x^{k+1}_i), \bar{x}^{k} - x^{k+1}_i} + \frac{1}{2\eta} \norms{\bar{x}^{k} - x^{k+1}_i}^2 \big] \vspace{1ex}\\
&& + {~} g(\bar{x}^k)  -  \frac{(1 - \gamma_3)}{2\eta}\norms{\bar{x}^{k+1} - \bar{x}^k}^2  +  \frac{(1+\eta^2L^2)}{\eta\gamma_3 }E_{k+1}^2 \vspace{1ex}\\
&\overset{(*)}{=} & \frac{1}{n}\sum_{i\in\Sc_k} f_i(x^{k+1}_i) + \frac{1}{n}\sum_{i\in\Sc_k} \iprods{\nabla f_i(x^{k+1}_i), x^{k}_i - x^{k+1}_i} \vspace{1ex}\\
&&+ {~}  \frac{1}{n}\sum_{i\in\Sc_k}\iprods{\nabla f_i(x^{k+1}_i), \bar{x}^{k} - x^{k}_i} +  \frac{1}{2\eta n}\sum_{i\in\Sc_k}\norms{\bar{x}^{k} - x^{k+1}_i}^2  \vspace{1ex}\\
&& + {~} \frac{1}{n}\sum_{i\notin\Sc_k} f_i(x^{k}_i) + \frac{1}{n}\sum_{i\notin\Sc_k} \iprods{\nabla f_i(x^{k}_i), \bar{x}^{k} - x^{k}_i} + \frac{1}{2\eta n}\sum_{i\notin\Sc_k} \norms{\bar{x}^{k} - x^{k}_i}^2 \vspace{1ex}\\
&& + {~} g(\bar{x}^k) -  \frac{(1 - \gamma_3)}{2\eta}\norms{\bar{x}^{k+1} - \bar{x}^k}^2  +  \frac{(1+\eta^2L^2)}{\eta\gamma_3 }E_{k+1}^2,
\end{array}
\hspace{-6ex}
\end{equation}
where in (*) we have used the fact that only users in $\Sc_k$ perform update and added/subtracted $x^k_i$ in the term $\iprods{\nabla f_i(x^{k+1}_i), \bar{x}^{k} - x^{k+1}_i}$.

On the other hand, from the $L$-smoothness of $f_i$, we have
\begin{equation*}
f_i(x^{k+1}_i) + \iprods{\nabla f(x^{k+1}_i), x^{k}_i - x^{k+1}_i} \le f_i(x^{k}_i) + \frac{L}{2}\norms{x^{k+1}_i - x^{k}_i}^2.
\end{equation*}
Substituting this inequality into \eqref{eq:dr_eq5_new_inexact}, we can further bound it as
\begin{equation}\label{eq:dr_eq5_2_new_inexact}
\hspace{-0.0ex}
\arraycolsep=0.2em
\begin{array}{lcl}
V_\eta^{k+1}(\bar{x}^{k+1}) &\leq & \frac{1}{n}\sum_{i\in\Sc_k} f_i(x^k_i) + \frac{L}{2n}\sum_{i\in\Sc_k} \norms{x^{k+1}_i - x^{k}_i}^2 + \frac{1}{n}\sum_{i\in\Sc_k}\iprods{\nabla f_i(x^{k+1}_i), \bar{x}^{k} - x^{k}_i} \vspace{1ex}\\
&& + {~} \frac{1}{2\eta n}\sum_{i\in\Sc_k} \norms{\bar{x}^{k} - x^{k+1}_i}^2   + \frac{1}{n}\sum_{i\notin\Sc_k} f_i(x^{k}_i) + \frac{1}{n}\sum_{i\notin\Sc_k} \iprods{\nabla f_i(x^{k}_i), \bar{x}^{k} - x^{k}_i}  \vspace{1ex}\\
&& + {~} \frac{1}{2\eta n}\sum_{i\notin\Sc_k} \norms{\bar{x}^{k} - x^{k}_i}^2 +  g(\bar{x}^k)  \vspace{1ex}\\
&& - {~}  \frac{(1 - \gamma_3)}{2\eta}\norms{\bar{x}^{k+1} - \bar{x}^k}^2 +  \frac{(1+\eta^2L^2)}{\eta\gamma_3}E_{k+1}^2 \vspace{1ex}\\
&= & \frac{1}{n}\sum_{i=1}^n f_i(x^k_i) + \frac{1}{n}\sum_{i=1}^n\iprods{\nabla f_i(x^{k}_i), \bar{x}^{k} - x^{k}_i} + \frac{L}{2n}\sum_{i\in\Sc_k} \norms{x^{k+1}_i - x^{k}_i}^2  \vspace{1ex}\\
&& + {~} \frac{1}{2\eta n}\sum_{i\in\Sc_k} \norms{\bar{x}^{k} - x^{k+1}_i}^2 +\frac{1}{n}\sum_{i\in\Sc_k} \iprods{\nabla f_i(x^{k+1}_i) - \nabla f_i(x^{k}_i), \bar{x}^{k} - x^{k}_i} \vspace{1ex}\\
&& + {~} \frac{1}{2\eta n}\sum_{i\notin\Sc_k} \norms{\bar{x}^{k} - x^{k}_i}^2 +  g(\bar{x}^k)  \vspace{1ex}\\
&& - {~}  \frac{(1 - \gamma_3)}{2\eta}\norms{\bar{x}^{k+1} - \bar{x}^k}^2 +  \frac{(1+\eta^2L^2)}{\eta\gamma_3}E_{k+1}^2,
\end{array}
\hspace{-6ex}
\end{equation}
where we have added and subtracted $\frac{1}{n}\sum_{i\in\Sc_k}\iprods{\nabla f_i(x^{k}_i), \bar{x}^{k} - x^{k}_i}$ to obtain the last equality. 

Next, using the following elementary expression 
\begin{equation*}
\norms{ \bar{x}^{k} - x^{k+1}_i}^2 = \norms{\bar{x}^k - x^k_i}^2 + 2\iprods{\bar{x}^k - x^k_i, x^k_i - x^{k+1}_i} + \norms{x_i^k - x^{k+1}_i}^2
\end{equation*}
into \eqref{eq:dr_eq5_2_new_inexact}, we can further derive
\begin{equation}\label{eq:dr_eq5_3_new_inexact}
\hspace{-0.0ex}
\arraycolsep=0.2em
\begin{array}{lcl}
V_\eta^{k+1}(\bar{x}^{k+1}) &\leq &  g(\bar{x}^k) + \frac{1}{n}\sum_{i=1}^n \Big[ f_i(x^k_i) +   \iprods{\nabla f_i(x^{k}_i), \bar{x}^{k} - x^{k}_i}  + \frac{1}{2\eta} \norms{\bar{x}^{k} - x^{k}_i}^2 \Big] \vspace{1ex}\\
&& + {~} \frac{1}{2\eta n}\sum_{i\in\Sc_k} \norms{x^{k+1}_i - x^{k}_i}^2 +  \frac{1}{\eta n}\sum_{i\in\Sc_k} \iprods{x^{k+1}_i -  x^k_i, x^k_i - \bar{x}^k}  \vspace{1ex}\\
&& + {~} \frac{1}{n}\sum_{i\in\Sc_k} \iprods{\nabla f_i(x^{k+1}_i) - \nabla f_i(x^{k}_i), \bar{x}^{k} - x^{k}_i} + \frac{L}{2n}\sum_{i\in\Sc_k} \norms{x^{k+1}_i - x^{k}_i}^2  \vspace{1ex}\\
&& - {~}  \frac{(1 - \gamma_3)}{2\eta}\norms{\bar{x}^{k+1} - \bar{x}^k}^2  +  \frac{(1+\eta^2L^2)}{\eta\gamma_3}E_{k+1}^2 \vspace{1ex}\\
&= & V_\eta^{k}(\bar{x}^k)  + \frac{1 + \eta L}{2\eta n}\sum_{i\in\Sc_k} \norms{x^{k+1}_i - x^{k}_i}^2 +  \frac{1}{\eta n}\sum_{i\in\Sc_k} \iprods{x^{k+1}_i -  x^k_i, x^k_i - \bar{x}^k}  \vspace{1ex}\\
&& - {~}   \frac{(1 - \gamma_3)}{2\eta}\norms{\bar{x}^{k+1} - \bar{x}^k}^2  +  \frac{(1+\eta^2L^2)}{\eta\gamma_3}E_{k+1}^2 \vspace{1ex}\\
&& + {~} \frac{1}{n}\sum_{i\in\Sc_k} \iprods{\nabla f_i(x^{k+1}_i) - \nabla f_i(x^{k}_i), \bar{x}^{k} - x^{k}_i}.
\end{array}
\hspace{-2ex}
\end{equation}
From the update of $y^{k+1}_i$, for $i \in \Sc_k$, and similar to the proof of \eqref{eq:proof_est2_inexact}, we have
\begin{equation*} 
\arraycolsep=0.2em
\begin{array}{lcl}
x^{k}_i - \bar{x}^{k} & = & \frac{1}{\alpha}(y^{k}_i - y^{k+1}_i) \vspace{1ex}\\
& \overset{\tiny\eqref{eq:proof_est2_inexact}}{=} & \frac{1}{\alpha}(z^{k}_i - z^{k+1}_i) + \frac{\eta}{\alpha }(\nabla f_i(z^{k}_i) - \nabla f_i(z^{k+1}_i)) \vspace{1ex}\\
& = & \frac{1}{\alpha}(x_i^k - x_i^{k+1}) + \frac{\eta}{\alpha }(\nabla f_i(x^{k}_i) - \nabla f_i(x^{k+1}_i)) + \frac{1}{\alpha}[ (e_i^{k+1} + \eta\xi_i^{k+1}) - (e_i^k + \eta\xi_i^k) ] \vspace{1ex}\\
& = & \frac{1}{\alpha}(x_i^k - x_i^{k+1}) + \frac{\eta}{\alpha }(\nabla f_i(x^{k}_i) - \nabla f_i(x^{k+1}_i)) + s_i^k,
\end{array}
\end{equation*}
where $s_i^k :=  \frac{1}{\alpha}[e_i^{k+1} + \eta\xi_i^{k+1} - (e_i^k + \eta\xi_i^k) )$ with $\xi_i^{k} := \nabla{f_i}(x_i^{k}) - \nabla{f_i}(z_i^{k})$.

Consequently, using the last expression and  the $L$-smoothness of $f_i$, we can further bound \eqref{eq:dr_eq5_3_new_inexact} as
\begin{equation*} 
\arraycolsep=0.2em
\begin{array}{lcl}
V_\eta^{k+1}(\bar{x}^{k+1}) & \leq & V_\eta^{k}(\bar{x}^k)  + \frac{(1 + \eta L)}{2\eta n}\sum_{i\in\Sc_k} \norms{x^{k+1}_i - x^{k}_i}^2  - \frac{1}{\alpha\eta n}\sum_{i\in\Sc_k} \norms{x^{k+1}_i -  x^k_i}^2  \vspace{1ex}\\
&& -  {~} \frac{1}{\alpha n}\sum_{i\in\Sc_k} \iprods{x^{k+1}_i -  x^k_i, \nabla f_i(x^{k+1}_i) - \nabla f_i(x^{k}_i)} + \frac{1}{\eta n}\sum_{i\in\Sc_k} \iprods{s_i^k, x^{k+1}_i -  x^k_i} \vspace{1ex}\\
&& + {~} \frac{1}{\alpha n}\sum_{i\in\Sc_k} \iprods{\nabla f_i(x^{k+1}_i) - \nabla f_i(x^{k}_i), x^{k+1}_i - x^{k}_i}   \vspace{1ex}\\
&& + {~}  \frac{\eta}{\alpha n}\sum_{i\in\Sc_k} \norms{\nabla f_i(x^{k+1}_i) - \nabla f_i(x^{k}_i)}^2  + \frac{1}{ n}\sum_{i\in\Sc_k} \iprods{s_i^k, \nabla f_i(x^{k+1}_i) - \nabla f_i(x^{k}_i)}\vspace{1ex}\\
&& - {~}   \frac{(1 - \gamma_3)}{2\eta}\norms{\bar{x}^{k+1} - \bar{x}^k}^2  +  \frac{(1+\eta^2L^2)}{\eta\gamma_3 }E_{k+1}^2 \vspace{1ex}\\
&= & V_\eta^{k}(\bar{x}^k) +  \frac{\eta}{\alpha n}\sum_{i\in\Sc_k} \norms{\nabla f_i(x^{k+1}_i) - \nabla f_i(x^{k}_i)}^2  +  \frac{[ \alpha(L\eta + 1) - 2] }{2\alpha\eta n}\sum_{i\in\Sc_k} \norms{x^{k+1}_i - x^{k}_i}^2 \vspace{1ex}\\
&& + {~}  \frac{1}{\eta n}\sum_{i\in\Sc_k} \iprods{s_i^k, (x^{k+1}_i -  x^k_i) + \eta(\nabla f_i(x^{k+1}_i) - \nabla f_i(x^{k}_i))} \vspace{1ex}\\
&& - {~}   \frac{(1 - \gamma_3)}{2\eta}\norms{\bar{x}^{k+1} - \bar{x}^k}^2  +  \frac{(1+\eta^2L^2)}{\eta\gamma_3 }E_{k+1}^2 \vspace{1ex}\\
&\overset{\eqref{eq:L_smooth}}{\leq} & V_\eta^{k}(\bar{x}^k) +  \frac{\eta L^2}{\alpha n}\sum_{i\in\Sc_k} \norms{x^{k+1}_i - x^{k}_i}^2  +  \frac{[ \alpha(L\eta + 1) - 2] }{2\alpha\eta n}\sum_{i\in\Sc_k} \norms{x^{k+1}_i - x^{k}_i}^2 \vspace{1ex}\\
&& - {~}   \frac{(1 - \gamma_3)}{2\eta}\norms{\bar{x}^{k+1} - \bar{x}^k}^2   +  \frac{(1+\eta^2L^2)}{\eta\gamma_3 }E_{k+1}^2 \vspace{1ex}\\
&& + \frac{1}{n\eta}\sum_{i\in\Sc}\big[\frac{1}{\gamma_4} \norms{s_i^k}^2 + 2\gamma_4\norms{x_i^k - x_i^{k+1}}^2 + 2\gamma_4\eta^2\norms{\nabla{f_i}(x_i^k) - \nabla{f_i}(x_i^{k+1})}^2 \big] \vspace{1ex}\\
&= & V_\eta^{k}(\bar{x}^k)  - \frac{[ 2 - \alpha(L\eta + 1) - 2L^2\eta^2] }{2\alpha\eta n}\sum_{i\in\Sc_k} \norms{x^{k+1}_i - x^{k}_i}^2 \vspace{1ex}\\
&& + {~} \frac{1}{n\gamma_4\eta}\sum_{i\in\Sc}\norms{s_i^k}^2 + \frac{2\gamma_4(1 + L^2\eta^2)}{n\eta} \sum_{i\in\Sc_k}\norms{x_i^{k+1} - x_i^{k}}^2 \vspace{1ex}\\
&& - {~}   \frac{(1 - \gamma_3)}{2\eta}\norms{\bar{x}^{k+1} - \bar{x}^k}^2  +  \frac{(1+\eta^2L^2)}{\eta\gamma_3 }E_{k+1}^2.
\end{array}
\end{equation*}
Finally, we bound $\norms{s_i^k}^2$ as follows:
\begin{equation*}
\arraycolsep=0.2em
\begin{array}{lcl}
\norms{s_i^k}^2 &= &  \frac{1}{\alpha^2}\norms{e_i^{k+1} + \eta\xi_i^{k+1} - (e_i^k + \eta\xi_i^k)}^2 \vspace{1ex}\\
&\leq & \frac{1}{\alpha^2}\big[ \norms{e_i^k} + \norms{e_i^{k+1}} +  \eta\norms{\nabla{f_i}(x_i^{k}) - \nabla{f_i}(z_i^{k})} + \eta\norms{\nabla{f_i}(x_i^{k+1}) - \nabla{f_i}(z_i^{k+1})} \big]^2 \vspace{1ex}\\
&\leq & \frac{2(1 + \eta L)^2}{\alpha^2}( \norms{e_i^k}^2 + \norms{e_i^{k+1}}^2).
\end{array}
\end{equation*}
Substituting this inequality into the last estimate, we obtain \eqref{eq:dr_descent_lem0_inexact}.
The last statement follows from the last statement of Lemmas~\ref{le:grad_norm_bound} and \ref{le:V_properties_inexact}.
\end{proof}

\subsection{The descent property of Algorithm~\ref{alg:A1}}
We prove a descent property of Algorithm~\ref{alg:A1}, where $\prox_{\eta f_i}$ is evaluated approximately.

\begin{lemma}\label{lem:dr_key_est_inexact}
Suppose that Assumption~\ref{ass:A1}, \ref{ass:A2}, and \ref{ass:A3} hold.
Let  $V_\eta^{k}(\cdot)$ be defined by \eqref{eq:lyapunov_func} and $\gamma_1, \gamma_2, \gamma_4 > 0$ be given.
Let $\{ (x^{k}_i, y^{k}_i,  \hat{x}^{k}_i, \bar{x}^{k})\}$ be generated by Algorithm~\ref{alg:A1} using 
\begin{equation}\label{eq:stepsizes_choice}
0 < \alpha < \frac{\min\{8, \sqrt{17+64\gamma_4} - 1\}}{4(1+4\gamma_4)} \quad \text{and} \quad 0 < \eta < \frac{\sqrt{(4-\alpha)^2 - 16\alpha^2\gamma_4(1+4\gamma_4)} - \alpha}{4L(1+2\alpha\gamma_4)}.
\end{equation}
Then, $V_{\eta}^k$ is bounded from bellow by $F^{\star}$, i.e. $V_{\eta}^k \geq F^{\star}$ and the following estimate holds:
\begin{equation}\label{eq:dr_descent_lem_inexact}
\frac{\beta}{2n}  \sum_{i=1}^n \norms{\bar{x}^k - x^{k}_i}^2  \leq  V_\eta^{k}(\bar{x}^k) - \Exp{V_{\eta}^{k+1}(\bar{x}^{k+1}) \mid \Fc_{k-1}}  +  \frac{1}{n}\sum_{i=1}^n(\rho_1 \epsilon_{i,k}^2 + \rho_2 \epsilon_{i,k+1}^2),
\end{equation}
where
\begin{equation}\label{eq:beta_consts}
\arraycolsep=0.2em
\left\{\begin{array}{lcl}
\beta & := & \frac{\hat{\mbf{p}}\alpha[ 2 - \alpha(L\eta + 1) - 2L^2\eta^2 - 4\gamma_4\alpha(1 + L^2\eta^2) ]}{2\eta(1+\gamma_1)(1+L^2\eta^2)} > 0, \vspace{1ex}\\
\rho_2 &:= & \frac{2(1 + \eta L)^2}{\gamma_4\eta\alpha^2  } + \frac{(1+\eta^2L^2)}{\eta} + \frac{\alpha [ 2 - \alpha(L\eta + 1) - 2L^2\eta^2 - 4\alpha\gamma_4(1 + L^2\eta^2)]}{2\eta(1 + L^2\eta^2)\gamma_1}, \vspace{1ex}\\
\rho_1 & := & \rho_2 +  \frac{(1+\eta^2L^2)}{\eta}.
\end{array}\right.
\end{equation}
Here, if $\epsilon_{i,k} = 0$ for all $i\in [n]$ and $k\geq 0$, then we allow $\gamma_1 = \gamma_2 = \gamma_4 = \rho_1 = \rho_2 = 0$.
\end{lemma}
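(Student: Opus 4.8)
The plan is to convert the realization-wise ``sure descent'' estimate \eqref{eq:dr_descent_lem0_inexact} of Lemma~\ref{lem:dr_key_est0_inexact} into the claimed conditional-expectation inequality, in two moves: (i) trade the per-user displacement $\norms{x^{k+1}_i - x^k_i}^2$ for the quantity $\norms{\bar x^k - x^k_i}^2$ that sits on the left of \eqref{eq:dr_descent_lem_inexact}, using Lemma~\ref{lem:xbar_xk}; and (ii) average over the sampling $\Sc_k$ via Assumption~\ref{ass:A3}. Two preliminaries come first. The stepsize conditions \eqref{eq:stepsizes_choice} are calibrated precisely so that the bracket $\mathcal{B} := 2 - \alpha(L\eta+1) - 2L^2\eta^2 - 4\alpha\gamma_4(1+L^2\eta^2)$ in \eqref{eq:dr_descent_lem0_inexact} is strictly positive; in particular, since $\sqrt{(4-\alpha)^2 - 16\alpha^2\gamma_4(1+4\gamma_4)} \le 4-\alpha$, the bound on $\eta$ forces $\eta < \tfrac{2-\alpha}{2L(1+2\alpha\gamma_4)} \le \tfrac{1}{L}$. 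For the lower bound $V^k_\eta \ge F^\star$: applying the $L$-smoothness inequality $f_i(\bar x^k) \le f_i(x^k_i) + \iprods{\nabla f_i(x^k_i),\bar x^k - x^k_i} + \tfrac{L}{2}\norms{\bar x^k - x^k_i}^2$ inside \eqref{eq:lyapunov_func}, together with $\bar x^k = \prox_{\eta g}(\tilde x^k) \in \dom{g} = \dom{F}$ (because $\dom{f} = \R^p$), gives $V^k_\eta(\bar x^k) \ge F(\bar x^k) + \tfrac{1 - L\eta}{2\eta}\cdot\tfrac1n\sum_{i=1}^n\norms{\bar x^k - x^k_i}^2 \ge F(\bar x^k) \ge F^\star$. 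The parameter $\gamma_2$ is inactive in this lemma; it is carried only because Lemma~\ref{le:grad_norm_bound} will use it in the proof of Theorem~\ref{thm:feddr_convergence_inexact}.

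For the descent inequality I would set $\gamma_3 := 1$ in \eqref{eq:dr_descent_lem0_inexact}, which annihilates the $-\tfrac{1-\gamma_3}{2\eta}\norms{\bar x^{k+1}-\bar x^k}^2$ term and leaves $E^2_{k+1}$ with coefficient $\tfrac{1+\eta^2L^2}{\eta}$. Still working for a fixed realization $\Sc_k$, I would substitute, for each $i \in \Sc_k$, the rearrangement of Lemma~\ref{lem:xbar_xk},
\[
\norms{x^{k+1}_i - x^k_i}^2 \;\ge\; \tfrac{\alpha^2}{2(1+\eta^2L^2)(1+\gamma_1)}\norms{\bar x^k - x^k_i}^2 - \tfrac{2}{\gamma_1}\big(\norms{e^{k+1}_i}^2 + \norms{e^k_i}^2\big),
\]
into the negatively-signed descent term $-\tfrac{\mathcal{B}}{2\alpha\eta n}\sum_{i\in\Sc_k}\norms{x^{k+1}_i-x^k_i}^2$ (valid since $\mathcal{B} > 0$). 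This turns \eqref{eq:dr_descent_lem0_inexact} into a realization-wise bound whose leading term is $-\tfrac{\mathcal{B}\alpha}{4\eta(1+\eta^2L^2)(1+\gamma_1)\,n}\sum_{i\in\Sc_k}\norms{\bar x^k - x^k_i}^2$, the rest being error sums supported on $\Sc_k$ plus the $E^2_{k+1}$ term.

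Then I would take $\cExp{\cdot}{\Fc_{k-1}}$. Under $\Fc_{k-1}$ the quantities $x^k_i, y^k_i, \bar x^k$ (hence $V^k_\eta(\bar x^k)$) are deterministic while $\Sc_k$ is an independent realization of $\hat{\Sc}$, so for any $\Fc_{k-1}$-measurable nonnegative $(a_i)_{i=1}^n$ one has $\cExp{\sum_{i\in\Sc_k}a_i}{\Fc_{k-1}} = \sum_{i=1}^n \mbf{p}_i a_i$ (write $\sum_{i\in\Sc_k}a_i = \sum_i \mathbf{1}[i\in\Sc_k]\,a_i$ and use independence). Applying $\mbf{p}_i \ge \hat{\mbf{p}}$ to the leading, nonnegative, negatively-signed term converts it into exactly $-\tfrac{\beta}{2n}\sum_{i=1}^n\norms{\bar x^k - x^k_i}^2$ with $\beta$ as in \eqref{eq:beta_consts}. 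The three error contributions — the $E^2_{k+1}$ term, the last term $\tfrac{2(1+\eta L)^2}{\gamma_4\eta\alpha^2 n}\sum_{i\in\Sc_k}(\norms{e^k_i}^2+\norms{e^{k+1}_i}^2)$ of \eqref{eq:dr_descent_lem0_inexact}, and the Young residual $\tfrac{2}{\gamma_1}(\norms{e^{k+1}_i}^2+\norms{e^k_i}^2)$ from the substitution — are then bounded using $\mbf{p}_i \le 1$, $\norms{e^k_i}\le\epsilon_{i,k}$, $\norms{e^{k+1}_i}\le\epsilon_{i,k+1}$ and the idle-user convention $\epsilon_{i,k+1}:=\epsilon_{i,k}$ (which collapses the two pieces of $E^2_{k+1}$); their sum is $\tfrac1n\sum_{i=1}^n(\rho_1\epsilon_{i,k}^2 + \rho_2\epsilon_{i,k+1}^2)$ with $\rho_1,\rho_2$ exactly as in \eqref{eq:beta_consts}. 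Rearranging $V^k_\eta - \cExp{V^{k+1}_\eta}{\Fc_{k-1}}$ to the right-hand side produces \eqref{eq:dr_descent_lem_inexact}. In the exact case $\epsilon_{i,k}\equiv 0$, none of the Young splittings is needed: Lemmas~\ref{lem:xbar_xk}, \ref{le:grad_norm_bound}, \ref{le:V_properties_inexact} hold with $\gamma = 0$, and one gets $\rho_1 = \rho_2 = 0$.

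The main obstacle is the error bookkeeping: reassembling the three error sources, each split via a different Young parameter, into exactly the stated constants $\rho_1 > \rho_2$, being careful about which contribution attaches to $\epsilon_{i,k}$ versus $\epsilon_{i,k+1}$ — this is where the idle-user convention and the resulting simplification of $E^2_{k+1}$ matter. A secondary subtlety is the conditioning step: one must check that each $\Sc_k$-sum is genuinely a sum of $\mathbf{1}[i\in\Sc_k]$ times an $\Fc_{k-1}$-measurable quantity before invoking $\cExp{\sum_{i\in\Sc_k}a_i}{\Fc_{k-1}}=\sum_i \mbf{p}_i a_i$, and that $\mbf{p}_i \ge \hat{\mbf{p}}$ is used \emph{only} on the negatively-signed descent term while $\mbf{p}_i \le 1$ is used on the (positive) error terms, so that no inequality gets reversed.
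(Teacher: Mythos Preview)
Your proposal is correct and follows essentially the same approach as the paper: lower-bound $V_\eta^k$ via $L$-smoothness and $\eta<1/L$; invoke the sure-descent estimate \eqref{eq:dr_descent_lem0_inexact} with $\gamma_3=1$; use Lemma~\ref{lem:xbar_xk} to replace $\norms{x_i^{k+1}-x_i^k}^2$ by $\norms{\bar x^k-x_i^k}^2$; and take conditional expectation with $\cExp{\sum_{i\in\Sc_k}a_i}{\Fc_{k-1}}=\sum_i\mbf{p}_i a_i$, applying $\mbf{p}_i\ge\hat{\mbf p}$ on the descent term and $\mbf{p}_i\le 1$ on the error terms. The only cosmetic difference is that the paper first takes conditional expectation of the rearranged \eqref{eq:xbar_xk_relation} (its estimate \eqref{eq:proof102_est1}) and separately of \eqref{eq:dr_descent_lem0_inexact}, then combines, whereas you substitute realization-wise and condition once; the two orderings are equivalent.
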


\begin{proof}
First, to guarantee a descent property in \eqref{eq:dr_descent_lem0_inexact}, we need to choose $\eta > 0$ and $\alpha > 0$ such that $2 - \alpha(L\eta + 1) - 2L^2\eta^2 - 4\gamma_4\alpha(1 + L^2\eta^2) > 0$. 
We first need $\alpha$ such that $0 < \alpha < \frac{2}{1 + 4\gamma_4}$, the condition for $\eta$ is
\begin{equation*}
\arraycolsep=0.2em
\begin{array}{lcl}
0 < \eta < \bar{\eta} := \frac{\sqrt{(4-\alpha)^2 + 16\alpha^2\gamma_4(1+4\gamma_4)} - \alpha}{4L(1+2\alpha\gamma_4)}.
\end{array}
\end{equation*}
To guarantee $\bar{\eta} > 0$, we need to choose $0 < \alpha < \frac{\sqrt{17 + 64\gamma_4}-1}{4(1 + 4\gamma_4)}$.
Combining both conditions on $\alpha$, we obtain the first condition for $\alpha$ in \eqref{eq:stepsizes_choice}.

Now, to show the boundedness of $V_{\eta}^k(\bar{x}^k)$ from below, we have
\begin{equation}\label{eq:V_lowerbound}
\arraycolsep=0.2em
\begin{array}{lcl}
V_{\eta}^k(\bar{x}^k) &= &g(\bar{x}^k) + \frac{1}{n}\sum_{i=1}^n \Big[ f_i(x_i^k) + \iprods{\nabla{f_i}(x_i^k), \bar{x}^k - x_i^k} + \frac{1}{2\eta} \norms{\bar{x}^k - x_i^k}^2 \Big]  \vspace{1ex}\\
&\geq & g(\bar{x}^k) + \frac{1}{n}\sum_{i=1}^n \Big[ f_i(\bar{x}^k) - \frac{L}{2} \norms{\bar{x}^k - x_i^k}^2 + \frac{1}{2\eta}\norms{\bar{x}^k - x_i^k}^2 \Big] \ \  (\text{the $L$-smoothness of $f_i$}) \vspace{1ex}\\
&\geq & f(\bar{x}^k)  + g(\bar{x}^k) + \big(\frac{1}{\eta} - L\big)\frac{1}{2n}\sum_{i=1}^n\norms{\bar{x}^k - x_i^k}^2 \vspace{1ex}\\
&\geq & F^{\star} \quad \text{(since $\eta \leq \frac{1}{L}$ and Assumption~\ref{ass:A1})}.
\end{array}
\end{equation}
Next, from \eqref{eq:xbar_xk_relation}, we have
\begin{equation*}
\frac{\alpha^2}{2(1+L^2\eta^2)(1+\gamma_1)}\sum_{i\in\Sc_k}\norms{\bar{x}^k - x_i^k}^2 \leq \sum_{i\in\Sc_k}\Big[ \norms{x_i^{k+1} - x_i^k}^2 +  \frac{\alpha^2}{(1+L^2\eta^2)\gamma_1}\big( \norms{e^{k+1}_i}^2 + \norms{e_i^k}^2\big)\Big].
\end{equation*}
Moreover, from Assumption~\ref{ass:A3}, for a nonnegative random variable $W_i^k$ with $i \in \Sc_k$, by taking expectation of this random variable w.r.t. $\Sc_k$ conditioned on $\Fc_{k-1}$, we have
\begin{equation*}
\arraycolsep=0.1em
\begin{array}{lcl}
\Exp{\sum_{i\in\Sc_k} W_i^k \mid \Fc_{k-1}} &= & \sum_{\Sc} \mathbb{P}(\Sc_k = \Sc)\sum_{i\in\Sc}W_i^k  = \sum_{i=1}^n\sum_{\Sc : i\in\Sc}\mathbb{P}(\Sc) W_i^k \overset{\tiny\text{Ass.}~\eqref{ass:A3}}{=}  \sum_{i=1}^n\mbf{p}_iW_i^k.
\end{array}
\end{equation*}
Using this relation with $W_i^k := \norms{x_i^k - \bar{x}^k}^2$, $W_i^k := \norms{e_i^k}^2$, and $W_i^k := \norms{e_i^{k+1}}^2$, and then combining the results with the last inequality, we can derive that 
\begin{equation}\label{eq:proof102_est1}
\arraycolsep=0.2em
\begin{array}{lcl}
\Exp{\sum_{i\in\Sc_k}\norms{x_i^{k+1} - x_i^k}^2 \mid \Fc_{k-1}} & \geq & \frac{\alpha^2}{2(1+L^2\eta^2)(1+\gamma_1)}\sum_{i=1}^n\mathbf{p}_i\norms{\bar{x}^k - x_i^k}^2 \vspace{1ex}\\
&& - {~}  \frac{\alpha^2}{(1+L^2\eta^2)\gamma_1}\sum_{i=1}^n\mathbf{p}_i\big( \norms{e^{k+1}_i}^2 + \norms{e_i^k}^2\big)  \vspace{1ex}\\
&\geq &  \frac{\hat{\mbf{p}}\alpha^2}{2(1+L^2\eta^2)(1+\gamma_1)}\sum_{i=1}^n \norms{\bar{x}^k - x_i^k}^2 \vspace{1ex}\\
&& - {~}  \frac{\alpha^2}{(1+L^2\eta^2)\gamma_1}\sum_{i=1}^n\big( \norms{e^{k+1}_i}^2 + \norms{e_i^k}^2\big),
\end{array}
\end{equation}
where we have used  $\hat{\mbf{p}} := \min_{i\in [n]}\mbf{p}_i > 0$ in Assumption~\ref{ass:A3} and $\mathbf{p}_i \leq 1$ for all $i\in [n]$.

Taking expectation both sides of  \eqref{eq:dr_descent_lem0_inexact} w.r.t. $\Sc_k$ conditioned on $\Fc_{k-1}$, and letting $\gamma_3 := 1$, we get
\begin{equation}\label{eq:dr_descent_lem0_inexact11}
\hspace{-0ex}
\arraycolsep=0.2em
\begin{array}{ll}
& \Exp{V_\eta^{k+1}(\bar{x}^{k+1}) \mid \Fc_{k-1}}  \leq  V_\eta^{k}(\bar{x}^k)  +  \frac{(1+\eta^2L^2)}{\eta n}\sum_{i=1}^n\big[ (1+\mathbf{p}_i)\norms{e_i^k}^2 + \mathbf{p}_i\norms{e_i^{k+1}}^2)\big] \vspace{1ex}\\
&\qquad\qquad\qquad + {~}  \frac{2(1 + \eta L)^2}{\gamma_4\eta\alpha^2 n }\sum_{i=1}^n\mathbf{p}_i \big[ \norms{e_i^k}^2 + \norms{e_i^{k+1}}^2 \big]  \vspace{1ex}\\
&\qquad\qquad\qquad - {~} \frac{[ 2 - \alpha(L\eta + 1) - 2L^2\eta^2 - 4\alpha\gamma_4(1 + L^2\eta^2)]}{2\eta\alpha n}\Exp{\sum_{i\in\Sc_k} \norms{ x^{k+1}_i - x^k_i }^2 \mid \Fc_{k-1}}.
\end{array}
\hspace{-0ex}
\end{equation}
Here, we have used $E_{k+1}^2 \leq \frac{1}{n}\sum_{i=1}^n\norms{e_i^k}^2 + \frac{1}{n}\sum_{i\in\Sc_k} \big[\norms{e_i^k}^2 + \norms{e_i^{k+1}}^2 \big]$ and the fact that $\Exp{\sum_{i\in\Sc_k} \big[\norms{e_i^k}^2 + \norms{e_i^{k+1}}^2 \big] \mid \Fc_{k-1}} = \sum_{i=1}^n\mathbf{p}_i\big[\norms{e_i^k}^2 + \norms{e_i^{k+1}}^2 \big]$.
Combining \eqref{eq:proof102_est1} and \eqref{eq:dr_descent_lem0_inexact11} we obtain 
\begin{equation*} 
\hspace{-0ex}
\arraycolsep=0.2em
\begin{array}{lcl}
\Exp{V_\eta^{k+1}(\bar{x}^{k+1}) \mid \Fc_{k-1}} & \leq & V_\eta^{k}(\bar{x}^k)  +  \frac{(1+\eta^2L^2)}{\eta(n+1)}\sum_{i=1}^n \norms{e_i^k}^2 \vspace{1ex}\\
&& + {~} \Big[ \frac{2(1 + \eta L)^2}{\gamma_4\eta\alpha^2 n } + \frac{(1+\eta^2L^2)}{\eta n} \Big] \sum_{i=1}^n\mathbf{p}_i \big[ \norms{e_i^k}^2 + \norms{e_i^{k+1}}^2 \big]  \vspace{1ex}\\
&& + {~} \frac{\alpha [ 2 - \alpha(L\eta + 1) - 2L^2\eta^2 - 4\alpha\gamma_4(1 + L^2\eta^2)]}{2\eta(1 + L^2\eta^2)\gamma_1 n} \sum_{i=1}^n \big[ \norms{e_i^k}^2 + \norms{e_i^{k+1}}^2 \big]  \vspace{1ex}\\
&& - {~} \frac{\hat{\mbf{p}}\alpha [ 2 - \alpha(L\eta + 1) - 2L^2\eta^2 - 4\alpha\gamma_4(1 + L^2\eta^2)]}{4\eta(1 + L^2\eta^2)(1+\gamma_1) n}\sum_{i=1}^n \norms{\bar{x}^k - x^{k}_i}^2.
\end{array}
\hspace{-0ex}
\end{equation*}
Rearranging terms in the last inequality and using $\mathbf{p}_i \leq 1$ and $\norms{e_i^k}^2 \leq \epsilon_{i,k}^2$ for all $i\in [n]$ and $k\geq 0$ from \eqref{eq:exact_prox}, we obtain \eqref{eq:dr_descent_lem_inexact}.
Note that if $\epsilon_{i,k} = 0$ for all $i \in [n]$ and $k \geq 0$, then we allow to set $\gamma_1 = \gamma_2 = \gamma_4 = \rho_1 = \rho_2 = 0$ as a consequence of the last statement in Lemma~\ref{lem:xbar_xk}, Lemma~\ref{le:grad_norm_bound},  and Lemma~\ref{lem:dr_key_est0_inexact}.
\end{proof}

\subsection{Convergence rate and communication complexity of Algorithm~\ref{alg:A1} -- The inexact variant}\label{subsec:inexact_alg1}
\begin{proof}[\textbf{The proof of Theorem \ref{thm:feddr_convergence_inexact}}]
First, from \eqref{eq:dr_descent_lem_inexact}, we have
\begin{equation}\label{eq:dr_descent_lem_inexact_eq1}
\arraycolsep=0.1em
\begin{array}{lcl}
\frac{(1+\eta L)^2(1+\gamma_2)}{n\eta^2}{\displaystyle\sum_{i=1}^n} \norms{x_i^k - \bar{x}^k}^2 & \leq &   \frac{2(1+\eta L)^2(1+\gamma_2) }{\eta^2\beta} \left[ V_\eta^{k}(\bar{x}^k) - \Exp{V_{\eta}^{k+1} (\bar{x}^{k+1}) \mid\Fc_{k-1}} \right], \vspace{0ex}\\
&& + {~} \frac{2(1+\eta L)^2(1+\gamma_2)}{n\eta^2\beta} \displaystyle\sum_{i=1}^n(\rho_1 \epsilon_{i,k}^2 + \rho_2 \epsilon_{i,k+1}^2).
\end{array}
\end{equation}
Substituting these estimates into \eqref{eq:grad_mapping_bound} of Lemma~\ref{le:grad_norm_bound}, we have
\begin{equation*} 
\arraycolsep=0.2em
\begin{array}{lcl}
\Vert \Gc_{\eta}(\bar{x}^k) \Vert^2 & \leq &  \frac{2(1+\eta L)^2(1+\gamma_2) }{\eta^2\beta}  \left[ V_\eta^{k}(\bar{x}^k) - \Exp{V_{\eta}^{k+1}(\bar{x}^{k+1}) \mid \Fc_{k-1}} \right] \vspace{1ex}\\
&& + {~}   \frac{2(1+\eta L)^2(1+\gamma_2)}{n\eta^2\beta} \sum_{i=1}^n(\rho_1 \epsilon_{i,k}^2 + \rho_2 \epsilon_{i,k+1}^2) +   \frac{(1 + \eta L)^2(1+\gamma_2)}{n\eta^2\gamma_2}\sum_{i=1}^n\epsilon_{i,k}^2.
\end{array}
\end{equation*}
Let us introduce three constants 
\begin{equation*}
\arraycolsep=0.2em
\begin{array}{lcl}
C_1 := \frac{2(1+\eta L)^2(1+\gamma_2) }{\eta^2\beta}, \quad C_2 := \rho_1C_1, \quad\text{and}\quad  C_3 :=  \rho_2C_1 +  \frac{(1 + \eta L)^2(1+\gamma_2)}{\eta^2\gamma_2}.
\end{array}
\end{equation*}
Now, taking the total expectation of the last estimate w.r.t. $\Fc_k$ and using the definition of $C_i$ ($i=1,2,3$), we have
\begin{equation*}
\Exp{\norms{\Gc_{\eta}(\bar{x}^k)}^2 }  \leq C_1 \left( \Exp{V_\eta^{k}(\bar{x}^k)} - \Exp{V_{\eta}^{k+1}(\bar{x}^{k+1})} \right) + \frac{C_2}{n}\sum_{i=1}^n\epsilon_{i,k}^2 + \frac{C_3}{n}\sum_{i=1}^n\epsilon_{i,k+1}^2.
\end{equation*}
Summing up this inequality from $k :=0$ to $k := K$, and multiplying the result by $\frac{1}{K+1}$, we get
\begin{equation*}
\arraycolsep=0.2em 
\begin{array}{lcl}
\frac{1}{K+1}\sum_{k=0}^K \Exp{\norms{\Gc_{\eta}(\bar{x}^k)}^2} & \leq &  C_1 \left( \Exp{V_\eta^{0}(\bar{x}^0)} - \Exp{V_{\eta}^{K+1}(\bar{x}^{K+1})} \right)  \vspace{1ex}\\
&& + {~}  \frac{1}{n(K+1)}\sum_{k=0}^K\sum_{i=1}^n\big(C_2\epsilon_{i,k}^2 + C_3\epsilon_{i,k+1}^2 \big).
\end{array}
\end{equation*}
Furthermore, from the initial condition $x^0_i := x^0$ and $\bar{x}^0 := x^0$, we have $V^0_{\eta}(\bar{x}^0) = g(x^0) + \frac{1}{n}\sum_{i=1}^n f_i(x^0) = F(x^0)$. 
In addition, $\Exp{V_{\eta}^{K+1}(\bar{x}^{K+1})} \geq F^{\star}$ due to \eqref{eq:V_lowerbound}.  
Consequently, the last estimate becomes
\begin{equation*} 
\frac{1}{K+1}\sum_{k=0}^K \Exp{\norms{\Gc_{\eta}(\bar{x}^k)}^2}  \leq \frac{C_1}{K+1}\left[ F(x^0) - F^{\star} \right] +  \frac{1}{n(K+1)}\sum_{k=0}^K\sum_{i=1}^n\big(C_2\epsilon_{i,k}^2 + C_3\epsilon_{i,k+1}^2 \big),
\end{equation*}
which proves \eqref{eq:dr_thm1_convergence_inexact}.

Finally, let $\tilde{x}^K$ be selected uniformly at random from $\{\bar{x}^0,\cdots,\bar{x}^K\}$ as the output of Algorithm~\ref{alg:A1}.
Then, from \eqref{eq:dr_thm1_convergence_inexact} and $\frac{1}{n}\sum_{i=1}^n\sum_{k=0}^{K+1}\epsilon_{i,k}^2 \leq M$ for all $K\geq 0$, we have
\begin{equation*}
\Exp{\norms{\Gc_{\eta}(\tilde{x}^K)}^2} = \frac{1}{K+1}\sum_{k=0}^K \Exp{\norms{ \Gc_{\eta}(\bar{x}^k)}^2} \le \frac{C_1\left[ F(x^0) - F^{\star}\right] + (C_2 + C_3)M}{K+1}.
\end{equation*}
Consequently, to guarantee $\Exp{\norms{\Gc_{\eta}(\tilde{x}^K)}^2} \leq \varepsilon^2$, from the last estimate we need to choose $K$ such that $\frac{C_1[ F(x^0) - F^{\star}] +  (C_2 + C_3)M }{K+1} \leq \varepsilon^2$. 
This condition leads to
\begin{equation*}
K + 1 \geq \frac{C_1[ F(x^0) - F^{\star}] + (C_2 + C_3)M}{\varepsilon^2}.
\end{equation*}
Hence, we can take $K := \left\lfloor \frac{C_1[ F(x^0) - F^{\star}] + (C_2 + C_3)M}{\varepsilon^2} \right\rfloor \equiv \BigO{\frac{1}{\varepsilon^2}}$ as its lower bound.
\end{proof}

\subsection{Convergence of Algorithm~\ref{alg:A1} when $\mbf{p}_i = \frac{1}{n}, i \in [n]$  -- The exact variant}\label{subsec:proof_cor1}
\begin{proof}[\textbf{The proof of Corollary~\ref{cor:special_case}}]
Under the exact variant, we can verify that the choice $\alpha = 1$ and $\eta = \frac{1}{3L}$ satisfies \eqref{eq:stepsizes_choice}. 
As a result, using $\hat{\mbf{p}}=\frac{1}{n}$, from \eqref{eq:beta_consts} we can exactly calculate $\beta = \frac{3L}{5n}$, while $\rho_1 = \rho_2 = 0$. 
Consequently, \eqref{eq:dr_descent_lem_inexact_eq1} leads to
\begin{equation*} 
\frac{(1+\eta L)^2}{n\eta^2}{\displaystyle\sum_{i=1}^n} \norms{x_i^k - \bar{x}^k}^2  \leq  \frac{2(1+\eta L)^2 }{\eta^2\beta} \left[ V_\eta^{k}(\bar{x}^k) - \Exp{V_{\eta}^{k+1} (\bar{x}^{k+1}) \mid\Fc_{k-1}} \right].
\end{equation*}
Alternatively, using Lemma~\ref{le:grad_norm_bound}, we have
\begin{equation*}
\Vert \Gc_{\eta}(\bar{x}^k) \Vert^2 \leq  \frac{(1 + \eta L)^2}{n\eta^2}  \sum_{i=1}^n \norms{x_i^k  - \bar{x}^k}^2.
\end{equation*}
Combining the last two inequalities, we obtain
\begin{equation*}
\arraycolsep=0.3em
\begin{array}{lcl}
\Vert \Gc_{\eta}(\bar{x}^k) \Vert^2  &\leq &   \frac{2(1+\eta L)^2 }{\eta^2\beta} \left[ V_\eta^{k}(\bar{x}^k) - \Exp{V_{\eta}^{k+1} (\bar{x}^{k+1}) \mid\Fc_{k-1}} \right] \vspace{1ex}\\
&= & \frac{160Ln}{3}\left[ V_\eta^{k}(\bar{x}^k) - \Exp{V_{\eta}^{k+1} (\bar{x}^{k+1}) \mid\Fc_{k-1}} \right].
\end{array}
\end{equation*}
Now, taking the total expectation of the last estimate w.r.t. $\Fc_k$, we have
\begin{equation*}
\Exp{\Vert \Gc_{\eta}(\bar{x}^k) \Vert^2}  \leq  \frac{160Ln}{3}\left( \Exp{V_\eta^{k}(\bar{x}^k)} - \Exp{V_{\eta}^{k+1} (\bar{x}^{k+1})} \right).
\end{equation*}
Summing this inequality from $k=0$ to $k = K$, and then multiplying the result by $\frac{1}{K+1}$, we obtain
\begin{equation}\label{eq:cor_special_case_eq4}
\frac{1}{K+1}\sum_{k=0}^K \Exp{\Vert \Gc_{\eta}(\bar{x}^k) \Vert^2}  \leq  \frac{160Ln}{3(K+1)}\left( \Exp{V_\eta^{k}(\bar{x}^0)} - \Exp{V_{\eta}^{k+1} (\bar{x}^{K+1})} \right).
\end{equation}
Recall that from the initial condition $x^0_i := x^0$ and $\bar{x}^0 := x^0$, we have $V^0_{\eta}(\bar{x}^0) = g(x^0) + \frac{1}{n}\sum_{i=1}^n f_i(x^0) = F(x^0)$. 
In addition, $\Exp{V_{\eta}^{K+1}(\bar{x}^{K+1})} \geq F^{\star}$ due to \eqref{eq:V_lowerbound}.  
As a result, \eqref{eq:cor_special_case_eq4} can be further upper bounded as
\begin{equation*}
\frac{1}{K+1}\sum_{k=0}^K \Exp{\Vert \Gc_{\eta}(\bar{x}^k) \Vert^2}  \leq  \frac{160Ln}{3(K+1)}\left( F(x^0) - F^{\star} \right),
\end{equation*}
which exactly proves \eqref{eq:dr_cor1_convergence_exact}.

Finally, if $\tilde{x}^K$ is selected uniformly at random from $\{\bar{x}^0,\cdots,\bar{x}^K\}$ as the output of Algorithm~\ref{alg:A1}, then we have
\begin{equation*}
\Exp{\norms{\Gc_{\eta}(\tilde{x}^K)}^2} = \frac{1}{K+1}\sum_{k=0}^K \Exp{\norms{ \Gc_{\eta}(\bar{x}^k)}^2} \le \frac{160Ln}{3(K+1)}\left( F(x^0) - F^{\star} \right).
\end{equation*}
Consequently, to guarantee $\Exp{\norms{\Gc_{\eta}(\tilde{x}^K)}^2} \leq \varepsilon^2$, from the last estimate we need to choose $K$ such that $\frac{160Ln}{3(K+1)}\left( F(x^0) - F^{\star} \right) \leq \varepsilon^2$. 
This condition leads to
\begin{equation*}
K + 1 \geq \frac{160Ln[ F(x^0) - F^{\star}]}{3\varepsilon^2}.
\end{equation*}
Hence, we can take $K := \left\lfloor \frac{160Ln[ F(x^0) - F^{\star}]}{3\varepsilon^2} \right\rfloor \equiv \BigO{\frac{1}{\varepsilon^2}}$ as its lower bound.
\end{proof}

\subsection{Convergence of Algorithm~\ref{alg:A1} under relative accuracies}\label{subsec:warmstart}
As suggested by a reviewer, we provide here an analysis of Algorithm~\ref{alg:A1}, when relative accuracies are used to evaluate $\prox_{\eta f_i}$.
Such a strategy has been widely used in the literature, including \cite{liu2021acceleration,Rockafellar1976b}.
Let us adopt this concept from  \cite[Definition 3.3]{liu2021acceleration} to our context as follows:
\begin{definition}\label{def:rel_error}
For any $i \in \Sc_k$, given $x^k_i$ and $y^{k+1}_i$, we say that $x_i^{k+1}$ approximates $\prox_{\eta f_i}(y^{k+1}_i)$ up to a \textbf{bounded relative error} if there is a constant $\theta_i > 0$ (independent of $k$) such that
\begin{equation}\label{eq:rel_error}
\norms{x^{k+1}_i - \prox_{\eta f_i}(y^{k+1}_i)}^2 \leq \varepsilon^2_{i,k+1} := \theta_i \norms{x^{k+1}_i - x^k_i}^2
\end{equation}
\end{definition}

The following theorem states convergence of Algorithm~\ref{alg:A1} under the bounded relative error \eqref{eq:rel_error}.
\begin{theorem}\label{thm:feddr_rel_error_bound_convergence}
Suppose that Assumptions~\ref{ass:A1}, \ref{ass:A2}, and \ref{ass:A3} hold, and the bounded relative error condition \eqref{eq:rel_error} in Definition~\ref{def:rel_error} holds with $\theta_i := \hat{\theta} \mbf{p}_i$ for a fixed constant $\hat{\theta} > 0$.
Let $\{(x^{k}_i,y^{k}_i, \hat{x}^{k}_i,\bar{x}^{k})\}$ be generated by Algorithm~\ref{alg:A1} using a relaxation stepsize $\alpha = 1$ and $x_i^0 := \prox_{\eta f_i}(y_i^0)$ for $i \in [n]$. 
If $\gamma_4$ and $\hat{\theta}$ are chosen such that $1 - 4\gamma_4 - 8\hat{C}\hat{\theta} > 0$ and $\eta$ is chosen by
\begin{equation}\label{eq:thm_rel_error_eta_cond}
0 < \eta < \bar{\eta} := \tfrac{\sqrt{1 + 8(1+2\gamma_4)(1 - 4\gamma_4 - 8\hat{C}\hat{\theta})} - 1}{4L(1+2\gamma_4)},
\end{equation}
where $\hat{C} := \max\left\{1+\eta^2L^2, \frac{2(1 + \eta L)^2}{\gamma_4 } \right\}$, then the following bound holds
\begin{equation}\label{eq:thm_rel_error}
\frac{1}{K+1}\sum_{k=0}^K \Exp{\Vert \Gc_{\eta}(\bar{x}^k) \Vert^2} \leq \frac{\widetilde{C}\left[ F(x^0) - F^{\star} \right]}{(K+1)},
\end{equation}
where $\widetilde{C} > 0$ is computed by
\begin{equation}\label{eq:c_under_def}
\widetilde{C} := \frac{\hat{\mbf{p}}^2 \eta[ 1 - L\eta - 2L^2\eta^2 - 4\gamma_4(1 + L^2\eta^2) - 8\hat{C}\hat{\theta}]}{4\left[ 4(1+L^2\eta^2 + 2\hat{\theta}) + \hat{\mbf{p}} \hat{\theta}\right](1 + \eta L)^2}.
\end{equation}
The remaining conclusions of this theorem are similar to Theorem~\ref{thm:feddr_convergence_inexact}, and we omit  them here.
\end{theorem}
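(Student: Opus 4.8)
The plan is to mirror the proof of Theorem~\ref{thm:feddr_convergence_inexact}, but to exploit the relative-error bound \eqref{eq:rel_error} so that the error terms can be \emph{absorbed} into the descent term rather than accumulated as an external sum. First I would invoke the Sure Descent Lemma~\ref{lem:dr_key_est0_inexact} with $\gamma_3 := 1$ (so the $\norms{\bar{x}^{k+1}-\bar{x}^k}^2$ coefficient vanishes), $\alpha = 1$, and with the $\gamma_4$-term kept. Since $x_i^0 := \prox_{\eta f_i}(y_i^0)$ exactly, we have $e_i^0 = 0$, and because non-active users carry their errors forward, the only nonzero errors at step $k$ are $e_i^{k+1}$ for $i\in\Sc_k$ with $\norms{e_i^{k+1}}^2 \le \varepsilon_{i,k+1}^2 = \theta_i\norms{x_i^{k+1}-x_i^k}^2 = \hat\theta\mbf{p}_i\norms{x_i^{k+1}-x_i^k}^2$. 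Substituting into \eqref{eq:dr_descent_lem0_inexact}, every error contribution is now proportional to $\norms{x_i^{k+1}-x_i^k}^2$, so after taking $\Exp{\cdot\mid\Fc_{k-1}}$ and using $\Exp{\sum_{i\in\Sc_k} W_i^k\mid\Fc_{k-1}} = \sum_i \mbf{p}_i W_i^k$ (Assumption~\ref{ass:A3}), the net coefficient of $\frac{1}{n}\sum_{i}\mbf{p}_i^2\norms{x_i^{k+1}-x_i^k}^2$ is, up to constants, $2 - L\eta - 2L^2\eta^2 - 4\gamma_4(1+L^2\eta^2) - 8\hat{C}\hat\theta$ with $\hat{C} := \max\{1+\eta^2L^2, 2(1+\eta L)^2/\gamma_4\}$; the hypotheses $1 - 4\gamma_4 - 8\hat{C}\hat\theta > 0$ together with the $\bar\eta$ bound \eqref{eq:thm_rel_error_eta_cond} are precisely what makes this quantity strictly positive (the quadratic-in-$\eta$ root computation giving \eqref{eq:thm_rel_error_eta_cond}).

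Next I would relate $\sum_i\mbf{p}_i^2\norms{x_i^{k+1}-x_i^k}^2$ back to $\sum_i\norms{\bar{x}^k - x_i^k}^2$. From Lemma~\ref{lem:xbar_xk} with $\gamma_1$ and $\alpha = 1$, and using $\norms{e_i^{k+1}}^2 \le \hat\theta\mbf{p}_i\norms{x_i^{k+1}-x_i^k}^2$, $\norms{e_i^k}^2 \le \hat\theta\mbf{p}_i\norms{x_i^k-x_i^{k-1}}^2$ (or $0$ at $k=0$), one gets a bound of the form $\mbf{p}_i\norms{\bar{x}^k - x_i^k}^2 \le c_1\norms{x_i^{k+1}-x_i^k}^2 + c_2\norms{x_i^k - x_i^{k-1}}^2$ after multiplying by $\mbf{p}_i$; summing over $i$ and then over $k$, the $\norms{x_i^k-x_i^{k-1}}^2$ telescopes-shift contributions combine with the $\norms{x_i^{k+1}-x_i^k}^2$ term and can be controlled because both are already bounded by the descent of $V_\eta^k$. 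This yields $\frac{\hat{\mbf{p}}}{n}\sum_i\norms{\bar{x}^k-x_i^k}^2 \le (\text{const})\,[V_\eta^k - \Exp{V_\eta^{k+1}\mid\Fc_{k-1}}]$ in expectation. Then Lemma~\ref{le:grad_norm_bound} with $\gamma_2$ chosen appropriately (again its error term $\norms{e_i^k}^2$ is absorbed via \eqref{eq:rel_error}) converts this into $\Exp{\norms{\Gc_\eta(\bar{x}^k)}^2} \le \widetilde{C}\,(\Exp{V_\eta^k} - \Exp{V_\eta^{k+1}})$, with $\widetilde{C}$ assembled from $\hat{\mbf{p}}$, $\eta$, $L$, $\gamma_4$, $\hat\theta$ exactly as in \eqref{eq:c_under_def}.

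Finally, I would sum from $k=0$ to $K$, divide by $K+1$, telescope, and use $V_\eta^0(\bar{x}^0) = F(x^0)$ (valid since $e_i^0 = 0$, so $x_i^0 = \bar{x}^0 = x^0$) together with $\Exp{V_\eta^{K+1}} \ge F^\star$ from \eqref{eq:V_lowerbound}; this gives \eqref{eq:thm_rel_error}. The $\varepsilon$-stationarity and $\BigO{\varepsilon^{-2}}$ complexity conclusions then follow verbatim from the argument in the proof of Theorem~\ref{thm:feddr_convergence_inexact}. The main obstacle I anticipate is the bookkeeping in the second paragraph: the relative-error term at iteration $k$ introduces $\norms{x_i^{k+1}-x_i^k}^2$ \emph{and} $\norms{x_i^k-x_i^{k-1}}^2$ (through $e_i^k$), so one must carefully re-index and verify that the total coefficient of each squared increment stays nonnegative when everything is summed — this is where the specific scaling $\theta_i = \hat\theta\mbf{p}_i$ and the precise constant $\hat{C}$ are needed to close the argument, and getting the constant in \eqref{eq:c_under_def} exactly right requires tracking all of $\gamma_1, \gamma_4, \gamma_2$ simultaneously.
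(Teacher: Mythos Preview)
Your overall strategy matches the paper's---start from Lemma~\ref{lem:dr_key_est0_inexact} with $\alpha=1$ and $\gamma_3=1$, collect the error coefficients into $\hat{C}$, then pass through Lemmas~\ref{lem:xbar_xk} and~\ref{le:grad_norm_bound} and telescope---but there is one genuine gap and one key tactical difference. The gap: your claim that ``the only nonzero errors at step $k$ are $e_i^{k+1}$ for $i\in\Sc_k$'' is false. If user $i$ was active at some earlier round $j<k$ but not since, then $e_i^k=e_i^{j+1}\neq 0$, and this term sits explicitly inside $E_{k+1}^2=\frac{1}{n}\sum_{i\notin\Sc_k}\|e_i^k\|^2+\frac{1}{n}\sum_{i\in\Sc_k}\|e_i^{k+1}\|^2$ in \eqref{eq:dr_descent_lem0_inexact}. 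You cannot drop it. A downstream symptom is your $\mbf{p}_i^2$ coefficient: in the paper the descent term carries a single $\mbf{p}_i$ (from $\Exp{\sum_{i\in\Sc_k}\,\cdot\,}$), the error term is first relaxed to $\sum_{i=1}^n\|e_i^{k+1}\|^2$ with no $\mbf{p}_i$, and only afterwards does $\theta_i=\hat\theta\mbf{p}_i$ supply a matching $\mbf{p}_i$---so both sides end with the \emph{same} power, which is precisely why that scaling is chosen.

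The tactical difference is that the paper does \emph{not} substitute the relative-error bound per iteration and then chase cross-iteration shifts. Instead it (i) takes full expectation of \eqref{eq:dr_descent_lem0_inexact} and \emph{sums over $k=0,\ldots,K$ first}, producing $\sum_{k}\sum_i\Exp{\|e_i^k\|^2+\|e_i^{k+1}\|^2}$; (ii) uses $e_i^0=0$ together with the index shift $\sum_{k=0}^K\|e_i^k\|^2\le\sum_{k=0}^K\|e_i^{k+1}\|^2$ to collapse this into $2\sum_k\sum_i\Exp{\|e_i^{k+1}\|^2}$; and (iii) only \emph{then} applies $\|e_i^{k+1}\|^2\le\theta_i\|x_i^{k+1}-x_i^k\|^2$. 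The same sum-first-then-substitute trick is repeated separately when invoking Lemma~\ref{lem:xbar_xk} (with $\gamma_1=1$) and Lemma~\ref{le:grad_norm_bound} (with $\gamma_2=1$), after which the two summed inequalities are combined through the constant $\widetilde{C}$ in \eqref{eq:c_under_def}. This order of operations eliminates the re-indexing obstacle you flag at the end: no $\|x_i^k-x_i^{k-1}\|^2$ term ever appears, because the sum over $k$ has already absorbed both $e_i^k$ and $e_i^{k+1}$ into one quantity before any substitution is made.
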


\begin{proof}
Firstly, starting from \eqref{eq:dr_descent_lem0_inexact}, using $\alpha = 1$, choosing $\gamma_3 = 1$, and noting that $E_{k+1}^2 := \frac{1}{n}\sum_{i\notin\Sc_k}\norms{e_i^k}^2 + \frac{1}{n}\sum_{i\in\Sc_k}\norms{e_i^{k+1}}^2$, we have
\begin{equation*}
\arraycolsep=0.2em
\begin{array}{lcl}
V_\eta^{k+1}(\bar{x}^{k+1}) & \leq & V_\eta^{k}(\bar{x}^k) - \frac{[ 1 - L\eta - 2L^2\eta^2 - 4\gamma_4(1 + L^2\eta^2)]}{2\eta n}\sum_{i\in\Sc_k} \norms{x^{k+1}_i - x^{k}_i}^2 \vspace{1ex}\\
&& + \frac{(1+\eta^2L^2)}{\eta n} \left(\sum_{i\notin\Sc_k}\norms{e_i^k}^2 + \sum_{i\in\Sc_k}\norms{e_i^{k+1}}^2\right) \vspace{1ex}\\
&& + {~}   \frac{2(1 + \eta L)^2}{\gamma_4\eta n }\sum_{i\in\Sc_k}[\norms{e_i^k}^2 + \norms{e_i^{k+1}}^2].
\end{array}
\end{equation*}
If we define $\hat{C} := \max\left\{1+\eta^2L^2, \frac{2(1 + \eta L)^2}{\gamma_4 } \right\}$, then we can further upper bound this estimate as
\begin{equation*}
\arraycolsep=0.3em
\begin{array}{lcl}
V_\eta^{k+1}(\bar{x}^{k+1}) & \leq & V_\eta^{k}(\bar{x}^k) - \frac{[ 1 - L\eta - 2L^2\eta^2 - 4\gamma_4(1 + L^2\eta^2)]}{2\eta n}\sum_{i\in\Sc_k} \norms{x^{k+1}_i - x^{k}_i}^2 \vspace{1ex}\\
&& + {~} \frac{\hat{C}}{ n\eta} \left(\sum_{i\notin\Sc_k}\norms{e_i^k}^2 + \sum_{i\in\Sc_k}\norms{e_i^{k+1}}^2\right)  +  \frac{\hat{C}}{ n\eta }\sum_{i\in\Sc_k}[\norms{e_i^k}^2 + \norms{e_i^{k+1}}^2] \vspace{1ex}\\
& = & V_\eta^{k}(\bar{x}^k) - \frac{[ 1 - L\eta - 2L^2\eta^2 - 4\gamma_4(1 + L^2\eta^2)]}{2\eta n}\sum_{i\in\Sc_k} \norms{x^{k+1}_i - x^{k}_i}^2 \vspace{1ex}\\
&& + {~}  \frac{\hat{C}}{ n \eta} \left(\sum_{i=1}^n \norms{e_i^k}^2 + 2\sum_{i\in\Sc_k}\norms{e_i^{k+1}}^2\right) \vspace{1ex}\\
& \leq & V_\eta^{k}(\bar{x}^k) - \frac{[ 1 - L\eta - 2L^2\eta^2 - 4\gamma_4(1 + L^2\eta^2)]}{2\eta n}\sum_{i\in\Sc_k} \norms{x^{k+1}_i - x^{k}_i}^2 \vspace{1ex}\\
&& + {~} \frac{\hat{C}}{ n \eta} \left(\sum_{i=1}^n \norms{e_i^k}^2 + 2\sum_{i=1}^n\norms{e_i^{k+1}}^2\right) \vspace{1ex}\\
& \leq & V_\eta^{k}(\bar{x}^k) - \frac{[ 1 - L\eta - 2L^2\eta^2 - 4\gamma_4(1 + L^2\eta^2)]}{2\eta n}\sum_{i\in\Sc_k} \norms{x^{k+1}_i - x^{k}_i}^2 \vspace{1ex}\\
&& + {~} \frac{2\hat{C}}{ n \eta}\sum_{i=1}^n \left( \norms{e_i^k}^2 +\norms{e_i^{k+1}}^2\right) .
\end{array}
\end{equation*}
Rearranging terms and noting that $\Exp{\sum_{i\in\Sc_k} \norms{x^{k+1}_i - x^k_i}^2 \mid \Fc_{k-1}} = \sum_{i=1}^n \mbf{p}_i  \norms{x^{k+1}_i - x^k_i}^2 $, we obtain from the last estimate that
\begin{equation*} 
\arraycolsep=0.3em
\begin{array}{lcl}
\frac{[ 1 - L\eta - 2L^2\eta^2 - 4\gamma_4(1 + L^2\eta^2)]}{2\eta n}\sum_{i=1}^n \mbf{p}_i\norms{x^{k+1}_i - x^{k}_i}^2  & \leq & V_\eta^{k}(\bar{x}^k) - V_\eta^{k+1}(\bar{x}^{k+1}) \vspace{1ex}\\
&& + {~} \frac{2\hat{C}}{ n \eta}\sum_{i=1}^n \left( \norms{e_i^k}^2 +\norms{e_i^{k+1}}^2\right) .
\end{array}
\end{equation*}
Now, taking the total expectation of the last inequality w.r.t. $\Fc_k$, we have
\begin{equation*} 
\hspace{-0ex}
\arraycolsep=0.2em
\begin{array}{ll}
\frac{[ 1 - L\eta - 2L^2\eta^2 - 4\gamma_4(1 + L^2\eta^2)]}{2\eta n} & \sum_{i=1}^n \mbf{p}_i \Exp{\norms{x^{k+1}_i - x^{k}_i}^2}  \vspace{1ex}\\
& \leq  \Exp{V_\eta^{k}(\bar{x}^k)}   - \Exp{V_\eta^{k+1}(\bar{x}^{k+1})} + \frac{2\hat{C}}{n \eta} \sum_{i=1}^n\Exp{\norms{e_i^k}^2 + \norms{e_i^{k+1}}^2}.
\end{array}
\end{equation*}
Summing this inequality from $k=0$ to $k = K$, we get
\begin{equation*} 
\hspace{-0ex}
\arraycolsep=0.2em
\begin{array}{ll}
\frac{[ 1 - L\eta - 2L^2\eta^2 - 4\gamma_4(1 + L^2\eta^2)]}{2\eta n}  & \sum_{k=0}^K\sum_{i=1}^n  \mbf{p}_i\Exp{\norms{x^{k+1}_i - x^k_i}^2}
 \leq   \Exp{V_\eta^{0}(\bar{x}^0)}   - \Exp{V_\eta^{K+1}(\bar{x}^{K+1})}  \vspace{1ex}\\
& + {~} \frac{2\hat{C}}{n \eta}\sum_{k=0}^K \sum_{i=1}^n\Exp{\norms{e_i^k}^2 + \norms{e_i^{k+1}}^2}.
\end{array}
\end{equation*}
If we choose $\varepsilon_{i,0} = 0$ for $i\in[n]$, then the last estimate reduces to
\begin{equation}\label{eq:rel_error_eq5}
\hspace{-0ex}
\arraycolsep=0.2em
\begin{array}{ll}
\frac{[ 1 - L\eta - 2L^2\eta^2 - 4\gamma_4(1 + L^2\eta^2)]}{2\eta n} & \displaystyle\sum_{k=0}^K\sum_{i=1}^n  \mbf{p}_i\Exp{\norms{x^{k+1}_i - x^k_i}^2}   \leq  \Exp{V_\eta^{0}(\bar{x}^0)}   - \Exp{V_\eta^{K+1}(\bar{x}^{K+1})}  \vspace{0ex}\\
& + {~} \frac{4\hat{C}}{n \eta} \displaystyle \sum_{k=0}^K \sum_{i=1}^n\Exp{\norms{e_i^{k+1}}^2}.
\end{array}
\hspace{-4ex}
\end{equation}
From \eqref{eq:rel_error} in Definition~\ref{def:rel_error}, we have $\norms{e_i^{k+1}}^2 = \norms{x^{k+1}_i - \prox_{\eta f_i}(y^{k+1}_i)}^2 \leq \varepsilon^2_{i,{k+1}} := \theta_i \norms{x^{k+1}_i - x^{k}_i}^2$. 
Using this condition in \eqref{eq:rel_error_eq5}, we have
\begin{equation*} 
\hspace{-0ex}
\arraycolsep=0.2em
\begin{array}{lcl}
\frac{[ 1 - L\eta - 2L^2\eta^2 - 4\gamma_4(1 + L^2\eta^2)]}{2\eta n}  \displaystyle \sum_{k=0}^K\sum_{i=1}^n  \mbf{p}_i\Exp{\norms{x^{k+1}_i - x^k_i}^2}
& \leq & \Exp{V_\eta^{0}(\bar{x}^0)}   - \Exp{V_\eta^{K+1}(\bar{x}^{K+1})} \vspace{0ex}\\
&& + {~} \frac{4\hat{C}}{n \eta}  \displaystyle \sum_{k=0}^K \sum_{i=1}^n \theta_i\Exp{\norms{x^{k+1}_i - x^k_i}^2}.
\end{array}
\end{equation*}
Now, we can choose $\theta_i$ such that $\theta_i = \hat{\theta} \mbf{p}_i$ for given $\hat{\theta} > 0$. 
Plugging this choice of $\theta_i$  into the last estimate, we have
\begin{equation*}
\hspace{-0ex}
\arraycolsep=0.2em
\begin{array}{lcl}
\frac{[ 1 - L\eta - 2L^2\eta^2 - 4\gamma_4(1 + L^2\eta^2)]}{2\eta n}  \displaystyle \sum_{k=0}^K\sum_{i=1}^n  \mbf{p}_i\Exp{\norms{x^{k+1}_i - x^k_i}^2} & \leq & \Exp{V_\eta^{0}(\bar{x}^0)}   - \Exp{V_\eta^{K+1}(\bar{x}^{K+1})} \vspace{0ex}\\
&&  + {~} \frac{4\hat{C}\hat{\theta}}{n \eta}  \displaystyle \sum_{k=0}^K \sum_{i=1}^n \mbf{p}_i\Exp{\norms{x^{k+1}_i - x^k_i}^2}.
\end{array}
\end{equation*}
Rearranging terms in the above estimate, we arrive at
\begin{equation*} 
\hspace{-0ex}
\arraycolsep=0.2em
\begin{array}{lcl}
\frac{[ 1 - L\eta - 2L^2\eta^2 - 4\gamma_4(1 + L^2\eta^2) - 8\hat{C}\hat{\theta}]}{2\eta n}  \displaystyle \sum_{k=0}^K\sum_{i=1}^n  \mbf{p}_i\Exp{\norms{x^{k+1}_i - x^k_i}^2}  \leq \Exp{V_\eta^{0}(\bar{x}^0)}   - \Exp{V_\eta^{K+1}(\bar{x}^{K+1})}.
\end{array}
\end{equation*}
From the initial condition $x^0_i := x^0$ and $\bar{x}^0 := x^0$, we have $V^0_{\eta}(\bar{x}^0) = g(x^0) + \frac{1}{n}\sum_{i=1}^n f_i(x^0) = F(x^0)$. 
In addition, $\Exp{V_{\eta}^{K+1}(\bar{x}^{K+1})} \geq F^{\star}$ due to \eqref{eq:V_lowerbound}. 
Using these conditions, the last estimate can be further upper bounded by
\begin{equation}\label{eq:rel_error_eq8}
\frac{\hat{\mbf{p}}[ 1 - L\eta - 2L^2\eta^2 - 4\gamma_4(1 + L^2\eta^2) - 8\hat{C}\hat{\theta}]}{2\eta n}\sum_{k=0}^K\sum_{i=1}^n  \Exp{\norms{x^{k+1}_i - x^k_i}^2} \leq  F(x^0) - F^{\star},
\end{equation}
where we have used $\mbf{p}_i \ge \hat{\mbf{p}}$ for all $i\in [n]$.

Now, we need to choose $\eta$ and $\hat{\theta}$ such that $1 - L\eta - 2L^2\eta^2 - 4\gamma_4(1 + L^2\eta^2) - 8\hat{C}\hat{\theta} > 0$.
First, we need to choose $\gamma_4 > 0$ and $\hat{\theta} > 0$ such that $1 - 4\gamma_4 - 8\hat{C}\hat{\theta} > 0$.
Then, the condition for $\eta$ is
\begin{equation*}
\arraycolsep=0.2em
\begin{array}{lcl}
0 < \eta < \bar{\eta} := \frac{\sqrt{1 + 8(1+2\gamma_4)(1 - 4\gamma_4 - 8\hat{C}\hat{\theta})} - 1}{4L(1+2\gamma_4)}.
\end{array}
\end{equation*}
Next, we connect the term $\norms{x^{k+1}_i - x^k_i}^2$ with $\Vert \Gc_{\eta}(\bar{x}^k) \Vert^2 $ as follows. From \eqref{eq:xbar_xk_relation} with $\alpha = 1$ and $\gamma_1 = 1$, we have
\begin{equation*}
\frac{1}{4(1+L^2\eta^2)}\sum_{i\in\Sc_k}\norms{\bar{x}^k - x_i^k}^2 \leq \sum_{i\in\Sc_k}\Big[ \norms{x_i^{k+1} - x_i^k}^2 +  \frac{1}{(1+L^2\eta^2)}\big( \norms{e^{k+1}_i}^2 + \norms{e_i^k}^2\big)\Big].
\end{equation*}
Taking expecatation w.r.t. $\Sc_k$ given $\Fc_{k-1}$, and then taking full expectation, we obtain
\begin{equation*}
\begin{array}{lcl}
\frac{1}{4(1+L^2\eta^2)}\sum_{i=1}^n \mbf{p}_i \Exp{\norms{\bar{x}^k - x_i^k}^2}  &\leq& \sum_{i=1}^n \mbf{p}_i \Exp{ \norms{x_i^{k+1} - x_i^k}^2}  \vspace{1ex}\\
&& + {~} \frac{1}{(1+L^2\eta^2)}\sum_{i=1}^n \mbf{p}_i\Exp{ \norms{e^{k+1}_i}^2 + \norms{e_i^k}^2} \vspace{1ex}\\
&\leq& \sum_{i=1}^n \mbf{p}_i \Exp{ \norms{x_i^{k+1} - x_i^k}^2} \vspace{1ex}\\
&& + {~} \frac{1}{(1+L^2\eta^2)}\sum_{i=1}^n \Exp{ \norms{e^{k+1}_i}^2 + \norms{e_i^k}^2}.
\end{array}
\end{equation*}
Summing this inequality from $k=0$ to $k = K$, we get
\begin{equation*}
\arraycolsep=0.3em
\begin{array}{lcl}
\frac{1}{4(1+L^2\eta^2)}\sum_{k=0}^K\sum_{i=1}^n \mbf{p}_i \Exp{\norms{\bar{x}^k - x_i^k}^2}  &\leq& \sum_{k=0}^K\sum_{i=1}^n \mbf{p}_i \Exp{ \norms{x_i^{k+1} - x_i^k}^2} \vspace{1ex}\\
&& + {~}  \frac{1}{(1+L^2\eta^2)}\sum_{k=0}^K\sum_{i=1}^n \Exp{ \norms{e^{k+1}_i}^2 + \norms{e_i^k}^2}.
\end{array}
\end{equation*}
Using the condition that $\epsilon_{i,0} = 0$, similar to \eqref{eq:rel_error_eq5}, we have
\begin{equation*} 
\begin{array}{lcl}
\frac{1}{4(1+L^2\eta^2)}\sum_{k=0}^K\sum_{i=1}^n \mbf{p}_i \Exp{\norms{\bar{x}^k - x_i^k}^2}  &\leq& \sum_{k=0}^K\sum_{i=1}^n \mbf{p}_i \Exp{ \norms{x_i^{k+1} - x_i^k}^2} \vspace{1ex}\\
&& + {~}  \frac{2}{(1+L^2\eta^2)}\sum_{k=0}^K\sum_{i=1}^n \Exp{ \norms{e^{k+1}_i}^2}
\vspace{1ex}\\
&\leq& \sum_{k=0}^K\sum_{i=1}^n \mbf{p}_i \Exp{ \norms{x_i^{k+1} - x_i^k}^2} \vspace{1ex}\\
&& + {~}  \frac{2}{(1+L^2\eta^2)}\sum_{k=0}^K\sum_{i=1}^n \theta_i \Exp{ \norms{x^{k+1}_i - x^k_i}^2}\vspace{1ex}\\
&\leq& \frac{1+L^2\eta^2 + 2 \hat{\theta}}{(1+L^2\eta^2)}\sum_{k=0}^K\sum_{i=1}^n \mbf{p}_i \Exp{ \norms{x_i^{k+1} - x_i^k}^2} .
\end{array}
\end{equation*}
In fact, we can further bound this estimate as
\begin{equation*}
\frac{\hat{\mbf{p}}}{4(1+L^2\eta^2)}\sum_{k=0}^K\sum_{i=1}^n  \Exp{\norms{\bar{x}^k - x_i^k}^2} \leq \frac{1+L^2\eta^2 + 2 \hat{\theta}}{(1+L^2\eta^2)}\sum_{k=0}^K\sum_{i=1}^n \Exp{ \norms{x_i^{k+1} - x_i^k}^2} ,
\end{equation*}
where we have used $\hat{\mbf{p}}\le \mbf{p}_i \le 1$.
Next, multiply both sides of this inequality by $\frac{8(1+L^2\eta^2)(1 + \eta L)^2}{\hat{\mbf{p}}\eta^2 n}$, we obtain
\begin{equation}\label{eq:rel_error_eq11}
 \hspace{-1ex}
\frac{2(1 + \eta L)^2}{n\eta^2} \sum_{k=0}^K\sum_{i=1}^n \Exp{\norms{\bar{x}^k - x_i^k}^2} 
 \leq  \frac{8(1+L^2\eta^2 + 2\hat{\theta})(1 + \eta L)^2}{ \hat{\mbf{p}}\eta^2 n}\sum_{k=0}^K\sum_{i=1}^n \Exp{ \norms{x_i^{k+1} {\!\!\!} - x_i^k}^2}.
 \hspace{-3ex}
\end{equation}
Furthermore, from \eqref{eq:grad_mapping_bound}, choosing $\gamma_2 = 1$ and  summing the result from $k=0$ to $k= K$, we get
\begin{equation}\label{eq:rel_error_eq12}
\arraycolsep=0.3em
\begin{array}{lcl}
\sum_{k=0}^K \Exp{\Vert \Gc_{\eta}(\bar{x}^k) \Vert^2}  &\leq&  \frac{2(1 + \eta L)^2}{n\eta^2} \sum_{k=0}^K\sum_{i=1}^n  \Exp{\norms{x_i^k  - \bar{x}^k}^2} \\
&&+ {~} \frac{2(1 + \eta L)^2}{n \eta^2}\sum_{k=0}^K \sum_{i=1}^n \Exp{\norms{e_i^k}^2} \vspace{1ex}\\
&\leq&  \frac{2(1 + \eta L)^2}{n\eta^2} \sum_{k=0}^K\sum_{i=1}^n  \Exp{\norms{x_i^k  - \bar{x}^k}^2} \\
&&+ {~} \frac{2(1 + \eta L)^2}{n \eta^2}\sum_{k=0}^K \sum_{i=1}^n \theta_i \Exp{\norms{x^{k+1}_i - x^k_i}^2} \vspace{1ex}\\
&=&  \frac{2(1 + \eta L)^2}{n\eta^2} \sum_{k=0}^K\sum_{i=1}^n  \Exp{\norms{x_i^k  - \bar{x}^k}^2} \\
&&+ {~} \frac{2(1 + \eta L)^2\hat{\theta}}{n \eta^2 }\sum_{k=0}^K \sum_{i=1}^n \mbf{p}_i \Exp{\norms{x^{k+1}_i - x^k_i}^2}\vspace{1ex}\\
\end{array}
\end{equation}
where the last equality comes from the fact that $\theta_i = \hat{\theta} \mbf{p}_i$.

Now, plugging \eqref{eq:rel_error_eq11} into \eqref{eq:rel_error_eq12} and using $\mbf{p}_i \le 1$, we can get
\begin{equation}\label{eq:rel_error_eq13}
 \hspace{0ex}
\arraycolsep=0.2em
\begin{array}{lcl}
\sum_{k=0}^K \Exp{\Vert \Gc_{\eta}(\bar{x}^k) \Vert^2} &\leq& \left[ \frac{8\left[1+L^2\eta^2 + 2 \hat{\theta}\right](1 + \eta L)^2}{ \hat{\mbf{p}} \eta^2 n} + \frac{2(1 + \eta L)^2 \hat{\theta}}{n\eta^2}\right] \sum_{k=0}^K \sum_{i=1}^n  \Exp{\norms{x^{k+1}_i - x^k_i}^2} \vspace{1ex}\\
&=& \frac{2\left[ 4(1+L^2\eta^2 + 2\hat{\theta}) + \hat{\mbf{p}} \hat{\theta}\right](1 + \eta L)^2}{ \hat{\mbf{p}} n \eta^2} \sum_{k=0}^K \sum_{i=1}^n  \Exp{\norms{x^{k+1}_i - x^k_i}^2} .
\end{array}
 \hspace{-3ex}
\end{equation}
From the definition of $\widetilde{C}$ in \eqref{eq:c_under_def}, we can verify that
\begin{equation*}
\frac{\hat{\mbf{p}}[ 1 - L\eta - 2L^2\eta^2 - 4\gamma_4(1 + L^2\eta^2) - 8\hat{C}\hat{\theta}]}{2\eta n \widetilde{C}} = \frac{2\left[ 4(1+L^2\eta^2 + 2\hat{\theta}) + \hat{\mbf{p}} \hat{\theta}\right](1 + \eta L)^2}{ \hat{\mbf{p}} n \eta^2}.
\end{equation*}
Next, multiplying both sides of \eqref{eq:rel_error_eq8} by $\frac{1}{\widetilde{C}}$, and then using \eqref{eq:rel_error_eq13}, we obtain
\begin{equation*} 
\begin{array}{lcl}
\sum_{k=0}^K \Exp{\Vert \Gc_{\eta}(\bar{x}^k) \Vert^2} &\leq& \frac{2\left[ 4(1+L^2\eta^2 + \hat{\theta}) + \hat{\mbf{p}} \hat{\theta}\right](1 + \eta L)^2}{ \hat{\mbf{p}} n \eta^2} \sum_{k=0}^K \sum_{i=1}^n  \Exp{\norms{x^{k+1}_i - x^k_i}^2} \vspace{1ex}\\
&\overset{\eqref{eq:rel_error_eq8}}{\leq}&  \widetilde{C}\left[ F(x^0) - F^{\star} \right].
\end{array}
\end{equation*}
Finally, multiplying both sides of this inequality by $\frac{1}{K+1}$, we obtain \eqref{eq:thm_rel_error}.
\end{proof}

\section{Analysis of Algorithm~\ref{alg:A2}: The Asynchronous Variant --- \textbf{asyncFedDR}}\label{apdx:sec:asynFedDR}
This section provides the full proof of  Lemma~\ref{lem:asdr_key_est} and Theorem~\ref{thm:asdr_convergence} in the main text.
However, let us first discuss an asynchronous implementation of Algorithm~\ref{alg:A2} and present the full description of our probabilistic models based on \cite{cannelli2019asynchronous} used in Section~\ref{sec:asynFedDR}.

\subsection{Asynchronous implementation: Dual-memory approach}
Let us provide more details on the implementation of our asynchronous algorithm. When a user finishes its local update, the updated model (or model difference) is sent to the server for a proximal aggregation step. 
When the server is performing a proximal aggregation step, other users might need to read from the global model. 
To allow concurrent read/write operations, one easy method is to have two models stored on the server, denoted as model 1 and model 2. 
At any given time, one model is on ``read'' state (it is supposed to be read from) and the other will be on ``write'' state (it will be written on when the server finishes aggregation). 
Suppose model 1 is on a ``read'' state and model 2 is on a ``write'' state, then all users can read from model 1. 
When the server completes the proximal aggregation, model 2 becomes the latest model and it will change to a ``read'' state while model 1 is on a ``write'' state. 
This implementation detail is also discussed in \cite{peng2016arock}, which is termed by a \textit{dual-memory approach}.

\subsection{Probabilistic model}\label{apdx:subsec:prob_model}
Let $\xi^k := (i_k, d^k)$ be a realization of a joint random vector $\hat{\xi}^k := (\hat{i}_k, \hat{d}^k)$ of the user index $\hat{i}_k \in [n]$ and the delay vector $\hat{d}^k = (\hat{d}^k_1,\cdots, \hat{d}^k_n) \in \Dc := \set{0,1,\cdots, \tau}^n$ presented at the current iteration $k$.
We consider $k+1$ random vectors $\hat{\xi}^l$ $(0\leq l\leq k)$ that form a concatenate  random vector $\hat{\xi}^{0:k} := (\hat{\xi}^0, \cdots, \hat{\xi}^k)$.
We also use $\xi^{0:k} = (\xi^0, \xi^1, \cdots, \xi^k)$ for $k+1$ possible values of the random vector $\hat{\xi}^{0:k}$.
Let $\Omega$ be the sample space of all sequences $\omega := \{(i_k, d^k)\}_{k\geq 0} \equiv \sets{\xi^k}_{k\geq 0}$.
We define a cylinder $\Cc_k(\xi^{0:k}) := \sets{\omega \in\Omega : (\omega_0, \cdots, \omega_k) = \xi^{0:k}}$ as a subset in $\Omega$ and $\Cc_k$ is the set of all possible subsets $\Cc_k(\xi^{0:k})$ when $\xi^t$, $t=0,\cdots, k$, take all possible values, where $\omega_l$ is the $l$-th coordinate of $\omega$.
Note that $\set{\Cc_k}_{k\geq 0}$ forms a partition of $\Omega$ and measurable.
Let $\Fc_k := \sigma(\Cc_k)$ be the $\sigma$-algebra generated by $\Cc_k$ and $\Fc := \sigma(\cup_{k=0}^{\infty}\Cc_k)$.
Clearly, $\set{\Fc_k}_{k\geq 0}$ forms a filtration such that $\Fc_k \subseteq \Fc_{k+1} \subseteq \cdots \subseteq \Fc$ for $k \geq 0$ that is sufficient to cope with the evolution of Algorithm~\ref{alg:A2}.

For each $\Cc_k(\xi^{0:k})$ we also equip with a probability $\mathbf{p}(\xi^{0:k}) := \mathbb{P}(\Cc_k(\xi^{0:k}))$.
Then, $(\Omega, \Fc, \mathbb{P})$ forms a probability space.
Our conditional probability is defined as $\mathbf{p}( (i, d) \mid \xi^{0:k}) := \mathbb{P}(\Cc_{k+1}(\xi^{0:k+1}))/\mathbb{P}(\Cc_k(\xi^{0:k}))$, where we set $\mathbf{p}( (i, d) \mid \xi^{0:k}) := 0$ if $\mathbf{p}(\xi^{0:k}) = 0$.
We do not need to know these probabilities in advance.
They are determined based on the particular system such as hardware architecture, software implementation, asynchrony, and our strategy for selecting active user.

Now, if $X$ is a random variable defined on $\Omega$, then as shown in \cite{cannelli2019asynchronous}, we have
\begin{equation}\label{eq:cond_expectation}
\mathbb{E}[X \mid \Fc_k] = \sum_{(i,d)\in [n]\times\Dc} \mathbf{p}((i,d) \mid \xi^{0:k})X(\xi^{0:k}, (i,d)).
\end{equation}
Note from Assumption~\ref{ass:A4} that 
\begin{equation}\label{eq:assA4_01}
\mathbf{p}(i \mid \xi^{0:k}) := \sum_{d\in\Dc}\mathbf{p}((i, d) \mid \xi^{0:k}) \geq \hat{\mbf{p}}.
\end{equation}
Our probability model described above allows us to handle a variety class of asynchronous algorithms derived from the DR splitting scheme.
Here, we do not make independent assumption between the active user $\hat{i}_k$ and the delay vector $\hat{d}^k$.

\subsection{Preparatory lemmas}
For the asynchronous algorithm, Algorithm~\ref{alg:A2}, the following facts hold.
\begin{compactitem}
\item For $x^k_i$ and $y^k_i$ updated by Algorithm~\ref{alg:A2}, since $\Sc_k = \sets{i_k}$ and the update of $y^k_i$ and $x_i^k$ remain the same as in Algorithm~\ref{alg:A1} when the error $e_i^k = 0$, the relation \eqref{eq:yk_xk_relation} remains true, i.e. $y_i^k = x^k_i + \eta\nabla{f_i}(x^k_i)$ and $\hat{x}_i^k = 2x_i^k - y_i^k$ for all $i \in [n]$ and $k\geq 0$.

\item Let $\bar{\mbf{x}}^{k-d^k} := [\bar{x}^{k-d_1^k}, \bar{x}^{k-d_2^k}, \cdots, \bar{x}^{k-d_n^k}]$ be a delayed copy of the vector $\bar{\mbf{x}}^k := [\bar{x}^k, \cdots, \bar{x}^k] \in \R^{np}$.
Since at each iteration $k$, there is only one block $i_k$ being updated, as shown in \cite{cannelli2019asynchronous,peng2016arock}, for all $i\in [n]$, we can write 
\begin{equation}\label{eq:delay_update}
\bar{x}^{k-d_i^k} = \bar{x}^k + \sum_{l \in J_i^k}(\bar{x}^l - \bar{x}^{l+1}),
\end{equation}
where $J^k_i := \sets{k-d_i^k, k-d_i^k+1, \cdots, k-1} \subseteq \set{k-\tau, \cdots, k-1}$.
\end{compactitem}
These facts will be repeatedly used in the sequel. 

Now, let us first prove the following lemma to provide a key estimate for establishing Lemma~\ref{lem:asdr_key_est}.

\begin{lemma}[Sure descent]\label{lem:asdr_key_est0}
Suppose that Assumptions~\ref{ass:A1}, \ref{ass:A2}, and \ref{ass:A4} hold for \eqref{eq:fed_prob}.
Let $\sets{ (x^{k}_i, y^{k}_i, \hat{x}^{k}_i, \tilde{x}^k, \bar{x}^k)}$ be generated by Algorithm~\ref{alg:A2} and $V_{\eta}^{k}(\cdot)$ be defined as in \eqref{eq:lyapunov_func}.
Then, for all $k \geq 0$, the following estimate holds:
\begin{equation}\label{eq:asdr_lem0}
\hspace{-0ex}
\arraycolsep=0.2em
\begin{array}{ll}
V_\eta^{k+1}(\bar{x}^{k+1}) & + {~}  \frac{\tau}{n\eta}\sum_{l=k+1-\tau}^{k} (l-k+\tau)\norms{\bar{x}^{l +1} - \bar{x}^l }^2  \leq V_\eta^{k}(\bar{x}^k)  -  \frac{\rho}{2} \norms{x^{k+1}_{i_k} - x^k_{i_k}}^2 \vspace{1ex}\\
& + {~} \frac{\tau}{n\eta} \sum_{l=k-\tau}^{k-1}  (l - (k-1) +\tau)\norms{\bar{x}^{l +1}  - \bar{x}^l }^2, 
\end{array}
\hspace{-4ex}
\end{equation}
where 
\begin{equation*}
\arraycolsep=0.2em
\begin{array}{lcl}
\rho & := & \begin{cases}
\frac{2(1-\alpha) - (2+\alpha)L^2\eta^2 - L\alpha\eta}{\alpha\eta n} &\text{if} \quad 2\tau^2 \leq n, \vspace{1ex}\\
\frac{n^2[2(1-\alpha) - (2+\alpha)L^2\eta^2 - L\alpha\eta] - \alpha(1+\eta^2L^2)(2\tau^2 - n)}{\alpha\eta n^3} &\text{otherwise}.
\end{cases}
\end{array}
\end{equation*}
\end{lemma}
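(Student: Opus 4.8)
\textbf{Proof plan for Lemma~\ref{lem:asdr_key_est0} (Sure descent for asyncFedDR).}

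The strategy mirrors the synchronous case (Lemma~\ref{lem:dr_key_est0_inexact}) but has to absorb the delay. First I would specialize the Lyapunov bound \eqref{eq:V_pro2_inexact} of Lemma~\ref{le:V_properties_inexact} to the present setting: here $e_i^k = 0$ for all $i,k$ (exact prox), $\Sc_k = \{i_k\}$, and since only block $i_k$ moves, $E_{k+1} = 0$, so we may take $\gamma_3 = 0$. This gives $V_\eta^{k+1}(\bar{x}^{k+1}) \le g(\bar{x}^k) + \frac{1}{n}\sum_{i}[f_i(x_i^{k+1}) + \iprods{\nabla f_i(x_i^{k+1}),\bar{x}^k - x_i^{k+1}} + \frac{1}{2\eta}\norms{\bar{x}^k - x_i^{k+1}}^2] - \frac{1}{2\eta}\norms{\bar{x}^{k+1}-\bar{x}^k}^2$. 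Then, exactly as in the derivation of \eqref{eq:dr_eq5_3_new_inexact}, I would split the sum over $i\in\Sc_k$ vs. $i\notin\Sc_k$, use the $L$-smoothness descent inequality $f_i(x_i^{k+1}) + \iprods{\nabla f_i(x_i^{k+1}),x_i^k-x_i^{k+1}} \le f_i(x_i^k) + \frac{L}{2}\norms{x_i^{k+1}-x_i^k}^2$ for the active user, and expand $\norms{\bar{x}^k - x_i^{k+1}}^2$ around $\bar{x}^k - x_i^k$. The net effect is to reconstruct $V_\eta^k(\bar{x}^k)$ plus error terms supported on the single index $i_k$:
\begin{equation*}
V_\eta^{k+1}(\bar{x}^{k+1}) \le V_\eta^k(\bar{x}^k) + \tfrac{1+\eta L}{2\eta n}\norms{x_{i_k}^{k+1}-x_{i_k}^k}^2 + \tfrac{1}{\eta n}\iprods{x_{i_k}^{k+1}-x_{i_k}^k,\, x_{i_k}^k - \bar{x}^k} + \tfrac{1}{n}\iprods{\nabla f_{i_k}(x_{i_k}^{k+1}) - \nabla f_{i_k}(x_{i_k}^k),\, \bar{x}^k - x_{i_k}^k} - \tfrac{1}{2\eta}\norms{\bar{x}^{k+1}-\bar{x}^k}^2.
\end{equation*}

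The delay enters precisely in the term $x_{i_k}^k - \bar{x}^k$. In the synchronous proof one used $x_i^k - \bar{x}^k = \frac{1}{\alpha}(y_i^k - y_i^{k+1}) = \frac{1}{\alpha}(x_i^k - x_i^{k+1}) + \frac{\eta}{\alpha}(\nabla f_i(x_i^k) - \nabla f_i(x_i^{k+1}))$; now, because the update of $y_{i_k}^{k+1}$ reads the delayed iterate $\bar{x}^{k-d_{i_k}^k}$, we get instead $x_{i_k}^k - \bar{x}^{k-d_{i_k}^k} = \frac{1}{\alpha}(x_{i_k}^k - x_{i_k}^{k+1}) + \frac{\eta}{\alpha}(\nabla f_{i_k}(x_{i_k}^k) - \nabla f_{i_k}(x_{i_k}^{k+1}))$, so $x_{i_k}^k - \bar{x}^k = \frac{1}{\alpha}(x_{i_k}^k - x_{i_k}^{k+1}) + \frac{\eta}{\alpha}(\nabla f_{i_k}(x_{i_k}^k) - \nabla f_{i_k}(x_{i_k}^{k+1})) - (\bar{x}^k - \bar{x}^{k-d_{i_k}^k})$. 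I would substitute this into the inner-product terms. The ``clean'' part reproduces, after using $L$-smoothness and collecting, a coefficient $-\frac{2(1-\alpha) - (2+\alpha)L^2\eta^2 - L\alpha\eta}{2\alpha\eta n}\norms{x_{i_k}^{k+1}-x_{i_k}^k}^2$ — this is $-\frac{\rho}{2}\norms{x_{i_k}^{k+1}-x_{i_k}^k}^2$ in the regime $2\tau^2 \le n$. The ``delay'' part produces cross terms involving $\bar{x}^k - \bar{x}^{k-d_{i_k}^k} = \sum_{l\in J_{i_k}^k}(\bar{x}^l - \bar{x}^{l+1})$ via \eqref{eq:delay_update}, which I would bound with Young's inequality and Cauchy–Schwarz over the at most $\tau$ summands: $\norms{\sum_{l\in J_{i_k}^k}(\bar{x}^l-\bar{x}^{l+1})}^2 \le \tau \sum_{l=k-\tau}^{k-1}\norms{\bar{x}^l-\bar{x}^{l+1}}^2$. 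Splitting the Young constants so that part of $\norms{\bar{x}^{k+1}-\bar{x}^k}^2$ (from the $-\frac{1}{2\eta}$ term) absorbs one cross term, and so that the remaining delay mass is packaged as the telescoping-friendly weighted sum $\frac{\tau}{n\eta}\sum_{l=k-\tau}^{k-1}(l-(k-1)+\tau)\norms{\bar{x}^{l+1}-\bar{x}^l}^2$ on the right and $\frac{\tau}{n\eta}\sum_{l=k+1-\tau}^{k}(l-k+\tau)\norms{\bar{x}^{l+1}-\bar{x}^l}^2$ on the left, is the main bookkeeping task. When $2\tau^2 > n$ the same computation goes through but the extra factor from bounding $\bar{x}^k-\bar{x}^{k-d_{i_k}^k}$ in terms of the block increments (each $\bar{x}^l - \bar{x}^{l+1}$ itself relates to some $\norms{x_{i_l}^{l+1}-x_{i_l}^l}^2$, picking up $1/n^2$ and a $1+\eta^2 L^2$ factor) forces the corrected, smaller $\rho$ stated in the lemma; I would handle that case by carrying the constant $c = (2\tau^2-n)/n^2$ explicitly.

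The main obstacle, as usual for asynchronous Douglas–Rachford analysis, is designing the weighting of the delay sum so that (i) it telescopes across iterations when later summed over $k$, (ii) the ``incoming'' delay term at step $k$ is dominated by the ``outgoing'' one plus a strictly negative $\norms{x_{i_k}^{k+1}-x_{i_k}^k}^2$ contribution, and (iii) the residual negative $\norms{\bar{x}^{k+1}-\bar{x}^k}^2$ coefficient stays nonnegative — this is exactly what pins down the stepsize thresholds $\bar\alpha,\bar\eta$ in \eqref{eq:choice_of_params0} and the positivity of $\rho$ in \eqref{eq:rho_theta}. Once \eqref{eq:asdr_lem0} is in hand, the transition to Lemma~\ref{lem:asdr_key_est} and then Theorem~\ref{thm:asdr_convergence} is by taking conditional expectation (using \eqref{eq:cond_expectation} and \eqref{eq:assA4_01} to convert $\norms{x_{i_k}^{k+1}-x_{i_k}^k}^2$ into a sum over all users with weights $\ge\hat{\mbf p}$), summing over $k$ so the weighted delay sums telescope to zero, and invoking Lemma~\ref{le:grad_norm_bound} together with the lower bound $V_\eta^k \ge F^\star$ from \eqref{eq:V_lowerbound} — but that is beyond the present statement.
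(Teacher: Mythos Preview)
Your plan is correct and essentially identical to the paper's proof. One small clarification on the mechanics: in the paper, Young's inequality is applied to the two delay cross terms to produce $\tfrac{1}{n\eta}\norms{\bar{x}^k-\bar{x}^{k-d_{i_k}^k}}^2$, and the telescoping identity on \emph{this} quantity introduces a $+\tfrac{\tau^2}{n\eta}\norms{\bar{x}^{k+1}-\bar{x}^k}^2$ term which is then combined with the standing $-\tfrac{1}{2\eta}\norms{\bar{x}^{k+1}-\bar{x}^k}^2$; in Case~2 ($2\tau^2>n$) it is the resulting positive leftover $\tfrac{2\tau^2-n}{2n\eta}\norms{\bar{x}^{k+1}-\bar{x}^k}^2$ --- not the full delay difference --- that is converted into $\norms{x_{i_k}^{k+1}-x_{i_k}^k}^2$ via the single-block estimate $\norms{\bar{x}^{k+1}-\bar{x}^k}^2 \le \tfrac{2(1+\eta^2L^2)}{n^2}\norms{x_{i_k}^{k+1}-x_{i_k}^k}^2$, yielding exactly the stated correction in $\rho$.
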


\begin{proof}
Let $V_{\eta}^k$ be defined by \eqref{eq:lyapunov_func}.
For $(x_i^k, \hat{x}^k_i, y_i^k)$ updated as in Algorithm~\ref{alg:A2}, the results of Lemma~\ref{lem:yk_xk} still hold true.
Hence, \eqref{eq:V_pro2_inexact} still holds for Algorithm~\ref{alg:A2} with $\gamma_3 = 0$ and $E_{k+1}^2 = 0$, i.e.:
\begin{equation*} 
\hspace{-0ex}
\arraycolsep=0.2em
\begin{array}{lcl}
V_{\eta}^{k+1}(\bar{x}^{k+1}) & \leq & g(\bar{x}^k) + \frac{1}{n}\sum_{i=1}^n \big[ f_i(x^{k+1}_i) +  \iprods{\nabla f_i(x^{k+1}_i), \bar{x}^{k} - x^{k+1}_i} + \frac{1}{2\eta} \norms{\bar{x}^{k} - x^{k+1}_i}^2 \big] \vspace{1ex}\\
&& - {~} \frac{1}{2\eta}\norms{\bar{x}^{k+1} - \bar{x}^k}^2.
\end{array}
\hspace{-3ex}
\end{equation*}
Using this inequality, the update of $x^{k+1}_{i_k}$ for $i=i_k$, and $x^{k+1}_i = x_i^k$ for $i\neq i_k$, we can expand
\begin{equation}\label{eq:asdr_eq1_1}
\hspace{-0.15ex}
\arraycolsep=0.2em
\begin{array}{lcl}
V_\eta^{k+1}(\bar{x}^{k+1}) & \leq  & g(\bar{x}^k) +  \frac{1}{n}\sum_{i\neq i_k} \big[ f_i(x^{k}_i) +  \iprods{\nabla f_i(x^{k}_i), \bar{x}^{k} - x^{k}_i} + \frac{1}{2\eta } \norms{\bar{x}^{k} - x^{k}_i}^2 \big] \vspace{1ex}\\
&& +  {~} \frac{1}{n}\big[ f_{i_k}(x^{k+1}_{i_k}) + \iprods{\nabla f_{i_k}(x^{k+1}_{i_k}), x^{k}_{i_k} - x^{k+1}_{i_k}} \big]  + \frac{1}{n}\iprods{\nabla f_{i_k}(x^{k+1}_{i_k}), \bar{x}^{k} - x^{k}_{i_k}} \vspace{1ex}\\
&& +  {~} \frac{1}{2\eta n}\norms{\bar{x}^{k} - x^{k}_{i_k} + x^{k}_{i_k} - x^{k+1}_{i_k}}^2 - \frac{1}{2\eta}\norms{\bar{x}^{k+1} - \bar{x}^k}^2.
\end{array}
\hspace{-7ex}
\end{equation}
Now, by the $L$-smoothness of $f_{i_k}$, we have
\begin{equation*} 
f_{i_k}(x^{k+1}_{i_k}) + \iprods{\nabla f_{i_k}(x^{k+1}_{i_k}), x^{k}_{i_k} - x^{k+1}_{i_k}} \le f_{i_k}(x^{k}_{i_k})  + \frac{L}{2}\norms{x^{k}_{i_k} - x^{k+1}_{i_k}}^2.
\end{equation*}
Plugging this inequality into \eqref{eq:asdr_eq1_1} and expanding the third last term of \eqref{eq:asdr_eq1_1}, we obtain
\begin{equation}\label{eq:asdr_eq1}
\arraycolsep=0.2em
\begin{array}{lcl}
V_\eta^{k+1}(\bar{x}^{k+1}) &\leq &  g(\bar{x}^k) + \frac{1}{n}\sum_{i \neq i_k } \big[ f_i(x^{k}_i) +  \iprods{ \nabla{f_i}(x^{k}_i), \bar{x}^{k} - x^{k}_i } + \frac{1}{2\eta} \norms{\bar{x}^{k} - x^{k}_i}^2 \big]  \vspace{1ex}\\
&& + {~} \frac{1}{n} f_{i_k}(x^{k}_{i_k}) + \frac{L}{2n}\norms{x^{k+1}_{i_k} - x^k_{i_k}}^2  + \frac{1}{n}\iprods{\nabla f_{i_k}(x^{k+1}_{i_k}), \bar{x}^{k} - x^{k}_{i_k}} \vspace{1ex}\\
&& + {~} \frac{1}{2\eta n}\norms{\bar{x}^{k} - x^{k}_{i_k}}^2  +  \frac{1}{2\eta n}\norms{x^{k+1}_{i_k} - x^{k}_{i_k}}^2 +  \frac{1}{\eta n}\iprods{x^{k+1}_{i_k} - x^{k}_{i_k}, x^{k}_{i_k} - \bar{x}^{k}}  \vspace{1ex}\\
&& - {~} \frac{1}{2\eta}\norms{\bar{x}^{k+1} - \bar{x}^k}^2  \vspace{1ex}\\
&=  & g(\bar{x}^k) +  \frac{1}{n}\sum_{i=1}^n \big[ f_i(x^{k}_i) +  \iprods{\nabla f_i(x^{k}_i), \bar{x}^{k} - x^{k}_i} + \frac{1}{2\eta} \norms{\bar{x}^{k} - x^{k}_i}^2 \big] \vspace{1ex}\\
&& + {~} \frac{(1+\eta L)}{2n\eta}\norms{x^{k+1}_{i_k} - x^k_{i_k}}^2 + \frac{1}{n}\iprods{\nabla f_{i_k}(x^{k+1}_{i_k}) - \nabla f_{i_k}(x^{k}_{i_k}), \bar{x}^{k} - x^{k}_{i_k}}\vspace{1ex}\\
&& + {~}  \frac{1}{\eta n}\iprods{x^{k+1}_{i_k} - x^{k}_{i_k}, x^{k}_{i_k} - \bar{x}^{k}} - \frac{1}{2\eta}\norms{\bar{x}^{k+1} - \bar{x}^k}^2  \vspace{1ex}\\
&\overset{\tiny\eqref{eq:lyapunov_func}}{=} & V_\eta^{k}(\bar{x}^k)  + \frac{1}{n}\iprods{\nabla f_{i_k}(x^{k+1}_{i_k}) - \nabla f_{i_k}(x^{k}_{i_k}), \bar{x}^{k-d^k_{i_k}} - x^{k}_{i_k}} \vspace{1ex}\\
&&  + {~} \frac{1}{n}\iprods{\nabla f_{i_k}(x^{k+1}_{i_k}) - \nabla f_{i_k}(x^{k}_{i_k}), \bar{x}^{k} - \bar{x}^{k-d^k_{i_k}}} + \frac{(1 + L\eta)}{2\eta n}\norms{x^{k+1}_{i_k} - x^k_{i_k}}^2  \vspace{1ex}\\
&& + {~}  \frac{1}{\eta n}\iprods{x^{k+1}_{i_k} - x^{k}_{i_k}, x^{k}_{i_k} - \bar{x}^{k-d^k_{i_k}}} +  \frac{1}{\eta n}\iprods{x^{k+1}_{i_k} - x^{k}_{i_k}, \bar{x}^{k-d^k_{i_k}} - \bar{x}^{k}} \vspace{1ex}\\
&& - {~} \frac{1}{2\eta}\norms{\bar{x}^{k+1} - \bar{x}^k}^2.
\end{array}
\hspace{-1ex}
\end{equation}
From $y^{k+1}_{i_k} :=  y_{i_k}^k + \alpha(\bar{x}^{k-d^k_{i_k}} - x^{k}_{i_k})$ at Step~\ref{step:A2o3} of Algorithm~\ref{alg:A2} and the relation \eqref{eq:yk_xk_relation}, we have
\begin{equation}\label{eq:asdr_eq5_10}
\bar{x}^{k-d^k_{i_k}} - x^{k}_{i_k} = \frac{1}{\alpha}(y^{k+1}_{i_k} - y^{k}_{i_k}) \overset{\eqref{eq:yk_xk_relation}}{=} \frac{1}{\alpha}(x^{k+1}_{i_k} - x^{k}_{i_k}) + \frac{\eta}{\alpha }(\nabla f_{i_k}(x^{k+1}_{i_k}) - \nabla f_{i_k}(x^{k}_{i_k})).
\end{equation}
This relation leads to
\begin{equation}\label{eq:asdr_eq3}
\arraycolsep=0.2em
\begin{array}{lcl}
\frac{1}{n}\iprods{\nabla f_{i_k}(x^{k+1}_{i_k}) - \nabla f_{i_k}(x^{k}_{i_k}), \bar{x}^{k-d^k_{i_k}} - x^{k}_{i_k}} &= & \frac{1}{\alpha n}\iprods{\nabla f_{i_k}(x^{k+1}_{i_k}) - \nabla f_{i_k}(x^{k}_{i_k}), x^{k+1}_{i_k} - x^{k}_{i_k}} \vspace{1ex}\\
&& + {~} \frac{\eta}{\alpha n}\norms{\nabla f_{i_k}(x^{k+1}_{i_k}) - \nabla f_{i_k}(x^{k}_{i_k})}^2,
\end{array}
\hspace{-1ex}
\end{equation}
and
\begin{equation}\label{eq:asdr_eq4}
\arraycolsep=0.2em
\begin{array}{lcl}
\frac{1}{\eta n} \iprods{x^{k+1}_{i_k} - x^{k}_{i_k}, x^{k}_{i_k} - \bar{x}^{k-d^k_{i_k}}} & = &  - \frac{1}{\alpha n}\iprods{ \nabla f_{i_k}(x^{k+1}_{i_k}) - \nabla f_{i_k}(x^{k}_{i_k}), x^{k+1}_{i_k} - x^{k}_{i_k} } \vspace{1ex}\\
&& - {~} \frac{1}{\eta\alpha n}\norms{x^{k+1}_{i_k} - x^{k}_{i_k}}^2.
\end{array}
\end{equation}
Substituting \eqref{eq:asdr_eq3} and \eqref{eq:asdr_eq4} into \eqref{eq:asdr_eq1}, we obtain
\begin{equation*} 
\arraycolsep=0.2em
\begin{array}{lcl}
V_\eta^{k+1}(\bar{x}^{k+1}) &\leq & V_\eta^{k}(\bar{x}^k) + \frac{(1 + L\eta)}{2\eta n}\norms{x^{k+1}_{i_k} - x^k_{i_k}}^2 + \frac{\eta}{\alpha n}\norms{\nabla f_{i_k}(x^{k+1}_{i_k}) - \nabla f_{i_k}(x^{k}_{i_k})}^2 \vspace{1ex}\\
&& + {~} \frac{1}{n}\iprods{\nabla f_{i_k}(x^{k+1}_{i_k}) - \nabla f_{i_k}(x^{k}_{i_k}), \bar{x}^{k} - \bar{x}^{k-d^k_{i_k}}} + \frac{1}{\eta n}\iprods{x^{k+1}_{i_k} - x^{k}_{i_k},  \bar{x}^{k-d^k_{i_k}} - \bar{x}^{k}} \vspace{1ex}\\
&&  - {~} \frac{1}{\eta\alpha n}\norms{x^{k+1}_{i_k} - x^{k}_{i_k}}^2  - \frac{1}{2\eta}\norms{\bar{x}^{k+1} - \bar{x}^k}^2 \vspace{1ex}\\
&\overset{\eqref{eq:L_smooth}}{\leq} & V_\eta^{k}(\bar{x}^k) + \frac{\alpha(L\eta + 1) - 2}{2\eta\alpha n}\norms{x^{k+1}_{i_k} - x^k_{i_k}}^2 + \frac{\eta L^2}{\alpha n}\norms{x^{k+1}_{i_k} - x^{k}_{i_k}}^2\vspace{1ex}\\
&&  + {~} \frac{1}{n}\iprods{\nabla f_{i_k}(x^{k+1}_{i_k}) - \nabla f_{i_k}(x^{k}_{i_k}), \bar{x}^{k} -  \bar{x}^{k-d^k_{i_k}}} + \frac{1}{\eta n}\iprods{x^{k+1}_{i_k} - x^{k}_{i_k},  \bar{x}^{k-d^k_{i_k}} - \bar{x}^{k}} \vspace{1ex}\\
&& - {~} \frac{1}{2\eta}\norms{\bar{x}^{k+1} - \bar{x}^k}^2.
\end{array}
\end{equation*}
Next, using Young's inequality twice in the above estimate, we can further expand 
\begin{equation}\label{eq:asdr_eq5}
\hspace{-0.25ex}
\arraycolsep=0.2em
\begin{array}{lcl}
V_\eta^{k+1}(\bar{x}^{k+1}) & \leq & V_\eta^{k}(\bar{x}^k) + \frac{\alpha(L\eta + 1) + 2L^2\eta^2 - 2}{2\eta\alpha n}\norms{x^{k+1}_{i_k} - x^k_{i_k}}^2 + \frac{\eta}{2n}\norms{\nabla f_{i_k}(x^{k+1}_{i_k}) - \nabla f_{i_k}(x^{k}_{i_k})}^2 \vspace{1ex}\\
&& + {~} \frac{1}{2\eta n}\norms{ \bar{x}^{k} -  \bar{x}^{k-d^k_{i_k}}}^2 + \frac{1}{2\eta n}\norms{x^{k+1}_{i_k} - x^{k}_{i_k}}^2 + \frac{1}{2\eta n}\norms{ \bar{x}^{k} -   \bar{x}^{k-d^k_{i_k}} }^2  \vspace{1ex}\\
&& - {~} \frac{1}{2\eta}\norms{\bar{x}^{k+1} - \bar{x}^k}^2 \vspace{1ex}\\
&\overset{\eqref{eq:L_smooth}}{\leq} & V_\eta^{k}(\bar{x}^k) + \frac{[ \alpha(L\eta + 2) + 2L^2\eta^2 - 2 ]}{2\alpha\eta n}\norms{x^{k+1}_{i_k} - x^k_{i_k}}^2 + \frac{L^2\eta}{2n}\norms{x^{k+1}_{i_k} - x^{k}_{i_k}}^2 \vspace{1ex}\\
&& + {~}  \frac{1}{\eta n}\norms{\bar{x}^{k} -  \bar{x}^{k-d^k_{i_k}}}^2 - \frac{1}{2\eta}\norms{\bar{x}^{k+1} - \bar{x}^k}^2  \vspace{1ex}\\
&= & V_\eta^{k}(\bar{x}^k) + \frac{[ \alpha(L^2\eta^2 + L\eta + 2) + 2L^2\eta^2  - 2] }{2\alpha\eta n}\norms{x^{k+1}_{i_k} - x^k_{i_k}}^2 - \frac{1}{2\eta}\norms{\bar{x}^{k+1} - \bar{x}^k}^2 \vspace{1ex}\\
&&  + {~}   \frac{1}{n\eta} \norms{\bar{x}^{k-d^k_{i_k}} - \bar{x}^k}^2.
\end{array}
\hspace{-5ex}
\end{equation}
Using \eqref{eq:delay_update}, we can bound $\norms{\bar{x}^{k - d^k_{i_k}} - \bar{x}^k}^2$ as follows:
\begin{equation}\label{eq:asdr_eq5_1}
\hspace{-0.0ex}
\arraycolsep=0.0em
\begin{array}{lcl}
\norms{\bar{x}^{k-d^k_{i_k}} - \bar{x}^k }^2 & \overset{\tiny\eqref{eq:delay_update}}{=} & \big\Vert \sum_{l\in J_{i_k}^k}(\bar{x}^l - \bar{x}^{l+1}) \big\Vert^2   \vspace{1ex}\\
&\leq & d_{i_k}^k \sum_{l =k - d^k_{i_k}}^{k-1}\norms{\bar{x}^{l+1} - \bar{x}^{l}}^2 \quad \text{(Young's inequality and the definition of $J_{i_k}^k$)} \vspace{1ex}\\
&\leq & \tau \sum_{l=k - \tau}^{k-1} \norms{\bar{x}^{l+1} - \bar{x}^{l}}^2 \quad \text{(since $d_{i_k}^k \leq \tau$ in Assumption~\ref{ass:A4})} \vspace{1ex}\\
&= & \tau \Big[\sum\limits_{l=k-\tau}^{k-1} [l - (k-\tau) +1] \norms{\bar{x}^{l+1} - \bar{x}^l }^2  - {\!\!\!\!} \sum\limits_{l= k -\tau+1}^{k} {\!\!\!} (l - (k - \tau)) \norms{\bar{x}^{l+1} - \bar{x}^l}^2 \Big]  \vspace{1ex}\\
&& + {~} \tau^2 \norms{\bar{x}^{k+1}- \bar{x}^k}^2.
\end{array}
\hspace{-9ex}
\end{equation}
Now, we consider two cases as follows.

\textbf{Case 1:} If $n \geq 2\tau^2$, then by plugging \eqref{eq:asdr_eq5_1} into \eqref{eq:asdr_eq5}, we finally arrive at
\begin{equation*} 
\arraycolsep=0.2em
\begin{array}{ll}
V_\eta^{k+1}(\bar{x}^{k+1}) & + {~}  \frac{\tau}{n\eta} \sum_{l=k -\tau + 1}^{k} [ l - (k - \tau)] \norms{\bar{x}^{l+1} - \bar{x}^l}^2 \leq  V_\eta^{k}(\bar{x}^k) \vspace{1ex}\\
& + {~} \frac{\tau}{n\eta} \sum_{l =k-\tau}^{k-1} [l - (k-\tau) + 1]\norms{\bar{x}^{l+1} - \bar{x}^l}^2   \vspace{1ex}\\ 
& - {~} \frac{[2(1-\alpha) - (2+\alpha)\eta^2L^2 - \alpha \eta  L] }{2\alpha\eta n}  \norms{x^{k+1}_{i_k} - x^k_{i_k}}^2  - \frac{( n - 2\tau^2 )}{2n\eta}\norms{\bar{x}^{k+1} - \bar{x}^k}^2.
\end{array}
\end{equation*}
Rearranging the last estimate, we finally arrive at \eqref{eq:asdr_lem0}.

\textbf{Case 2:} if $2\tau^2 > n$, then using \eqref{eq:yk_xk_relation}, we can show that 
\begin{equation}\label{eq:asdr_eq5_2b}
\hspace{-0.0ex}
\arraycolsep=0.0em
\begin{array}{lcl}
\norms{\bar{x}^{k+1} - \bar{x}^k}^2 &= & \big\Vert \prox_{\eta g}\big( \tilde{x}^{k+1} \big) - \prox_{\eta g}\big( \tilde{x}^{k} \big) \big\Vert^2 \leq  \Vert \tilde{x}^{k+1} - \tilde{x}^k \Vert^2 \vspace{1ex}\\
& = &  \Vert \frac{1}{n}\sum_{i=1}^n(\hat{x}_i^{k+1} - \hat{x}_i^k) \Vert^2  \vspace{1ex}\\
& = & \frac{1}{n^2} \Vert \hat{x}_{i_k}^{k+1} - \hat{x}_{i_k}^k \Vert^2  \ \ \text{(since only block $i_k$ is updated)} \vspace{1ex}\\
&\overset{\eqref{eq:yk_xk_relation}}{=} &   \frac{1}{n^2} \norms{ (x^{k+1}_{i_k} - x^k_{i_k}) -  \eta(\nabla f_{i_k}(x^{k+1}_{i_k}) - \nabla f_{i_k}(x^{k}_{i_k})) }^2  \vspace{1ex}\\
&\leq &  \frac{2}{n^2} \norms{ x^{k+1}_{i_k} - x^k_{i_k} }^2 + \frac{2\eta^2}{n^2}\norms{ \nabla f_{i_k}(x^{k+1}_{i_k}) - \nabla f_{i_k}(x^{k}_{i_k}) }^2  \vspace{1ex}\\
&\leq & \frac{2(1+\eta^2L^2)}{n^2}\norms{ x^{k+1}_{i_k} - x^k_{i_k} }^2.
\end{array}
\hspace{-3ex}
\end{equation}
Substituting this inequality into the previous one, we can get
\begin{equation*} 
\arraycolsep=0.2em
\begin{array}{ll}
V_\eta^{k+1}(\bar{x}^{k+1}) & + {~}  \frac{\tau}{n\eta} \sum_{l=k - \tau +1}^{k} [ l - (k - \tau)] \norms{\bar{x}^{l+1} - \bar{x}^l}^2 \leq  V_\eta^{k}(\bar{x}^k) \vspace{1ex}\\
& + {~} \frac{\tau}{n\eta} \sum_{l =k-\tau}^{k-1} [l - (k-\tau) + 1]\norms{\bar{x}^{l+1} - \bar{x}^l}^2  \vspace{1ex}\\ 
& - {~} \left[ \frac{2(1-\alpha) - (2 +\alpha)\eta^2 L^2 - \alpha \eta  L }{2\alpha\eta n} - \frac{(1+\eta^2L^2)(2\tau^2 - n)}{2n^3\eta} \right] \norms{x^{k+1}_{i_k} - x^k_{i_k}}^2.
\end{array}
\end{equation*}
Simplifying the coefficients of  this estimate, we finally arrive at \eqref{eq:asdr_lem0}.
\end{proof}

To analyze Algorithm~\ref{alg:A2}, we need the following key lemma.

\begin{lemma}[Sure descent lemma]\label{lem:asdr_key_est}
Suppose that Assumptions~\ref{ass:A1}, \ref{ass:A2}, and \ref{ass:A4} hold.
Let $\set{ (x^{k}_i, y^{k}_i, \hat{x}^{k}_i, \tilde{x}^k, \bar{x}^k)}$ be generated by Algorithm~\ref{alg:A2} and $V_{\eta}^{k}$ be defined as in \eqref{eq:lyapunov_func}. 
Let 
\begin{equation}\label{eq:asdr_merit_func}
\begin{array}{l}
\widetilde{V}_\eta^{k}(\bar{x}^k) :=  V_{\eta}^{k}(\bar{x}^k) + \frac{1}{n\eta}   \sum_{l=k - \tau}^{k-1}  [l- (k - \tau) + 1] \norms{\bar{x}^{l +1}  - \bar{x}^l}^2.
\end{array}
\end{equation}
Suppose that we choose $0 < \alpha < \bar{\alpha}$ and $0 < \eta < \bar{\eta}$, where $c := \frac{2\tau^2 - n}{n^2}$, 
\begin{equation}\label{eq:choice_of_params}
\begin{array}{ll}
& \bar{\alpha} := \begin{cases}
1 &\text{if}~2\tau^2 \leq n, \vspace{1ex}\\ 
\frac{2}{2 + c} &\text{otherwise},
\end{cases}\vspace{1ex}\\
\text{and} &  \bar{\eta} := \begin{cases}
\frac{\sqrt{16 - 8\alpha - 7\alpha^2} - \alpha}{2L(2+\alpha)} &\text{if}~2\tau^2 \leq n, \vspace{1ex}\\ 
\tfrac{\sqrt{16 - 8\alpha - (7 + 4c + 4c^2)\alpha^2} - \alpha}{2L[2 + (1+c)\alpha]} &\text{otherwise}.
\end{cases}
\end{array}
\end{equation}
Then, the following statement holds:
\begin{equation}\label{eq:asdr_lem}
\frac{\rho}{2} \norms{x^{k+1}_{i_k} - x^k_{i_k}}^2  \leq \widetilde{V}_\eta^{k}(\bar{x}^k)  -  \widetilde{V}_\eta^{k+1}(\bar{x}^{k+1}),
\end{equation}
where 
\begin{equation*}
\rho := \begin{cases}
\frac{2(1-\alpha) - (2+\alpha)L^2\eta^2 - L\alpha\eta}{\alpha\eta n} &\text{if} \quad 2\tau^2 \leq n, \vspace{1ex} \\
\frac{n^2[2(1-\alpha) - (2+\alpha)L^2\eta^2 - L\alpha\eta] - \alpha(1+\eta^2L^2)(2\tau^2 - n)}{\alpha\eta n^3} &\text{otherwise}.
\end{cases}
\end{equation*}
Moreover, $\rho$ is positive.
\end{lemma}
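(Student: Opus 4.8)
The plan is to derive Lemma~\ref{lem:asdr_key_est} directly from the sure-descent inequality \eqref{eq:asdr_lem0} of Lemma~\ref{lem:asdr_key_est0}; everything here is deterministic (no conditional expectation is taken, which is why the statement is ``sure''), so the proof splits cleanly into two independent parts: (i) repackaging \eqref{eq:asdr_lem0} into the telescoping form \eqref{eq:asdr_lem} for the augmented function $\widetilde V_\eta^k$, and (ii) verifying that the scalar $\rho$ defined in the two regimes is strictly positive under the stepsize choices \eqref{eq:choice_of_params}.

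For part (i), I would start from \eqref{eq:asdr_lem0} and observe that the weighted running-sum term on its right-hand side, $\frac{\tau}{n\eta}\sum_{l=k-\tau}^{k-1}\bigl(l-(k-1)+\tau\bigr)\norms{\bar x^{l+1}-\bar x^l}^2$, is exactly the correction $\widetilde V_\eta^{k}(\bar x^k)-V_\eta^{k}(\bar x^k)$ in \eqref{eq:asdr_merit_func} once one uses the identity $l-(k-1)+\tau=l-(k-\tau)+1$ for the weights; likewise, shifting the index $k\mapsto k+1$ identifies the running-sum term on the left-hand side of \eqref{eq:asdr_lem0} with $\widetilde V_\eta^{k+1}(\bar x^{k+1})-V_\eta^{k+1}(\bar x^{k+1})$. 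Substituting these two identities into \eqref{eq:asdr_lem0}, the plain $V_\eta^{k}$ and $V_\eta^{k+1}$ terms combine with the correction terms to give $\widetilde V_\eta^{k+1}(\bar x^{k+1})\le \widetilde V_\eta^{k}(\bar x^k)-\tfrac{\rho}{2}\norms{x^{k+1}_{i_k}-x^k_{i_k}}^2$, which is precisely \eqref{eq:asdr_lem} after transposing.

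For part (ii), I treat the two cases of \eqref{eq:choice_of_params} separately. When $2\tau^2\le n$, $\rho>0$ is equivalent to positivity of the numerator $q(L\eta):=2(1-\alpha)-(2+\alpha)(L\eta)^2-\alpha(L\eta)$; this is a concave quadratic in $L\eta$ with $q(0)=2(1-\alpha)$, whose discriminant works out to $16-8\alpha-7\alpha^2$ and whose unique positive root equals $\bar\eta L$ with $\bar\eta$ as in \eqref{eq:choice_of_params}, so $q>0$ exactly on $0<\eta<\bar\eta$; requiring this interval to be nonempty forces the discriminant to exceed $\alpha^2$, i.e. $(2+\alpha)(1-\alpha)>0$, which is $0<\alpha<1=\bar\alpha$. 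When $2\tau^2>n$, the coefficient $\tfrac{\tau^2}{n\eta}-\tfrac1{2\eta}=\tfrac{2\tau^2-n}{2n\eta}>0$ in front of $\norms{\bar x^{k+1}-\bar x^k}^2$ is unfavorable and, as already done inside the proof of Lemma~\ref{lem:asdr_key_est0}, it is absorbed into the coefficient of $\norms{x^{k+1}_{i_k}-x^k_{i_k}}^2$ via \eqref{eq:asdr_eq5_2b}; writing $c=\tfrac{2\tau^2-n}{n^2}$, the condition $\rho>0$ again reduces to positivity of a concave quadratic in $L\eta$, whose positive root is the second $\bar\eta$ in \eqref{eq:choice_of_params} and whose discriminant is positive precisely when $0<\alpha<\tfrac{2}{2+c}=\bar\alpha$.

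The main obstacle is not conceptual — all the analytic work is already in Lemma~\ref{lem:asdr_key_est0} — but rather a careful bookkeeping one: keeping the index shifts of the weighted running sums exactly aligned so the telescoping in part (i) is genuinely clean, and matching the explicit thresholds $\bar\alpha,\bar\eta$ in \eqref{eq:choice_of_params} to the root/discriminant conditions of the two scalar quadratics in part (ii), in particular tracking the factor introduced by \eqref{eq:asdr_eq5_2b} in the $2\tau^2>n$ case. Finally, positivity of $\rho$ together with $\widetilde V_\eta^k\ge V_\eta^k\ge F^\star$ (from \eqref{eq:V_lowerbound}) is what makes \eqref{eq:asdr_lem} usable for the subsequent complexity bound in Theorem~\ref{thm:asdr_convergence}.
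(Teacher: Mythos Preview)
Your proposal is correct and follows essentially the same two-step structure as the paper's own proof: (i) recognize that the weighted tail sums in \eqref{eq:asdr_lem0} are exactly the correction terms $\widetilde V_\eta^{k}-V_\eta^{k}$ and $\widetilde V_\eta^{k+1}-V_\eta^{k+1}$, so \eqref{eq:asdr_lem0} telescopes into \eqref{eq:asdr_lem}; (ii) in each regime reduce $\rho>0$ to positivity of a concave quadratic in $L\eta$ and read off $\bar\eta$ as its positive root and $\bar\alpha$ from the requirement that this root be strictly positive.

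One small imprecision in your Case~2 discussion: the bound $\alpha<\tfrac{2}{2+c}=\bar\alpha$ is not ``the discriminant is positive'' but rather the condition $q(0)=2-(2+c)\alpha>0$ ensuring the quadratic is positive at the origin (equivalently, that the interval $(0,\bar\eta)$ is nonempty). Once $q(0)>0$, the product of the two roots is negative, so one root is automatically positive and the discriminant is positive for free; the paper then checks separately that $\bar\eta>0$ (i.e.\ $\sqrt{\text{disc}}>\alpha$) imposes a weaker constraint on $\alpha$, so $\tfrac{2}{2+c}$ is indeed the binding bound. This does not affect your argument, only the attribution of which inequality yields $\bar\alpha$.
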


\begin{proof}
If we define $\widetilde{V}_\eta^k$ as in \eqref{eq:asdr_merit_func} of Lemma~\ref{lem:asdr_key_est}, i.e.:
\begin{equation*}
\widetilde{V}_\eta^k(\bar{x}^k) := V_\eta^{k}(\bar{x}^k) + \frac{\tau}{\eta n^2} \sum_{l = k-\tau}^{k-1} [l - (k-\tau) + 1] \norms{\bar{x}^{l+1}  - \bar{x}^l}^2,
\end{equation*}
then from \eqref{eq:asdr_lem0}, we have
\begin{equation*} 
\widetilde{V}_\eta^{k+1}(\bar{x}^{k+1}) \leq \widetilde{V}^k(\bar{x}^k) - \frac{\rho}{2}\norms{x^{k+1}_{i_k} - x^k_{i_k}}^2,  
\end{equation*}
which is equivalent to \eqref{eq:asdr_lem}.

Now, we find conditions of $\alpha$ and $\eta$ such that $\rho$ and $\theta$ are positive.
We consider two cases as follows.

\textbf{Case 1:} If $2\tau^2 \leq n$, then 
\begin{equation*}
\rho :=  \tfrac{2(1-\alpha) - (2+\alpha)L^2\eta^2 - L\alpha\eta}{\alpha\eta n}.  
\end{equation*}
Let us choose $0 < \alpha < 1$.
To guarantee $\rho > 0$, we require $2(1-\alpha) > (2+\alpha)L^2\eta^2 + L\alpha\eta$.
In this case, we need to choose $0 < \eta < \frac{\sqrt{L^2\alpha^2 + 8(1-\alpha)(2+\alpha)L^2} - L\alpha}{2L^2(2+\alpha)} = \frac{\sqrt{16 - 8\alpha - 7\alpha^2} - \alpha}{2L(2+\alpha)}$.
These are the choices in \eqref{eq:choice_of_params} when $2\tau^2 \leq n$.

\textbf{Case 2:} If $2\tau^2 > n$, then 
\begin{equation*}
\rho :=  \tfrac{n^2[2(1-\alpha) - (2+\alpha)L^2\eta^2 - L\alpha\eta] - \alpha(1+\eta^2L^2)(2\tau^2 - n)}{\alpha\eta n^3}. 
\end{equation*}
Let $c := \frac{2\tau^2 - n}{n^2} > 0$.
In order to guarantee that $\rho > 0$, we need to choose $0 < \alpha < 1$ and $\eta > 0$ such that 
\begin{equation*}
\begin{array}{ll}
& 2 - 2\alpha - \frac{\alpha(2\tau^2 - n)}{n^2} > \big[ 2 + \alpha + \frac{\alpha(2\tau^2 - n)}{n^2} \big] L^2\eta^2 + L \alpha \eta,  \vspace{1ex}\\
\text{and} & 0 < \alpha < \frac{2n^2}{2n^2 + (2\tau^2 - n)} = \frac{2}{2 + c}.
\end{array}
\end{equation*}
Using the definition of $c$, the first condition becomes $2 - 2\alpha - c\alpha > L\alpha \eta + (2 + \alpha + c\alpha)L^2\eta^2$.
First, we need to impose $2 - 2\alpha - c\alpha > 0$, leading to $0 < \alpha < \frac{2}{2 + c}$.
Next, we solve the above inequality w.r.t. $\eta > 0$ to get 
\begin{equation*}
0 < \eta < \bar{\eta} := \tfrac{\sqrt{16 - 8\alpha - (7 + 4c + 4c^2)\alpha^2} - \alpha}{2L[2 + (1+c)\alpha]}.
\end{equation*}
These are the choices in \eqref{eq:choice_of_params} when $2\tau^2 > n$.
To guarantee $\bar{\eta} > 0$, we need to choose $\alpha < \frac{4}{1 + \sqrt{1 + 4(2 + c + c^2)}}$.
Combining four conditions of $\alpha$, we get $0 < \alpha < \frac{2}{2 + c}$.
Finally, we conclude that under the choice of $\alpha$ and $\eta$ as in \eqref{eq:choice_of_params}, we have $\rho > 0$ and $\theta > 0$.
\end{proof}

Next lemma bounds the term $\sum_{i=1}^n \Exp{\norms{\bar{x}^k - x^k_i}^2}$ in order to bound $\mathbb{E}\big[ \norms{\Gc_{\eta}(\bar{x}^k) }^2 \big]$.

\begin{lemma}\label{le:bound_of_grad}
Suppose that Assumptions~\ref{ass:A1}, \ref{ass:A2}, and \ref{ass:A4} hold.
Let $\set{ (x^{k}_i, y^{k}_i,  \hat{x}^{k}_i, \bar{x}^k)}$ be generated by Algorithm~\ref{alg:A2}.
Then, we have
\begin{equation}\label{eq:asdr_thm_eq11}
\sum_{i=1}^n \Exp{\norms{\bar{x}^k - x^k_i}^2}  \leq  D \sum_{t=k-\tau}^{k+T} \Exp{\norms{x^{t+1}_{i_t} - x^t_{i_t}}^2}, 
\end{equation}
where $D := \frac{8\alpha^2(1 + L^2\eta^2)(\tau^2 + 2Tn\hat{\mbf{p}}) \ + \ 8n^2(1 + L^2\eta^2 + T\alpha^2\hat{\mbf{p}})}{\hat{\mbf{p}}\alpha^2n^2}$. 
\end{lemma}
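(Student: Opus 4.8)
The plan is to control the block staleness $\bar{x}^k - x_i^k$ by a window of ``active-block increments'' $x^{t+1}_{i_t} - x^t_{i_t}$ indexed by $t\in\set{k-\tau,\dots,k+T}$, using the $y$-update of Algorithm~\ref{alg:A2}, the nonexpansiveness of $\prox_{\eta g}$, and the activation guarantee of Assumption~\ref{ass:A4}. First I would reuse two estimates already produced inside the proof of Lemma~\ref{lem:asdr_key_est0}: squaring \eqref{eq:asdr_eq5_10} and invoking the $L$-smoothness of $f_{i_m}$ gives, for any iteration $m$,
\[ \norms{\bar{x}^{m-d^m_{i_m}} - x^m_{i_m}}^2 \ \le\ \tfrac{2(1+\eta^2L^2)}{\alpha^2}\,\norms{x^{m+1}_{i_m} - x^m_{i_m}}^2 , \]
and the one-step bound \eqref{eq:asdr_eq5_2b}, valid for all $l\ge 0$ since only block $i_l$ moves,
\[ \norms{\bar{x}^{l+1} - \bar{x}^l}^2 \ \le\ \tfrac{2(1+\eta^2L^2)}{n^2}\,\norms{x^{l+1}_{i_l} - x^l_{i_l}}^2 . \]

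Next, fix $i\in[n]$ and $k\ge0$ and let $N_i^k := \min\set{t\ge k : i_t = i}$ be the first iteration at or after $k$ at which block $i$ is updated. Since block $i$ is frozen on $\set{k,\dots,N_i^k-1}$, we have $x_i^k = x_i^{N_i^k}$, so with $m:=N_i^k$,
\[ \bar{x}^k - x_i^k \ =\ \big(\bar{x}^k - \bar{x}^{m-d^m_i}\big) + \big(\bar{x}^{m-d^m_i} - x_i^m\big). \]
I would bound the second bracket by the first displayed estimate, and the first bracket by writing $\bar{x}^k - \bar{x}^{m-d^m_i}$ as a telescoping sum of consecutive increments $\bar{x}^{l+1}-\bar{x}^l$, applying Young's inequality, and then the second displayed estimate. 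Because $d^m_i\le\tau$ (Assumption~\ref{ass:A4}), on the event $\set{N_i^k\le k+T}$ every index $l$ entering this telescoping sum lies in $\set{k-\tau,\dots,k+T-1}$ and there are at most $T+\tau$ of them; together with $i_m=i$ on $\set{N_i^k=m}$ this yields a pointwise bound of the form
\[ \norms{\bar{x}^k - x_i^k}^2\,\mathbf{1}[N_i^k\le k+T] \ \le\ \tfrac{4(T+\tau)(1+\eta^2L^2)}{n^2}\!\!\sum_{l=k-\tau}^{k+T-1}\!\!\norms{x^{l+1}_{i_l} - x^l_{i_l}}^2 \ +\ \tfrac{4(1+\eta^2L^2)}{\alpha^2}\!\!\sum_{m=k}^{k+T}\!\!\norms{x^{m+1}_{i_m} - x^m_{i_m}}^2\,\mathbf{1}[i_m=i]. \]

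The decisive step is to discard the indicator $\mathbf{1}[N_i^k\le k+T]$. Observe that $\norms{\bar{x}^k - x_i^k}^2$ is $\Fc_{k-1}$-measurable (it is a function of $\xi^{0:k-1}$ only), whereas Assumption~\ref{ass:A4}, through \eqref{eq:assA4_01} and the windowed argument of \cite{cannelli2019asynchronous}, guarantees $\mathbb{P}\big(N_i^k\le k+T\mid\Fc_{k-1}\big)\ge\hat{\mbf{p}}$. Hence
\[ \Exp{\norms{\bar{x}^k - x_i^k}^2 \mid \Fc_{k-1}} = \norms{\bar{x}^k - x_i^k}^2 \le \tfrac{1}{\hat{\mbf{p}}}\,\norms{\bar{x}^k - x_i^k}^2\,\mathbb{P}\big(N_i^k\le k+T\mid\Fc_{k-1}\big) = \tfrac{1}{\hat{\mbf{p}}}\,\Exp{\norms{\bar{x}^k - x_i^k}^2\,\mathbf{1}[N_i^k\le k+T]\mid\Fc_{k-1}}. \]
Taking total expectations, summing over $i\in[n]$, inserting the pointwise window bound, and using $\sum_{i=1}^n\mathbf{1}[i_m=i]=1$ to collapse the active-block contribution into $\sum_{m=k}^{k+T}\norms{x^{m+1}_{i_m}-x^m_{i_m}}^2$, one obtains $\sum_{i=1}^n\Exp{\norms{\bar{x}^k - x_i^k}^2}\le D\sum_{t=k-\tau}^{k+T}\Exp{\norms{x^{t+1}_{i_t} - x^t_{i_t}}^2}$; the precise constant $D$ in \eqref{eq:rho_theta} comes out of a careful accounting of the $1/\hat{\mbf{p}}$ factors, the window length $T+\tau+1$, and the slack accumulated in the repeated Young's inequalities (treating the cases $2\tau^2\le n$ and $2\tau^2>n$ as in Lemma~\ref{lem:asdr_key_est0}).

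The main obstacle is this probabilistic bookkeeping in the general delay model of \cite{cannelli2019asynchronous}, where $\hat{i}_k$ and $\hat{d}^k$ are not assumed independent: one must justify $\mathbb{P}(N_i^k\le k+T\mid\Fc_{k-1})\ge\hat{\mbf{p}}$ purely from the windowed conditional lower bound of Assumption~\ref{ass:A4}, and one must verify that on $\set{N_i^k\le k+T}$ all indices appearing in the telescoping representation of $\bar{x}^k-\bar{x}^{m-d^m_i}$ truly fall inside $\set{k-\tau,\dots,k+T}$ --- this is exactly where the bounded-delay bound $d^k_i\le\tau$ and the property $d^k_{i_k}=0$ in Assumption~\ref{ass:A4} are used. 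The remaining manipulations (squaring the update relations, $L$-smoothness, nonexpansiveness of $\prox_{\eta g}$) are routine.
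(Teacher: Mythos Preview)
Your approach is correct and genuinely different from the paper's. The paper does not use the first-activation time $N_i^k$ or the $\Fc_{k-1}$-measurability of $\norms{\bar x^k-x_i^k}^2$. Instead it starts from the ``wrong'' sum $\sum_{t=k}^{k+T}\Exp{\norms{\bar x^t-x^t_{\hat i_t}}^2\mid\Fc_{t-1}}$, lower-bounds it pathwise via $\hat{\mbf p}\sum_i\norms{\bar x^{k+t_k(i)}-x_i^{k+t_k(i)}}^2$ (picking the term $t=t_k(i)$ where the conditional probability first exceeds $\hat{\mbf p}$), and then uses the reverse triangle inequality and Young's inequality to extract $\sum_i\norms{\bar x^k-x_i^k}^2$ plus error terms $\norms{\bar x^{k+t_k(i)}-\bar x^k}^2$ and $\norms{x_i^{k+t_k(i)}-x_i^k}^2$; finally the original sum on the left is bounded by splitting $\bar x^t-x^t_{\hat i_t}$ through $\bar x^{t-d^t_{\hat i_t}}$. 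Your stopping-time/measurability route is more direct and avoids the reverse-triangle detour, which is a genuine simplification.

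Three remarks. First, the step $\mathbb P(N_i^k\le k+T\mid\Fc_{k-1})\ge\hat{\mbf p}$ is not a one-line citation: Assumption~\ref{ass:A4} only gives a pathwise guarantee at a \emph{path-dependent} time $t_k(i)\in\set{0,\dots,T}$. You need the short stopping-time computation
\[
\mathbb P\big(N_i^k\le k+T\mid\Fc_{k-1}\big)\ \ge\ \sum_{t=0}^T\Exp{\mathbf 1[t_k(i)=t]\,\mbf p(i\mid\xi^{0:k+t-1})\mid\Fc_{k-1}}\ \ge\ \hat{\mbf p},
\]
using that $\set{t_k(i)=t}\in\Fc_{k+t-1}$ and $t_k(i)$ is always defined. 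Second, on $\set{N_i^k=m}$ one has $i_m=i$ and hence $d^m_i=d^m_{i_m}=0$ by Assumption~\ref{ass:A4}, so the telescoping indices actually live in $\set{k,\dots,k+T}$; your use of $d^m_i\le\tau$ and the window $\set{k-\tau,\dots,k+T}$ is harmless but unnecessary here. Third, the case split $2\tau^2\le n$ versus $2\tau^2>n$ belongs to the constant $\rho$ in Lemma~\ref{lem:asdr_key_est0}, not to $D$; your route produces its own explicit constant (of the same polynomial order in $T,\tau,n,\alpha,\eta,\hat{\mbf p}$) that differs from the paper's $D$, so you should not expect to recover \emph{exactly} the expression in \eqref{eq:rho_theta}. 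For the downstream use in Theorem~\ref{thm:asdr_convergence} this is immaterial.
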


\begin{proof}
Let $t_k(i) := \min\set{ t \in \sets{0,\cdots, T} : \mathbf{p}(i \mid \xi^{0:k+t-1}) \geq \hat{\mbf{p}}}$.
In fact, $t_k(i)$ is the first time in the iteration window $[k, k+T]$, user $i$ is active, i.e. gets updated. 
For any $\gamma \in (0, 1)$, we have
\begin{equation*} 
\arraycolsep=0.0em
\begin{array}{lcl}
\sum_{t=k}^{k+T} \Exp{\norms{\bar{x}^t  - x^t_{\hat{i}_t}}^2 \mid \Fc_{t-1}}(\omega) &= & \sum_{t=k}^{k+T}\sum_{i=1}^n \mathbf{p}(i \mid \xi^{0:t-1})\norms{ \bar{x}^t - x^t_i}^2 \vspace{1ex}\\
&\overset{\eqref{eq:assA4_01}}{\geq} & \sum_{i=1}^n \hat{\mbf{p}} \norms{\bar{x}^{k+ t_k(i)} - x^{k + t_k(i)}_i}^2 \vspace{1ex}\\
&\overset{(*)}{\geq} & \hat{\mbf{p}} \sum_{i=1}^n \left[\norms{\bar{x}^k - x^k_i} - \norms{\bar{x}^{k+t_k(i)} - x^{k+t_k(i)}_i - (\bar{x}^k - x^k_i)} \right]^2 \vspace{1ex}\\
&\geq & - 2\hat{\mbf{p}} \sum_{i=1}^n \norms{\bar{x}^k - x^k_i}\norms{\bar{x}^{k+t_k(i)} - x^{k+t_k(i)}_i - (\bar{x}^k - x^k_i)} \vspace{1ex}\\
&& + {~} \hat{\mbf{p}} \sum_{i=1}^n \norms{\bar{x}^k - x^k_i}^2 \vspace{1ex}\\
&\overset{(**)}{\geq} & \hat{\mbf{p}} \sum_{i=1}^n \left[\norms{\bar{x}^k - x^k_i}^2 - \frac{1}{2}\norms{\bar{x}^k - x^k_i}^2\right] \vspace{1ex}\\
&& - {~} 4\hat{\mbf{p}} \sum_{i=1}^n \norms{\bar{x}^{k+t_k(i)} - \bar{x}^k}^2 - 4\hat{\mbf{p}} \sum_{i=1}^n\norms{x^{k+t_k(i)}_i -  x^k_i}^2,
\end{array}
\end{equation*}
where (*) comes from the reverse triangle inequality $\norms{a - b}^2 \ge (\norms{a} - \norms{b})^2$ and (**) is due to $4\norms{v}^2 + 4\norms{s}^2 + \frac{1}{2}\norms{u}^2 \geq 2\norms{u}\norms{v+s}$.
Note that the conditional expectation above is only taken w.r.t. $\hat{i}_k$, which is $\sigma(d^k, \Fc_{k-1})$-measurable.
For simplicity of notation, we drop $(\omega)$ in the sequel.

Rearranging the last inequality, we obtain
\begin{equation}\label{eq:asdr_thm_eq1}
\arraycolsep=0.2em
\begin{array}{lcl}
\frac{\hat{\mbf{p}}}{2} \sum_{i=1}^n \norms{\bar{x}^k - x^k_i}^2 
&\leq & \sum_{t=k}^{k+T} \Exp{\norms{\bar{x}^t - x^t_{\hat{i}_t}}^2 \mid \Fc_{t-1}} + 4\hat{\mbf{p}} \sum_{i=1}^n \norms{\bar{x}^{k+t_k(i)} - \bar{x}^k}^2 \vspace{1ex}\\
&& + {~} 4\hat{\mbf{p}} \sum_{i=1}^n\norms{x^{k+t_k(i)}_i - x^k_i}^2.
\end{array}
\hspace{-3ex}
\end{equation}
Next, we bound the term $\sum_{i=1}^n\norms{\bar{x}^{k+t_k(i)} - \bar{x}^k}^2$ as follows:
\begin{equation}\label{eq:asdr_thm_eq9}
\arraycolsep=0.2em
\begin{array}{lcl}
\sum_{i=1}^n \norms{\bar{x}^{k+t_k(i)} - \bar{x}^k}^2 &= &  \sum_{i=1}^n \norms{\sum_{t=k}^{k+t_k(i)-1} (\bar{x}^{t+1} - \bar{x}^t)}^2 \vspace{1ex}\\
&\leq &  \sum_{i=1}^n t_k(i)\sum_{t=k}^{k+t_k(i)-1} \norms{\bar{x}^{t+1} - \bar{x}^t}^2 \qquad \text{(Young's inequality)} \vspace{1ex}\\
&\leq & T\sum_{i=1}^n \sum_{t=k}^{k+t_k(i)-1} \norms{\bar{x}^{t+1} - \bar{x}^t}^2 \qquad \text{(since $t_k(i) \leq T$)}\vspace{1ex}\\
&= & nT \sum_{t=k}^{k+T} \norms{\bar{x}^{t+1} - \bar{x}^t}^2 \vspace{1ex}\\
&\overset{\tiny\eqref{eq:asdr_eq5_2b}}{\leq} & \frac{2T(1+\eta^2L^2)}{n}\sum_{t=k}^{k+T}\norms{ x^{t+1}_{i_t} - x^t_{i_t} }^2.  
\end{array}
\hspace{-4ex}
\end{equation}
We can also bound $\sum_{i=1}^n \norms{x^{k+t_k(i)}_i - x^k_i}^2$ as follows:
\begin{equation}\label{eq:asdr_thm_eq10}
\arraycolsep=0.2em
\begin{array}{lcl}
\sum_{i=1}^n\norms{x^{k + t_k(i)}_i - x^k_i}^2 & = & \sum_{i=1}^n  \norms{\sum_{t=k}^{k + t_k(i)-1} (x^{t+1}_{i} - x^t_{i})}^2  \vspace{1ex}\\
&\leq & \sum_{i=1}^n t_k(i)  \sum_{t=k}^{k + t_k(i)-1} \norms{x^{t+1}_{i} - x^t_{i}}^2 \qquad \text{(Young's inequality)} \vspace{1ex}\\
&\leq & T \sum_{t=k}^{k+T-1} \sum_{i=1}^n \norms{x^{t+1}_{i} - x^t_{i}}^2 \qquad \text{(since $t_k(i) \leq T$)} \vspace{1ex}\\
&= & T \sum_{t=k}^{k+T} \norms{x^{t+1}_{i_t} - x^t_{i_t}}^2 \quad\text{(since only user $i_t$ is updated at iteration $t$)}.
\end{array}
\hspace{-6ex}
\end{equation}
Let us bound the first term on the right-hand side of \eqref{eq:asdr_thm_eq1} as follows:
\begin{equation}\label{eq:asdr_thm_eq2}
\arraycolsep=0.2em
\begin{array}{lcl}
\sum_{t=k}^{k+T}\Exp{\norms{\bar{x}^t - x^t_{\hat{i}_t}}^2 \mid \Fc_{t-1}} 
&\leq &  2\sum_{t=k}^{k+T}\Exp{\norms{ \bar{x}^{t-d^t_{\hat{i}_t}}   - x^t_{\hat{i}_t}}^2 \mid \Fc_{t-1}} \vspace{1ex}\\
&& + {~} 2\sum_{t=k}^{k+T}\Exp{ \norms{\bar{x}^t - \bar{x}^{t-d^t_{\hat{i}_t}}  }^2 \mid \Fc_{t-1}}.
\end{array}
\hspace{-3ex}
\end{equation}
However, similar to the proof of \eqref{eq:asdr_eq5_1} and \eqref{eq:asdr_eq5_2b}, we can show that
\begin{equation}\label{eq:asdr_thm_eq3}
\arraycolsep=0.2em
\begin{array}{lcl}
\sum_{t=k}^{k+T} \norms{\bar{x}^t -   \bar{x}^{t-d^t_{\hat{i}_t}} }^2 & \overset{\tiny\eqref{eq:asdr_eq5_1}}{\leq} &  \tau \sum_{t=k}^{k+T} \sum_{l = t - \tau}^{t-1} \norms{\bar{x}^{l+1} - \bar{x}^{l} }^2 \vspace{1ex}\\
&\leq & \tau^2 \sum_{t=k-\tau}^{k+T}  \norms{\bar{x}^{t+1} - \bar{x}^{t} }^2 \vspace{1ex}\\ 
&\overset{\eqref{eq:asdr_eq5_2b}}{\leq} & \sum_{t=k-\tau}^{k+T}  \frac{2\tau^2(1+\eta^2L^2)}{n^2}  \norms{ x^{t+1}_{i_t} - x^t_{i_t} }^2. 
\end{array}
\end{equation}
On the other hand, by using \eqref{eq:asdr_eq5_10}, we have
\begin{equation}\label{eq:asdr_thm_eq6}
\arraycolsep=0.2em
\begin{array}{lcl}
\norms{\bar{x}^{t-d^t_{\hat{i}_t}} - x^t_{\hat{i}_t}}^2 &= & \norms{y^{t+1}_{\hat{i}_t} - y^t_{\hat{i}_t}}^2 \qquad \text{(by the update of $y_{\hat{i}_k}^k$ in Algorithm~\ref{alg:A2})}  \vspace{1ex}\\
&\overset{\eqref{eq:asdr_eq5_10}}{=} &  \big\Vert \frac{1}{\alpha}(x^{t+1}_{\hat{i}_t} - x^{t}_{\hat{i}_t}) + \frac{\eta}{\alpha }(\nabla f_{\hat{i}_t}(x^{t+1}_{\hat{i}_t}) - \nabla f_{\hat{i}_t}(x^{t}_{\hat{i}_t})) \big\Vert^2 \vspace{1ex}\\
&\leq &  \frac{2}{\alpha^2} \norms{x^{t+1}_{\hat{i}_t} - x^{t}_{\hat{i}_t}}^2 + \frac{2\eta^2}{\alpha^2}\norms{\nabla f_{\hat{i}_t}(x^{t+1}_{\hat{i}_t}) - \nabla f_{\hat{i}_t}(x^{t}_{\hat{i}_t})}^2 \vspace{1ex}\\
&\leq & \frac{2(1  + \eta^2 L^2)}{\alpha^2}\norms{x^{t+1}_{\hat{i}_t} - x^{t}_{\hat{i}_t}}^2.
\end{array}
\end{equation}
Therefore, plugging \eqref{eq:asdr_thm_eq3} and \eqref{eq:asdr_thm_eq6} into \eqref{eq:asdr_thm_eq2}, we have
\begin{equation}\label{eq:asdr_thm_eq8}
\hspace{-0.0ex}
\arraycolsep=0.2em
\begin{array}{lcl}
\sum_{t=k}^{k+T}\Exp{\norms{\bar{x}^t - x^t_{\hat{i}_t}}^2 \mid \Fc_{t-1}} 
&\leq & \frac{4\tau^2(1+\eta^2L^2)}{n^2} \sum_{t=k-\tau}^{k+T}\Exp{\norms{x^{t+1}_{\hat{i}_t} - x^t_{\hat{i}_t}}^2 \mid \Fc_{t-1}}  \vspace{1ex}\\
&& + {~} \frac{4(1 + \eta^2 L^2)}{\alpha^2} \sum_{t=k-\tau}^{k+T}\Exp{\norms{x^{t+1}_{\hat{i}_t} - x^{t}_{\hat{i}_t}}^2 \mid \Fc_{t-1}} \vspace{1ex}\\
&= & \frac{4(1+\eta^2L^2)[ \tau^2\alpha^2 + n^2 ] }{n^2\alpha^2} \sum_{t=k-\tau}^{k+T}\Exp{\norms{x^{t+1}_{\hat{i}_t} - x^{t}_{\hat{i}_t}}^2 \mid \Fc_{t-1}}.
\end{array}
\hspace{-3.5ex}
\end{equation}
Substituting \eqref{eq:asdr_thm_eq9}, \eqref{eq:asdr_thm_eq10}, and \eqref{eq:asdr_thm_eq8}  into \eqref{eq:asdr_thm_eq1}, we obtain
\begin{equation*} 
\arraycolsep=0.2em
\begin{array}{lcl}
\frac{\hat{\mbf{p}}}{2}\sum_{i=1}^n \norms{\bar{x}^k - x^k_i}^2  &\leq & \frac{4(1+\eta^2L^2)[ \tau^2\alpha^2 + n^2 ] }{n^2\alpha^2} \sum_{t=k-\tau}^{k+T}\Exp{\norms{x^{t+1}_{\hat{i}_t} - x^{t}_{\hat{i}_t}}^2 \mid \Fc_{t-1}} \vspace{1ex}\\
&& + \frac{8\hat{\mbf{p}}T(1+\eta^2L^2)}{n} \sum_{t=k}^{k+T}\norms{ x^{t+1}_{i_t} - x^t_{i_t} }^2 + 4\hat{\mbf{p}} T \sum_{t=k}^{k+T} \norms{x^{t+1}_{i_t} - x^t_{i_t}}^2 \vspace{1ex}\\
&\leq &   \frac{4(1+\eta^2L^2)[ \tau^2\alpha^2 + n^2 ]}{n^2\alpha^2}  \sum_{t=k-\tau}^{k+T}\Exp{\norms{x^{t+1}_{\hat{i}_t} - x^{t}_{\hat{i}_t}}^2 \mid \Fc_{t-1}}  \vspace{1ex}\\
&& + {~} \frac{4\hat{\mbf{p}}T[ 2(1+\eta^2L^2) + n]}{n} \sum_{t=k-\tau}^{k+T} \norms{x^{t+1}_{i_t} - x^t_{i_t}}^2.
\end{array}
\end{equation*}
Finally, taking full expectation both sides of the last inequality w.r.t. $\sigma(d^k, \Fc_{k-1})$, and multiplying the result by $\frac{2}{\hat{\mbf{p}}}$, we arrive at
\begin{equation*} 
\sum_{i=1}^n \Exp{\norms{\bar{x}^k - x^k_i}^2}  \leq  D \sum_{t=k-\tau}^{k+T} \Exp{\norms{x^{t+1}_{i_t} - x^t_{i_t}}^2},  
\end{equation*}
where $D := \frac{8(1+\eta^2L^2)(\tau^2\alpha^2 + n^2)}{\hat{\mbf{p}}n^2\alpha^2} +   \frac{8T[ 2(1+\eta^2L^2) + n]}{n}$. 
This inequality is exactly \eqref{eq:asdr_thm_eq11}.
\end{proof}

\subsection{The proof of Theorem~\ref{thm:asdr_convergence}: Convergence of Algorithm~\ref{alg:A2}}
By Assumption~\ref{ass:A4}, for each $T$ iterations, the probability of each user $i$ getting updated is at least $\hat{\mbf{p}} > 0$.
Hence, from \eqref{eq:asdr_lem} of Lemma~\ref{lem:asdr_key_est}, we sum up from $t := k-\tau$ to $t := k + T$, and have
\begin{equation*} 
\frac{\rho}{2} \sum_{t=k-\tau}^{k + T} \norms{x^{t+1}_{i_t} - x^t_{i_t}}^2  \leq  \sum_{t=k-\tau}^{k+T} \big[ \widetilde{V}_\eta^{t}(\bar{x}^t) - \widetilde{V}_\eta^{t+1}(\bar{x}^{t+1}) \big],
\end{equation*}
where $\rho > 0$ is given in Lemma~\ref{lem:asdr_key_est}.
Now, take full expectation both sides of this inequality w.r.t. $\Fc_k$, we obtain
\begin{equation}\label{eq:th41_proof1}
\frac{\rho}{2} \sum_{t=k-\tau}^{k + T} \mathbb{E}\big[ \norms{x^{t+1}_{i_t} - x^t_{i_t}}^2 \big]  \leq  \sum_{t=k-\tau}^{k+T} \Big[ \mathbb{E}\big[ \widetilde{V}_\eta^{t}(\bar{x}^t) \big]  - \mathbb{E}\big[ \widetilde{V}_\eta^{t+1}(\bar{x}^{t+1})\big] \Big].
\end{equation}

Next, using \eqref{eq:grad_mapping_bound} from Lemma \ref{le:grad_norm_bound} with $\gamma_2 = 0$, we have
\begin{equation*} 
\Vert \Gc_{\eta}(\bar{x}^k) \Vert^2 \leq \frac{(1 + \eta L)^2}{n\eta^2} \sum_{i=1}^n\norms{x_i^k - \bar{x}^k}^2. 
\end{equation*}
Taking full expectation both sides of this inequality, and then combining the result and \eqref{eq:asdr_thm_eq11}, we obtain 
\begin{equation*}
\arraycolsep=0.2em
\begin{array}{lcl}
\Exp{\norms{\Gc_{\eta}(\bar{x}^k) }^2} & \leq &  \frac{(1 + \eta L)^2D}{n\eta^2} \sum_{t=k-\tau}^{k+T}\Exp{\norms{x^{t+1}_{i_t} - x^t_{i_t}}^2}, 
\end{array}
\end{equation*}
where $D$ is given in Lemma~\ref{le:bound_of_grad}.

Combining the last inequality and \eqref{eq:th41_proof1}, we arrive at
\begin{equation*}
\begin{array}{l}
\Exp{\norms{\Gc_{\eta}(\bar{x}^k) }^2} \leq \frac{2(1 + \eta L)^2D}{n\eta^2\rho} \sum_{t=k-\tau}^{k+T} \left(  \mathbb{E}\big[ \widetilde{V}_\eta^{t}(\bar{x}^t) \big]  - \mathbb{E}\big[ \widetilde{V}_\eta^{t+1}(\bar{x}^{t+1})\big] \right).
\end{array}
\end{equation*}
Averaging this inequality from $k := 0$ to $k := K$, we get
\begin{equation}\label{eq:asdr_thm_eq13}
\arraycolsep=0.2em
\begin{array}{lcl}
\frac{1}{K+1}\sum_{k=0}^K\Exp{\norms{\Gc_{\eta}(\bar{x}^k) }^2} & \leq & \frac{\hat{C}}{K+1}\sum_{k=0}^K \sum_{t=k-\tau}^{k+T}\left[  \mathbb{E}\big[ \widetilde{V}_\eta^{t}(\bar{x}^t) \big]  - \mathbb{E}\big[ \widetilde{V}_\eta^{t+1}(\bar{x}^{t+1})\big] \right]  \vspace{1ex}\\
& \leq & \frac{\hat{C}}{ K+1} \big[ \widetilde{V}_{\eta}^{0}(\bar{x}^0) - \mathbb{E}\big[\widetilde{V}_{\eta}^{K+T+1}(\bar{x}^{K+T+1})\big] \big],
\end{array}
\end{equation}
where $\hat{C} := \frac{2(1+\eta L)^2D}{n \rho \eta^2}$.
Here, we have used the monotonicity of $\sets{\mathbb{E}\big[\widetilde{V}^k_{\eta}(\bar{x}^k)\big]}_{k\geq 0}$ and $\mathbb{E}\big[\widetilde{V}_{\eta}^{0}(\bar{x}^0)] = \widetilde{V}_{\eta}^0(\bar{x}^0)$ in the last equality.

Now, recall from the definition of $\widetilde{V}^k_{\eta}(\cdot)$ and $V^k_{\eta}(\cdot)$ that 
\begin{equation*} 
\widetilde{V}_{\eta}^{0}(\bar{x}^0) = V_{\eta}^0(\bar{x}^0) = F(x^0) \qquad\text{and}\qquad \Exp{\widetilde{V}_{\eta}^k(\bar{x}^k) }  \geq \Exp{V_{\eta}^{k}(\bar{x}^k)} \overset{\eqref{eq:V_lowerbound}}{\geq} F^{\star}.
\end{equation*}
Substituting these relations into \eqref{eq:asdr_thm_eq13}, we eventually get
\begin{equation*}
\frac{1}{K+1}\sum_{k=0}^K\Exp{\norms{\Gc_{\eta}(\bar{x}^k) }^2}  \leq  \frac{\hat{C}}{(K+1)}\big[ F(x^0) - F^{\star}\big],
\end{equation*}
which is exactly \eqref{eq:asdr_thm_key_est}.
Using the definition of $\rho$, $\theta$, and $D$ into $\hat{C}$, we obtain its simplified formula as in Theorem~\ref{thm:asdr_convergence}.
The remaining conclusion of the theorem is a direct consequence of \eqref{eq:asdr_thm_key_est}.
\Eproof

\section{Implementation Details and Additional Numerical Examples}\label{app:add_num_exp}
In this section, we provide more details on the set up of numerical experiments and present additional numerical results to illustrate the performance of our algorithms compared to others.

\subsection{Details on numerical experiments}
\paragraph{\textbf{Parameter selection.}}
We use the learning rate for local solver (SGD) as reported in \cite{Li_MLSYS2020} to approximately evaluate $\prox_{\eta f_i}(y^k_i)$ at each user $i \in [n]$. 
The learning rates are $0.01$ for all synthetic datasets, $0.01$ for MNIST, and $0.003$ for FEMNIST. 
We also perform a grid-search over multiple values to select the parameter and stepsizes  for FedProx, FedPD and FedDR. 
In particular, we choose $\mu \in [0.001,1]$ for FedProx, $\eta \in [1,1000]$ for FedPD, and $\eta \in [1,1000]$, $\alpha \in [0,1.99]$ for FedDR.  
All algorithms perform local SGD updates with 20 epochs to approximately evaluate $\prox_{\eta f_i}(y^k_i)$ before sending the results to server for [proximal] aggregation. 

\paragraph{\textbf{Training models.}}
For all datasets, we use fully-connected neural network as training models. 
For all synthetic datasets, we use a neural network of size $60 \times 32 \times 10$ where we use the format $\text{input size} \times \text{hiddden layer} \times \text{output size}$. 
For MNIST, we use a network of size $784\times 128\times10$. 
For FEMNIST used in the main text,  we reuse the dataset from \cite{Li_MLSYS2020} and a $784 \times 128 \times 26$ model.

\paragraph{\textbf{Composite examples.}}
We test our algorithm under composite setting where we set $g(x) = 0.01 \norm{x}_1$. 
In the first test, we choose $\eta = 500$, $\alpha = 1.95$ and select the local learning rate (\textit{lr}) for SGD to approximately evaluate $\prox_{\eta f_i}(y^k_i)$ from the set $\{0.0025,0.005,0.0075,0.01,0.025\}$ for \texttt{synthetic-(0,0)} and $\{0.001,0.003,0.005,0.008,0.01\}$ for \texttt{FEMNIST}. 
Next, we fix the local learning rate at $0.01$ for \texttt{synthetic-(0,0)} and $0.003$ for \texttt{FEMNIST} then adjust the number of local epochs in the set $\{5,10,15,20,30\}$ to evaluate $\prox_{\eta f_i}(y^k_i)$. 
Finally, we test our algorithm when changing the total number of users participating at each communication round $|\mathcal{S}_k|$. For \texttt{synthetic-(0,0)} dataset, we set $|\mathcal{S}_k| \in \{5,10,15,20,25\}$. For \texttt{FEMNIST} dataset, we set $|\mathcal{S}_k| \in \{10,25,50,75,100\}$.

\paragraph{\textbf{Asynchronous example.}}
To make the sample size larger for each user, we generate the FEMNIST dataset using Leaf \cite{caldas2018leaf}. 
In the new dataset, there are actually 62 classes instead of 26 classes as used in \cite{Li_MLSYS2020}. Therefore, we denote this dataset as \textbf{FEMNIST - 62 classes}. 
In this new dataset, each user has sample size ranging from 97 to 356. 
We implement the communication between server and user using the distributed package in \texttt{Pytorch} \footnote{See \url{https://pytorch.org/tutorials/beginner/dist_overview.html} for more details.} as in \cite{distBelief}. There are 21 threads created, one acts as server and 20 others are users. 
To simulate the case when users have different computing power, we add a certain amount of delay at the end of each user's local update such that the total update time varies between all users. 
For \textbf{FEMNIST - 62 classes} dataset, the model is a fully-connected neural network of the size $784 \times 128 \times 62$.

\subsection{Additional numerical results}
We first present two experiments on iid and non-iid datasets without using user sampling scheme as shown in Figure~\ref{fig:add_exp_synth_iid}.
That is all users participate into the system at each communication round. 

\begin{figure}[ht!]
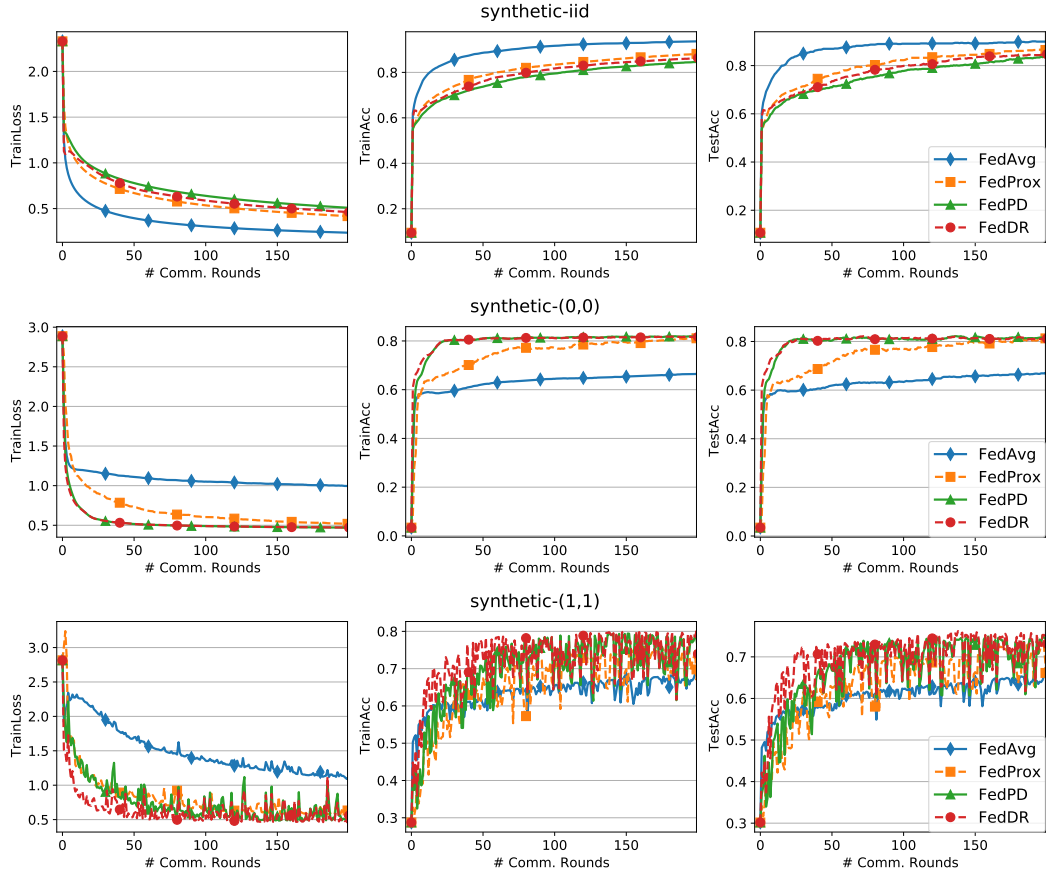

\begin{center}
    \includegraphics[width = 1\textwidth]{figs/synthetic_iid}
    \includegraphics[width = 1\textwidth]{figs/synthetic_0_0}
    \includegraphics[width = 1\textwidth]{figs/synthetic_1_1}
    \vspace{-2ex}
    \caption{The performance of 4 algorithms on iid and non-iid synthetic datasets without user sampling scheme.
    The first row is for one iid dataset, and the last two rows are for non-iid datasets.
    }\label{fig:add_exp_synth_iid}
\end{center}
\end{figure}
\begin{figure}[ht!]
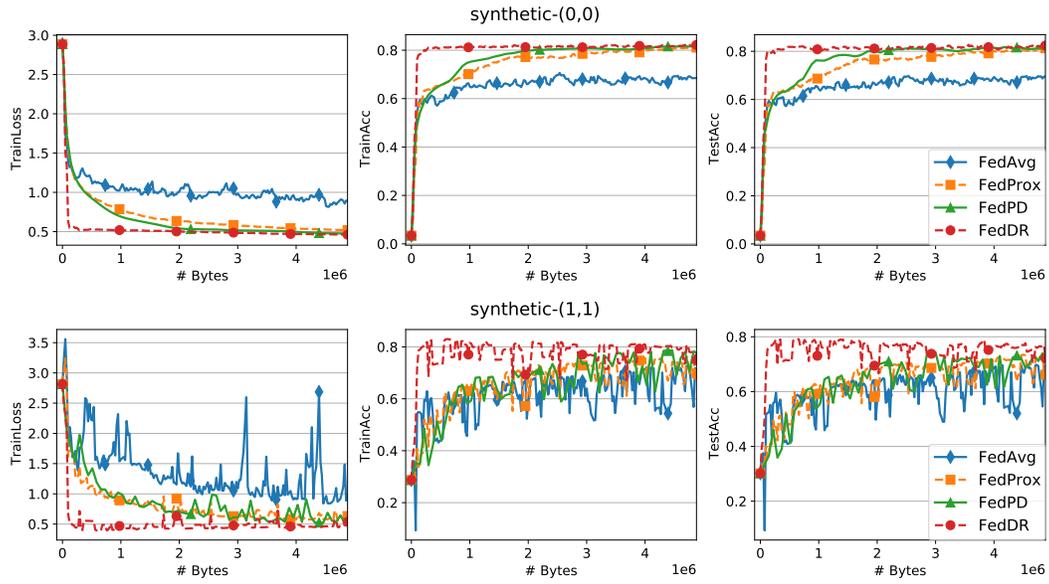

\begin{center}
    \includegraphics[width = 1\textwidth]{figs/synthetic_0_0_bytes}
    \includegraphics[width = 1\textwidth]{figs/synthetic_1_1_bytes}
    \vspace{-1ex}
    \caption{The performance of 4 algorithms without user sampling scheme on non-iid datasets in terms of communication effort.}\label{fig:add_exp_synth_non_iid}
\end{center}
\end{figure}

From Figure~\ref{fig:add_exp_synth_iid}, FedAvg appears to perform best while the other three algorithms are comparable in the iid setting. 
Similar behavior is also observed in \cite{Li_MLSYS2020}. For the non-iid datasets along with Figure~\ref{fig:exp_synth_non_iid}, we observe that the more non-iid the dataset is, the more unstable these algorithms behave. 
In the \texttt{synthetic-(1,1)} dataset, FedDR appears to be the best followed by FedPD. FedProx also performs much better than FedAvg in this test.

Figure~\ref{fig:add_exp_synth_non_iid} depicts the performance of 4 algorithms in terms of communication cost on the \texttt{synthetic-(1,1)} dataset. We still observe that FedDR works well while FedProx and FedPD are comparable but still better than FedAvg.

\begin{figure}[ht!]
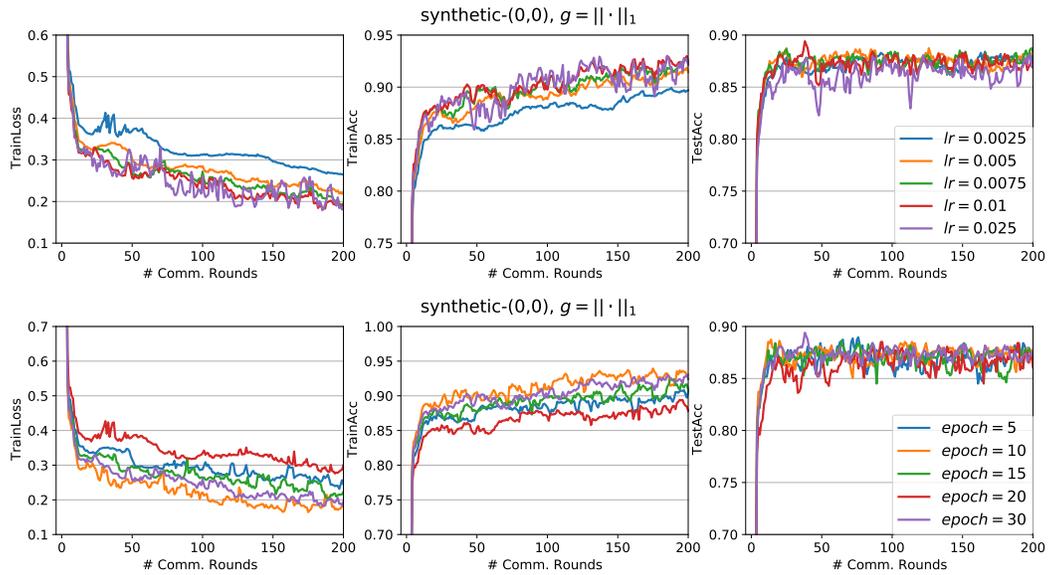

\begin{center}
    \includegraphics[width = 1\textwidth]{figs/feddr_reg_lr_synthetic_0_0}
    \includegraphics[width = 1\textwidth]{figs/feddr_reg_epoch_synthetic_0_0}
    \vspace{-1ex}
    \caption{The performance of \textbf{FedDR} on synthetic dataset in composite setting.}\label{fig:exp_comp_synth}
\end{center}
\end{figure}

More results of experiments on the composite setting are presented in Figure~\ref{fig:exp_comp_synth}. 
We observe that the learning rate (\textit{lr}) of SGD needs to be tuned for each dataset and the local iteration should be selected carefully to  trade-off between local computation cost and inexactness of the evaluation of $\prox_{\eta f_i}(y^k_i)$.

\begin{figure}[ht!]
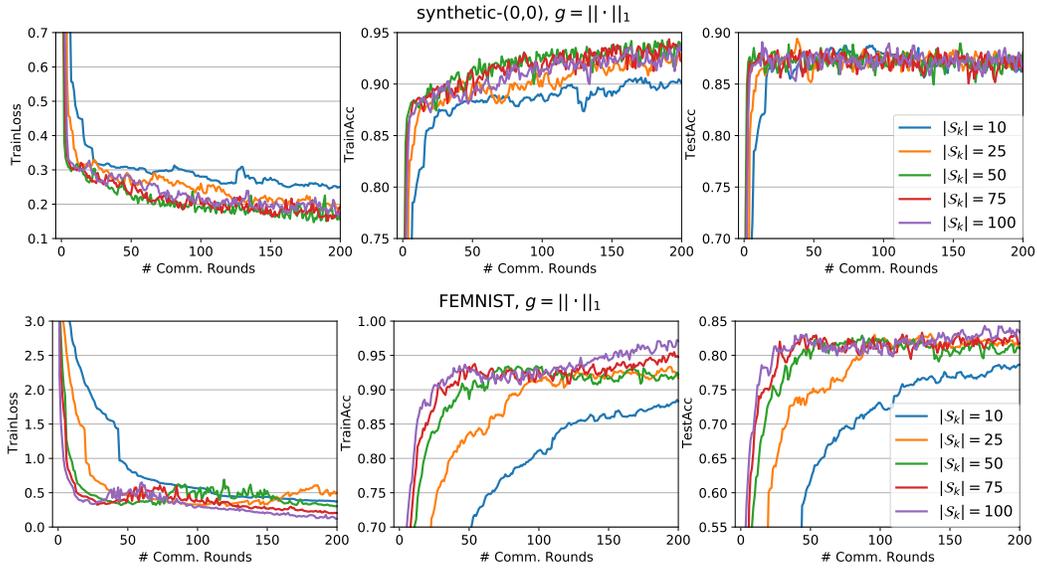

\begin{center}
    \includegraphics[width = 0.99\textwidth]{figs/feddr_reg_client_synthetic_0_0} 
    \includegraphics[width = 0.99\textwidth]{figs/feddr_reg_client_nist} 
    \caption{The performance of \textbf{FedDR} in composite setting in terms of communication rounds.}\label{fig:exp_comp_client1}
\end{center}
\end{figure}

We also vary the number of users sampled at each communication round. 
The results are depicted in Figure~\ref{fig:exp_comp_client1} for two datasets. 
We observe that the performance when we sample smaller number of user per round is not as good as larger ones in terms of communication rounds. 
However, this might not be a fair comparison since fewer clients also require less communication cost. 
Therefore, we plot these results in terms of number of bytes communicated. 
The results are depicted in Figure~\ref{fig:exp_comp_client2}. 
From Figure~\ref{fig:exp_comp_client2}, FedDR performs very similarly under different choices of $\Sc_k$.

\begin{figure}[ht!]
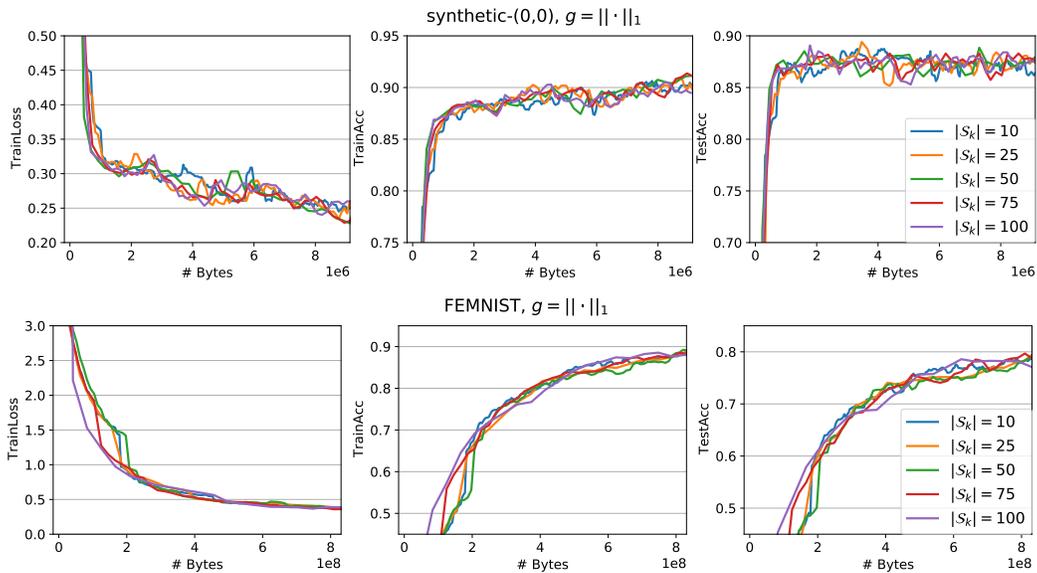

\begin{center}
    \includegraphics[width = 0.99\textwidth]{figs/feddr_reg_client_bytes_synthetic_0_0}
    \includegraphics[width = 0.99\textwidth]{figs/feddr_reg_client_bytes_nist}
    \caption{The performance of \textbf{FedDR} in composite setting in terms of number of bytes.}\label{fig:exp_comp_client2}
\end{center}
\end{figure}

We also compare FedDR and asyncFedDR using the FEMNIST dataset. 
The results are depicted in Figure~\ref{fig:add_exp_femnist}. 
We can see that asyncFedDR is advantageous over FedDR to achieve lower loss value and higher accuracies.

\begin{figure}[ht!]
\begin{center}
\includegraphics[width = 0.99\textwidth]{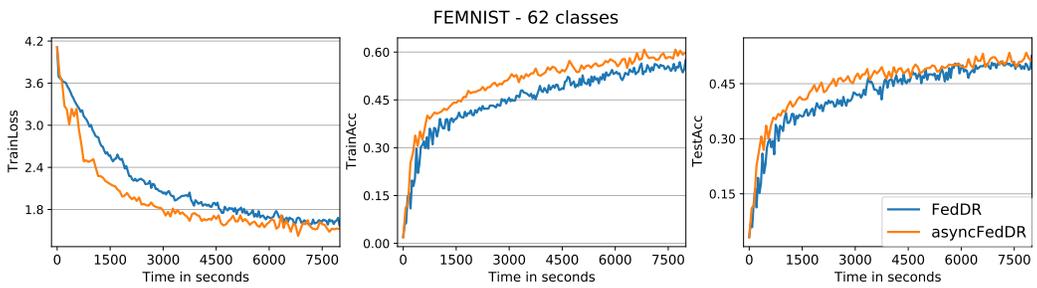}
    \caption{The performance of FedDR and asyncFedDR on \textbf{FEMNIST - 62 classes} dataset.}\label{fig:add_exp_femnist}
\end{center}
\end{figure}

\end{document}